\patchcmd{\algocf@makecaption@ruled}{\hsize}{\textwidth}{}{} 
\patchcmd{\@algocf@start}{-1.5em}{0em}{}{} 
\newtheorem{theorem}{Theorem}[section]
\newtheorem{proposition}[theorem]{Proposition}
\newtheorem{lemma}[theorem]{Lemma}
\theoremstyle{definition}
\newtheorem{definition}[theorem]{Definition}
\theoremstyle{remark}
\newtheorem{exm}[theorem]{Example}
\theoremstyle{definition}
\theoremstyle{remark}
\newcommand{\real}{\mathbb{R}}
\newcommand{\states}{\mathcal{S}}
\newcommand{\actions}{\mathcal{A}}
\newcommand{\opt}{^*}
\newcommand{\tr}{^\top}
\newcommand{\one}{\bm{1}}
\newcommand{\PP}{\mathcal{P}}
\newcommand{\Real}{\mathbb{R}}
\renewcommand{\P}[1]{\mathbb{P}\left[ #1 \right]}
\newcommand{\pn}{\bar{\vec p}_{s,a}}
\newcommand{\pt}{\tilde{\vec p}_{s,a}}
\newcommand{\hwsa}{{\hat{\vec w}}_{s,a}}
\newcommand{\wsa}{{\vec w}_{s,a}}
\newcommand{\vpi}{{\vec v}^{\pi}}
\newcommand{\hvpi}{{\hat{\vec v}}^{\pi}}
\newcommand{\opsa}{{\vec{p}_{s,a}^*}}
\newcommand{\oqsa}{{\vec{q}_{s,a}^*}}
\newcommand{\ipsa}{{I(\vec{p}_{s,a}^*})}
\newcommand{\iqsa}{{I(\vec{q}_{s,a}^*})}
\newcommand{\uk}{{\vec u}_k}
\newcommand{\upi}{{\vec u}_{\pi}}
\newcommand{\uhat}{\hat{\vec u}}
\newcommand{\upk}{{\vec u}_{\pi_k}}
\newcommand{\sigman}{\vec{\Sigma}_{s,a}}
\newcommand{\Pf}{f}
\DeclareMathOperator*{\ExpectOp}{\mathbb{E}}
\newcommand{\Exp}[2]{\ExpectOp_{#1} \left[ #2 \right] }
\newcommand{\vv}{\vec{v}}
\newcommand{\vp}{\tilde{\vec{v}}^{\pi^*}}
\newcommand{\psa}{\vec{p}_{s,a}}
\newcommand{\tpsa}{\tilde{\vec{p}}_{s,a}}
\newcommand{\tqsa}{\tilde{\vec{q}}_{s,a}}
\DeclareMathOperator*{\argmax}{arg\,max}
\DeclareMathOperator*{\argmin}{arg\,min}
\DeclareMathOperator{\varwa}{VaR}
\newcommand{\bcr}{\textrm{BCR}_{\alpha} \xspace}
\newcommand{\bcrlabel}{\textrm{BCR} \xspace}
\DeclareMathOperator{\var}{\mathrm{VaR}_{\alpha}}
\DeclarePairedDelimiter\abs{\lvert}{\rvert}
\DeclarePairedDelimiter\norm{\lVert}{\rVert}
\newcommand{\eat}[1]{}
\newcommand{\val}{v}
\newcommand{\tbopt}{\bopt^{{\pi}^*}_{\tilde{\vec{P}}}}
\newcommand{\relname}[2]{\stackrel{\text{(#1)}}{#2}}
\newcommand{\varlabel}{\mathrm{VaR}}
\newcommand{\wvar}{\widehat{\varlabel}_{\alpha}}
\newcommand{\bop}{\mathcal{T}^{\pi}}
\newcommand{\bopt}{\mathcal{T}}
\newcommand{\bcropt}{\mathcal{T}_{\bcr}}
\newcommand{\vop}{\mathcal{T}^{\pi}_{\var}}
\newcommand{\vopt}{\mathcal{T}_{\var}}
\newcommand{\evopt}{\mathcal{T}_{\widehat{\varlabel}_{\alpha}}}
\newcommand{\tvopt}{\evopt}
\newcommand{\vara}{\var}
\newcommand{\varop}{\var}
\newcommand{\op}{\mathfrak{T}}
\renewcommand{\P}{\Pr}
\renewcommand{\vec}{\bm}
\renewcommand{\tr}{^{\mathsf{T}}}
\newcounter{sqindex}
\renewcommand{\P}[1]{\mathbb{P}\left[ #1 \right]}
\renewcommand{\P}{\Pr}
\renewcommand{\vec}{\bm}
\renewcommand{\tr}{^{\mathsf{T}}}
\title{Percentile Criterion Optimization in Offline Reinforcement Learning}
\author{%
  Elita A. Lobo\\
  Department of Computer Science\\
  University of Massachusetts Amherst\\
  \texttt{elobo@umass.edu} \\
  \And
   Cyrus Cousins\\
  Department of Computer Science\\
  University of Massachusetts Amherst\\
  \texttt{cbcousins@umass.edu} \\
  \And 
   Yair Zick\\
  Department of Computer Science\\
  University of Massachusetts Amherst\\
  \texttt{yzick@umass.edu} \\
  \And
  Marek Petrik\\
  Department of Computer Science\\
  University of New Hampshire\\
  \texttt{mpetrik@cs.unh.edu} \\
}
\begin{document}

\maketitle

\begin{abstract}
In reinforcement learning, robust policies for high-stakes decision-making problems with limited data are usually computed by optimizing the \emph{percentile criterion}. 
The percentile criterion is approximately solved by constructing an \emph{ambiguity set} that contains the true model with high probability and optimizing the policy for the worst model in the set. 
Since the percentile criterion is non-convex, constructing ambiguity sets is often challenging. 
Existing work uses \emph{Bayesian credible regions} as ambiguity sets, but they are often unnecessarily large and result in learning overly conservative policies. 
To overcome these shortcomings, we propose a novel Value-at-Risk based dynamic programming algorithm to optimize the percentile criterion without explicitly constructing any ambiguity sets. 
Our theoretical and empirical results show that our algorithm implicitly constructs much smaller ambiguity sets and learns less conservative robust policies.
\end{abstract}

\section{Introduction} \label{sec:intro}
Batch Reinforcement Learning (Batch RL) \citep{Lange2012BatchRL} is popularly used for solving sequential decision-making problems using limited data. These algorithms are crucial in high-stakes domains where exploration is either infeasible or expensive, and policies must be learned from limited data. In model-based Batch RL algorithms, transition probabilities are learned from the data as well. 
Due to insufficient data, these transition probabilities are often imprecise. 
Errors in transition probabilities can accumulate, resulting in low-performing policies that fail when deployed.

To account for the uncertainty in transition probabilities, prior work uses Bayesian models \citep{russel2019BeyondCR, Delage2009Percentile, Steimle2018MM, Su2023, bucholz2019CMDP, lobo2021softrobust} to model uncertainty and optimize the policy to maximize the returns corresponding to the worst $\alpha$-percentile transition probability model. 
These policies guarantee that the true expected returns will be at least as large as the optimal returns with high confidence.  This technique is commonly referred to as the \emph{percentile-criterion} optimization.  Unfortunately, the percentile criterion is NP-hard to optimize. 
Thus, current work uses \emph{Robust Markov Decision Processes} (RMDPs) to optimize a lower bound on the percentile criterion. An RMDP takes as input an ambiguity set (uncertainty set) that contains the true transition probability model with high confidence and finds a policy that maximizes the returns of the worst model in the ambiguity set.

Since the percentile criterion is non-convex, constructing ambiguity sets itself is a challenging problem.  Existing work uses Bayesian credible regions ($\bcrlabel$) \citep{russel2019BeyondCR} as ambiguity sets. However, these ambiguity sets are often unnecessarily large \citep{russel2019BeyondCR, Derman2019BayesianRobust} and result in learning conservative robust policies.
Some recent works approximate ambiguity sets using various heuristics \citep{pmlr-v130-behzadian21a,russel2019BeyondCR},
but we show that they remain too conservative.  
Thus, the question of the \emph{optimal ambiguity set, i.e., ambiguity sets that result in optimizing the tightest possible lower bound on the percentile criterion and less-conservative policies} remains unanswered.

\paragraph{Our Contributions}
In this paper, we answer two important questions: 
\begin{inparaenum}[a)]
    \item \emph{Are Bayesian credible regions the optimal ambiguity sets for optimizing the percentile criterion?}
    \item \emph{Can we obtain a less conservative solution to the percentile criterion than RMDPs with BCR ambiguity sets while retaining its percentile guarantees?}
\end{inparaenum}
Our theoretical findings show that Bayesian credible regions can grow significantly with the number of states and therefore, tend to be unnecessarily large, resulting in highly conservative policies. 
As our main contribution, we provide a dynamic programming framework (\cref{sec:var_framework}), which we name the $\varlabel$ framework, for optimizing a lower bound on the percentile criterion without explicitly constructing ambiguity sets. 
Specifically, we propose a new robust Bellman operator, the Value at Risk ($\varlabel$) Bellman operator, for optimizing the percentile criterion. 
We show that it is a valid contraction mapping that optimizes a tighter lower bound on the percentile criterion, compared to RMDPs with BCR ambiguity sets (\cref{sec:var_framework}). 
We theoretically analyze and bound the performance loss of our framework (\cref{sec:theoretical_analysis}). 
We provide a Generalized $\varlabel$ Value iteration algorithm and analyze its error bounds.
We also show that there exist directions in which the Bayesian credible regions can grow unnecessarily large with the number of states in the MDP and possibly result in a conservative solution. On the other hand, the ambiguity sets implicitly optimized by the $\varlabel$ Bellman operator tend to be smaller, i.e., they have a smaller asymptotic radius and are independent of the number of states (\cref{sec:comparison_bayesian}). 
Finally, we empirically demonstrate the efficacy of our framework in three domains (\cref{sec:experiments}).

\subsection{Related Work}\label{sec:related_works}
 Several work\citep{russel2019BeyondCR,Petrik2016,Delage2009Percentile} propose different methods for solving the percentile criterion, as well as other robust measures for handling uncertainty in the transition probabilities estimates. 
\citet{russel2019BeyondCR} and \citet{pmlr-v130-behzadian21a} propose various heuristics for minimizing the size of the ambiguity sets constructed for the percentile-criterion.
\citet{russel2019BeyondCR} propose a method that interleaves robust value iteration with ambiguity set size optimization. 
\citet{pmlr-v130-behzadian21a} propose an iterative algorithm that optimizes the weights of $\ell_1$ and $\ell_{\infty}$ ambiguity sets while optimizing the robust policy. 
However, these methods still construct Bayesian credible sets which can be unnecessarily large and result in conservative policies, as we show in \Cref{sec:experiments}.

Other works consider partial correlations between uncertain transition probabilities to mitigate the conservativeness of learned policies \citep{Derman2019BayesianRobust,Mannor2016KRect,goyal2020BeyondR,Mannor2012LightningDN,behzadian2021fast}. 
These approaches mitigate the conservativeness of S- and SA-rectangular ambiguity sets by capturing correlations between the uncertainty and by limiting the number of times the uncertain parameters deviate from the mean parameters. 
Despite these heuristics, most of these works~\citep {russel2019BeyondCR, Wiesemann2013RMDP, badrinath2021Robust, Grand-Clement2020b} either rely on weak statistical concentration bounds 
to construct frequentist ambiguity sets, or use Bayesian credible regions as ambiguity sets. 
These sets still tend to be unnecessarily large~\citep{russel2019BeyondCR,Derman2019BayesianRobust}, resulting in conservative policies. 

Finally, a large number of works~\citep{NIPS2015_tamar,prashant2014,tamarpolicy2012,pmlr-v216-vijayan23a,JMLR:v16:garcia15a,Stanko2019RiskaverseDR,chow2014} have proposed RL algorithms that use measures like Conditional Value at Risk, Entropic risk measure amongst other risk measures. However, we note that these works use risk measures to obtain robustness guarantees against aleatoric uncertainty (system uncertainty) and not epistemic uncertainty (model uncertainty), which is the focus of our work. Since these works optimize a completely different objective, we do not compare our framework against theirs.
%
Robust RL work \citep{Xu2012Distributionally,Derman2018Soft-RobustAC,Mankowitz2020RobustRL,Grand-Clement2020b,badrinath2021Robust,neufeld2023robust} proposes other robust measures for handling uncertainty in transition probabilities; however, these approaches do not provide probabilistic guarantees on the expected returns, and compute overly conservative policies.


\section{Preliminaries} \label{sec:prelims}

In the standard reinforcement learning setting, a sequential decision task is modeled as a Markov Decision Process~(MDP) \citep{putterman1994DP,Sutton2018RL}. 
An MDP is a tuple $\langle \states,\actions,P,R,\vec{p}_0,\gamma \rangle$ that consists of 
\begin{inparaenum}[(a)]
    \item a set of states $\states = \{1,2,\ldots, S \}$, 
    \item a set of actions $\actions = \{1,2,\ldots, A\}$,
    \item a deterministic reward function ${R} \colon \states \times \actions \times \states \to \real$,
    \item a transition probability function $P \colon  \states\times\actions \to \Delta^S$,
    \item an initial state distribution $\vec{p}_0\in\Delta^S$, where $\Delta^S$ represents the $S$-dimensional probability simplex, and
    \item a discount factor $\gamma \in [0,1]$.
\end{inparaenum} 
We use $\vec{p}_{s,a}$ to denote the vector of transition probabilities ${P}(s,a,\cdot)$ corresponding to the state $s$ and action $a$. 
Likewise, we use $\vec{r}_{s,a}$ to denote the vector of rewards $R(s,a,\cdot)$ corresponding to state $s$ and action $a$. 
A Markovian policy $\pi\colon \states \to \Delta^A$ maps each state $s$ to a distribution over actions $\actions$. 
In a general RL setting, the goal is to compute a policy $\pi$ that maximizes the expected discounted return $\rho(\pi,P)$ over an infinite horizon,
\[
  \max_{\pi\in \Pi} \rho(\pi,P) = \max_{\pi\in \Pi}\vphantom{\frac{1}{2}}\smash{\Exp{}{\sum_{t=0}^\infty \gamma^t r(s_t,a_t,s_{t+1})  \mid {s_0 \thicksim \vec{p}_0}, a_t \sim \pi(s_t), s_{t+1} \sim P(s_t,a_t,\cdot) }} \enspace.
\]
The value of a policy $\pi$ at any state $s$ is the discounted sum of rewards received by an RL agent, if it starts from state $s$, i.e.,
$v^{\pi}(s)=\mathbb{E}\left[\sum_{t=0}^{\infty} \gamma^t r(s_t,a_t,s_{t+1}) \middle| s_0=s, a_t \sim \pi, s_{t+1} \sim P(s_t,a_t,\cdot)\right]$.

We assume a \emph{batch reinforcement learning} setting \citep{Lange2012BatchRL} where the reward function is known, but the true transition probabilities $P\opt$ are unknown.
Following prior work on robust Bayesian RL \citep{Delage2009Percentile,russel2019BeyondCR,Xu2006Robustnesstradeoff,Box1973BayesianLR}, we use parametric Bayesian models to represent uncertainty over the true transition probabilities $P\opt$. 
We will use $\tilde{\vec{P}}=\{\tilde{\vec{p}}_{s,a}\}_{s\in\states,a\in\actions}$ to denote the random transition probabilities. We assume we have a batch of data $\mathcal{D}=\{s_i,a_i,s_i'\}_{i=1}^N$, which in conjunction with some prior distribution defines the \emph{posterior distribution} of transition probabilities $\tilde{\vec P}$.

The Fisher information measures the amount of information about the unknown true parameters $\vec{\theta}^* \in \real^d$ carried by an observable random variable $\tilde{\vec X}\in\real^m$. If $f(\vec{\vec x};{\vec \theta}^*)$ is the probability density of $\tilde{\vec X}$ conditioned on $\vec{\theta}^*$, then the Fisher information is given by $\mathsmaller{I(\vec \theta^*) = \mathbb{E}\Bigl[
\left( \nabla \log f(\smash{\tilde{\vec X}};\vec{\theta}^*) \right) \left( \nabla \log f(\smash{\tilde{\vec X}};\vec{\theta}^*) \right)^{\tr}
\Bigr]}$. 

Then, in the Bayesian setting, the Bernstein von Mises theorem \citep{vaart1998Asymptotic} states that under mild regularity conditions 
the posterior distribution of the parameters $\tilde{\vec{\theta}}$ converges in the limit to the distribution of the MLE of $\vec{\theta}^*$ which is asymptotically Gaussian, in particular, $\lim_{N\to\infty}\sqrt{N}( \tilde{\vec \theta}-{\vec \theta}^* ) \rightsquigarrow \mathcal{N}(\vec{0},I(\vec{\theta}\opt)^{-1})$ where $\rightsquigarrow$ indicates convergence in distribution.
%
In many cases, the above holds even though the conditions of the Bernstein-von Mises theorem are not met \citep{Gelman2014BayesianData,geyer2013}.
Of particular interest in this setting is the Dirichlet distribution where the asymptotic MLE is a multivariate Gaussian under certain conditions on the prior distribution \cite{geyer2013}, although in this case, it is degenerate (i.e., the covariance matrix is not full-rank). 

For asymptotic results, we assume that the data $\mathcal{D}$ is sampled such that each state $s$ and action $a$ is observed infinitely many times as $N\to\infty$. Furthermore, we assume henceforth that the prior is asymptotically negligible, and the MLE is asymptotically Gaussian with covariance matrix $\nicefrac{I(\vec{P}^*)^{-1}}{N}$. Therefore, the posterior distribution of $\tilde{\vec P}$ is also asymptotically Gaussian with covariance matrix $\nicefrac{I(\vec{P}^*)^{-1}}{N}$.
Under these conditions, we will estimate the Fisher information corresponding to $\vec\theta^*=\vec{P}^*$ using the asymptotic covariance of the posterior distribution of $\tilde{\vec P}$.



To avoid unnecessary computational technicalities, we will assume that $\tilde{\vec{P}}$ is a discrete random variable taking on values $\tilde{\vec{P}}(\omega), \omega \in \Omega$ for some $\Omega = \{1, \ldots, M\}$ with a distribution $\Pf$. 
That is, the random variable $\tilde{\vec{P}}$ represents a discrete approximation of the true, possibly continuous posterior, as is common in methods like Sample Average Approximation (SAA) \citep{Shapiro2013Risk}.




\paragraph{Percentile Criterion}
The $\alpha$-percentile criterion is popularly used to derive robust policies under model uncertainty \citep{Delage2009Percentile}.
It aims to compute a policy $\pi$ that maximizes the returns corresponding to the worst $\alpha$-percentile model:
\begin{equation} \label{eq:percentile}
	\argmax_{\pi \in \Pi} \left\{y \in \real \, \middle| \,  \smash{\Pr_{\tilde{\vec{P}}\sim\Pf}}\left[ \rho(\pi,\tilde{\vec{P}}) \geq y \;\right] \geq 1-\alpha \right\}.
\end{equation}
The value $y$ lower-bounds the true expected discounted returns with confidence $1-\alpha$ where $\alpha \in (0,\nicefrac{1}{2})$. 
Optimizing the percentile criterion is equivalent to optimizing the Value at Risk ($\var$) of expected discounted returns when there exists uncertainty in transition probabilities $\tilde{\vec{P}}$ and the expected returns function $\rho$ is lower-semicontinuous. 
The optimization in \eqref{eq:percentile} is equivalent to
\begin{equation}\label{eq:percentile_var}
	\max_{\pi \in \Pi} \var\left[\rho(\pi,\smash{\tilde{\vec{P}}})\right] \enspace,
\end{equation}
where $\var$ of a bounded random variable $\tilde{X}$ with a CDF function $F:\real\to [0,1]$ is defined as \citep{rockafellar1970convex} 
\begin{equation}\label{eq:percentile_def}
 \varlabel_{\alpha}[\tilde{X}]  = \sup\{t \in \real: \Pr[\tilde{X} \geq t] \geq 1-\alpha \}\enspace. 
\end{equation}


A lower value of $\alpha$ in \eqref{eq:percentile} indicates a higher confidence in the returns achieved in expectation. 
For example, $\varlabel_{0.05}[\rho(\pi,\tilde{\vec{P}})]=x$ indicates that the true returns will be at least equal to the robust returns $x$ for $95$\% of the transition probability models. When clear from context, we use $\varlabel$ to denote the Value at Risk at confidence level $\alpha$.
Unfortunately, the optimization problem in~\eqref{eq:percentile} is NP-hard to optimize and is usually approximately solved using Robust MDPs.

\paragraph{Robust MDPs}
Robust MDPs~(RMDPs) generalize MDPs to account for uncertainty, or ambiguity, in the transition probabilities. 
An \emph{ambiguity set} for an RMDP is constructed such that it contains the true model with high confidence. 
The optimal policy of a Robust MDP $\pi\opt$ maximizes the returns of the worst model in the ambiguity set: $\pi\opt = \argmax_{\pi \in \Pi} \, \min_{\vec{P} \in \mathcal{P}} \, \rho(\pi, \vec P).$
General RMDPs are NP-hard to solve \citep{Wiesemann2013RMDP}, but they are tractable for broad classes of ambiguity sets. 
The simplest such type is the SA-rectangular ambiguity set \citep{Wiesemann2013RMDP,Nilim2005RobustControl}, defined as
\[
	\PP = \left\{ \vec{P} \in (\Delta^S)^{S \times A} \mid \vec{p}_{s,a} \in \mathcal{P}_{s,a}, \; \forall s \in \states,\, \forall a \in \actions \right\}\enspace,
\]
for a given $\mathcal{P}_{s,a} \subseteq \Delta^\states,s\in\states,a\in\actions$. 
SA-rectangular ambiguity sets \citep{russel2019BeyondCR,pmlr-v130-behzadian21a} assume that the transition probabilities corresponding to each state-action pair are independent.
Similarly to MDPs, the optimal robust value function $\vec v \opt \in \real^S$ for an SA-rectangular RMDP is the unique fixed point of the robust Bellman optimality operator $\bopt \colon \real^S \to \real^S$ defined as $(\bopt \vec v)_{s} = \max_{a \in\actions} \min_{\vec{p}_{s,a} \in \mathcal{P}_{s,a} } \vec p_{s,a} \tr \left(\vec{r}_{s,a} + \gamma \cdot \vec v \right)$.

To optimize the percentile criterion, an SA-rectangular ambiguity set $\mathcal{P}$ is constructed such that it contains the true model with high probability, and thus, the following equation holds.
\begin{align*}
    \Pr\left[ \rho(\pi,\tilde{\vec{P}}) \geq \min_{P \in\mathcal{P}} \rho(\pi,\vec{P})\right] \geq 1-\alpha\enspace.
  \end{align*}
Although RMDPs have been used to solve the percentile criterion \citep{pmlr-v130-behzadian21a}, the quality of the robust policies it computes depends mainly on the size of the ambiguity sets. 
The larger the ambiguity sets, the more conservative the robust policy \citep{Mannor2016KRect}. 
SA-rectangular ambiguity sets are most commonly studied; thus we focus our attention on SA-rectangular Robust MDPs.
We investigate whether Bayesian credible regions are optimal ambiguity sets for optimizing the percentile criterion. 
We refer to SA-rectangular RMDPs and SA-rectangular ambiguity sets as Robust MDPs and ambiguity sets respectively.

Our work focuses on Bayesian (rather than frequentist) ambiguity sets. 
Bayesian ambiguity sets are usually constructed from Bayesian credible regions (BCR) \citep{pmlr-v130-behzadian21a,russel2019BeyondCR}. 
Given a state $s$ and an action $a$, let $\psi_{s,a}$ represent the size of the $\bcrlabel$ ambiguity sets; $\mathcal{P}^{\bcr}_{s,a}$ and $\bar{\vec p}_{s,a}$ represent the mean transition probabilities. 
The set $\mathcal{P}^{\bcr}_{s,a}$ is constructed as 
\begin{equation}
             \mathcal{P}^{\bcr}_{s,a} = \mathcal{P}_{s,a}(\vec b, \psi,q) =\left\{\vec{p}_{s,a} \in \Delta^S \middle | \|\vec{p}_{s,a} - \bar{\vec p}_{s,a}\|_{q,\vec b} \leq \psi_{s,a} \right\}\enspace,\label{eq_weighted} 
\end{equation} 
where $\vec{q}\in \{1,\infty\}$ represents the norm of the weighted ball in \eqref{eq_weighted} and $\vec b\in \real^S_{+}$ is a weight vector. 
Here, $\vec b$ is jointly optimized with $\psi \in \real$ to minimize the span of the ambiguity sets such that the true model is contained in the ambiguity set with high confidence, i.e., $\Pr \left( \tilde{\vec p}_{s, a} \in  \mathcal{P}_{s, a}(\vec b, \psi,q)\right) \geq 1-\alpha$. 
We refer to $\bcrlabel$ ambiguity sets with non-uniform weights as weighted $\bcrlabel$ ambiguity sets.
We refer to the Robust Bellman optimality operator with $\bcrlabel$ ambiguity sets $(\bcropt)$ as the $\bcr$ Bellman optimality operator, and to RMDPs with $\bcrlabel$ ambiguity sets as $\bcrlabel$ RMDPs.
For any $\delta \in (0,\nicefrac{1}{2})$, setting the confidence level $\alpha$ in $\bcropt$ to $\nicefrac{\delta}{SA}$ for all state-action pairs yields $1-\delta$ confidence on the returns of the optimal robust policy \citep{pmlr-v130-behzadian21a}. 
However, we show that even span-optimized $\bcrlabel$ RMDPs can be sub-optimal for optimizing the percentile criterion.

We use the shorthand $\vec{w}_{s,a}$ for any $s\in\states$, $a\in\actions$ to denote the vector of values associated with value $\vec v\in\real^S$ and the one-step return from state $s$ and action $a$, i.e., $\vec{w}_{s,a}=\vec{r}_{s,a} + \gamma \vec{v}$. 
We use $\pn \in \real^s$ and $\sigman \in \real^{S \times S}$ for any $s\in\states$, $a\in\actions$ to represent the empirical mean and covariance of transition probabilities $\tilde{\vec{p}}_{s,a}$ estimated from $\mathcal{D}$.
We use \emph{tilde} to indicate that it is a random variable.
We use $\phi(\cdot)$ and $\Phi(\cdot)$ to represent the probability 
 distribution function (PDF) and cumulative distribution function (CDF) respectively of the normal distribution with mean $0$ and variance $1$. The $\vec{Z}$-Minkowski norm $\norm{\vec x}_{Z}$ for a vector $\vec x$ given some positive-definite matrix $\vec{Z}$ is defined as $\norm{\vec x}_{\vec{Z}} = \sqrt{\vec x\tr \vec{Z}^{-1} \vec x}$.

  \begin{figure*}[b]
  \begin{subfigure}[b]{0.33\textwidth}
    \includegraphics[width=\linewidth]{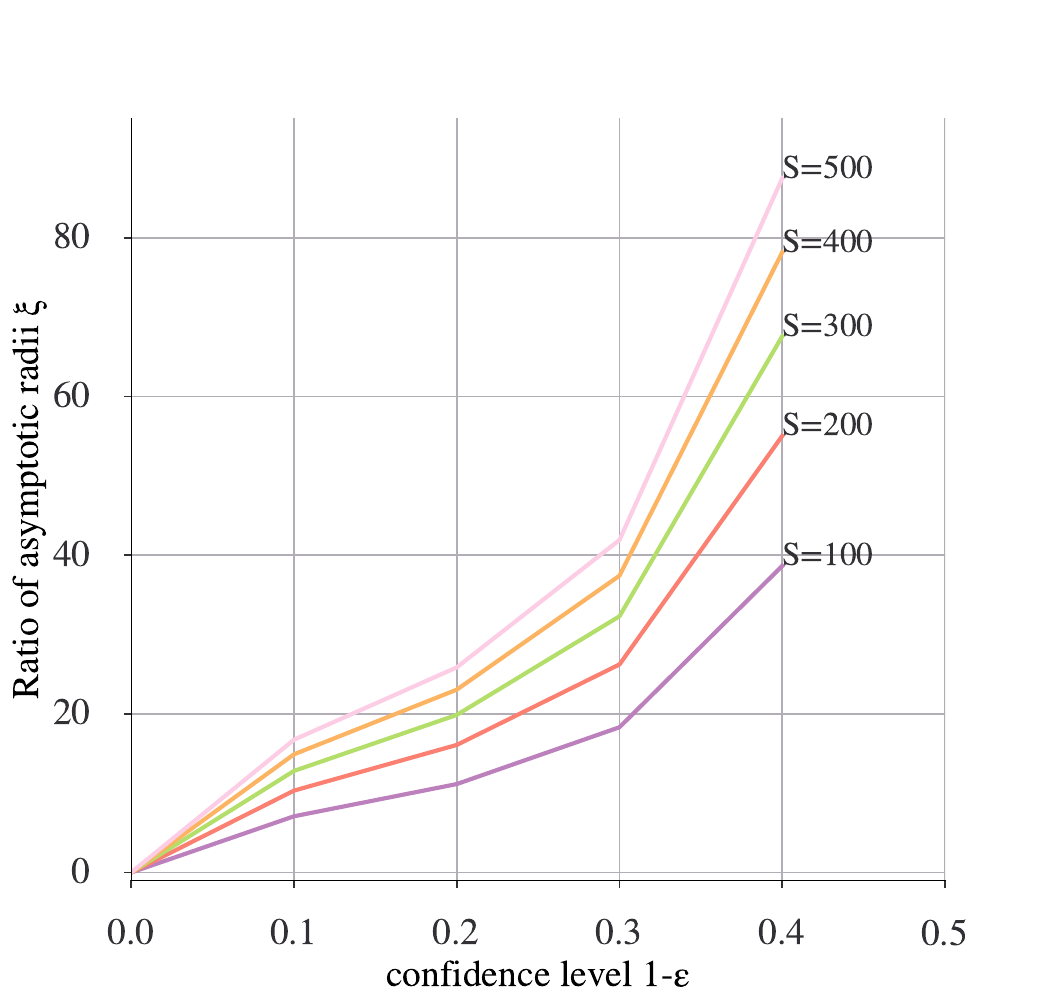}
    \caption{Asymptotic Radii}
    \centering
    \label{fig:radii}
  \end{subfigure}
   \begin{subfigure}[b]{0.30\textwidth}
    \includegraphics[width=\linewidth,]{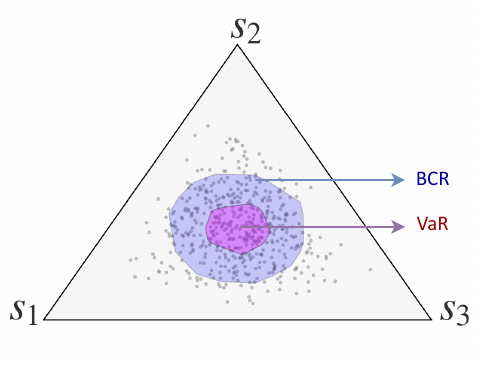}
    \caption{Dirichlet(5,5,5)}
    \label{fig:ambiguity_sets}
  \end{subfigure}
    \begin{subfigure}[b]{0.30\textwidth}
    \includegraphics[width=\linewidth,]{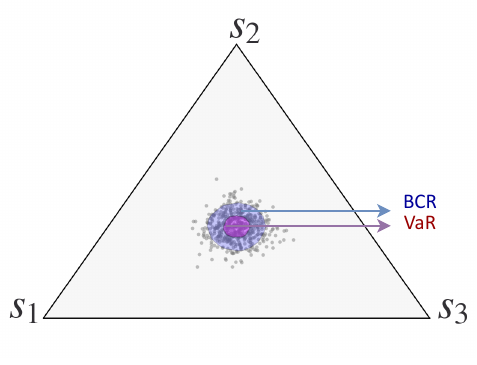}
    \caption{Dirichlet(30,30,30)}
    \label{fig:ambiguity_sets1}
  \end{subfigure}

   \caption{\Cref{fig:radii} (left) compares the asymptotic radius of  $\Sigma^{-1}$-Minkowski norm $\bcrlabel$ ambiguity sets to $\var$ ambiguity sets, where $\Sigma$ is the covariance matrix. The size of the $\bcrlabel$ ambiguity sets significantly grows with the number of states. \Cref{fig:ambiguity_sets} and \Cref{fig:ambiguity_sets1} (right) compare the asymptotic forms of $\bcr$ and $\var$ ambiguity sets under high and low uncertainty in $\tilde{\vec{P}}$ at confidence level $\alpha=0.2$ respectively. The grey dots represent the transition probabilities samples from a Dirichlet distribution. 
   The sizes of $\bcr$ and $\var$ ambiguity sets increase with an increase in the uncertainty in $\tilde{\vec{P}}$, however, the $\var$ ambiguity sets are smaller than the $\bcr$ ambiguity sets. 
   }
\end{figure*}

We illustrate the conservativeness of $\bcr$ ambiguity sets with the following example.
\begin{exm}\label{example}
 
Consider an MDP with four states $\{s_0,s_1,s_2,s_3\}$ and a single action $\{a_0\}$. 
The state $s_0$ is the initial state and the states $s_1,s_2,s_3$ are terminal states with zero rewards. For the sake of simplicity, we assume that it is only possible to transition to state $s_1, s_2$ and $s_3$ from state $s_0$. The transition probability of ${\vec p}_{s_0,a_0}$ is uncertain and distributed as a Dirichlet distribution $\tilde{\vec p}_{s_0,a_0} \sim\text{Dir}(10,10,1)$ with mean $[0.48,0.48,0.04]$. 
The rewards for transitions from state $s_0$ are given by $\vec{r}_{s_0,a_0}=[0.25,0.25,-1]$. 

We wish to optimize the percentile criterion with confidence level $\delta=0.2$.
Following the sampling procedure proposed by \citet{russel2019BeyondCR} to construct a uniformly weighted $\bcr$ ambiguity set for $\tilde{\vec{p}}_{s_0,a}$ with 100 posterior samples, yields an ambiguity set $\mathcal{P}^{\bcr}_{s_0,a_0}=\left\{\vec{p} \in \Delta^S \middle|  \|\vec{p} -\tilde{\vec{p}}_{s_0,a_0}\|_1 \leq 0.277\right\}$. 
In this case, the reward estimate against the worst model in the ambiguity set $\vec{p}=[0.50, 0.32,0.18]$ is $\rho^{\bcr}=0.025$. 
Since we have a single non-terminating state in the MDP, the percentile returns are given by $\rho^{\var}=\varlabel_{0.2}[\tilde{\vec p}_{s_0,a_0}\tr \vec{r}_{s_0,a_0}]$. 
Computing $\rho^{\var}$ for Dirichlet distribution $\textit{Dir}(10,10,1)$, we get $\rho^{\var}=0.17 > \rho^{\bcr}$. 
Thus, this example shows that $\bcrlabel$ ambiguity sets can be unnecessarily large, and thereby result in conservative policies.
\end{exm}
\section{VaR Framework}\label{sec:var_framework}

We introduce the $\varlabel$ Bellman optimality operator $\vopt$ for approximately solving the percentile criterion. 
We show that $\vopt$ is a valid Bellman operator: it is a contraction mapping and lower bounds the percentile criterion. 
For any value function $\vec v\in\real^\states$, state $s\in\states$ and action $a\in\actions$, we define the $\varlabel$ Bellman optimality operator $\vopt$ as
\begin{equation} \label{eq:bellman_operator}
(\vopt \vec v)_{s} = \max_{a\in\actions}  \varop \left[\tilde{\vec p}_{s,a}\tr(\vec{r}_{s,a} + \gamma \vec{v}) \right] \enspace. 
\end{equation}
For each state $s$, $\vopt$ maximizes the value corresponding to the worst $\alpha$-percentile model. 
In contrast to the $\bcr$ Bellman optimality operator $\bcropt$, computing $\vopt$ does not require constructing ambiguity sets from confidence regions; it can simply be estimated from samples of the model posterior distribution, as we later show. 

\begin{restatable}[Validity]{proposition}{validity}
  \label{prop:validity}
The following properties of $\vopt$ hold for all value functions $\vec u, \vec v\in \real^{\states}$.
\begin{enumerate}[nosep]
\item\label{thm:var_contraction} The operator $\vopt$ is contraction mapping on $\real^{\states}$:
        $
                \norm{ \vopt \vec u - \vopt \vec{\val}}_{\infty} \leq \gamma \norm{ \vec u - \vec{\val} }_{\infty} \enspace.
        $
\item The operator $\vopt$ is monotone: $\bm{u} \succeq  \bm{v} \Rightarrow \vopt \bm{u} \succeq \vopt \bm{v}$.
\item The equality $\vopt \hat{\vec{v}}= \hat{\vec{\val}}$ has a unique solution.
\end{enumerate}
\end{restatable}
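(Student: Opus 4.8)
The plan is to reduce all three claims to two elementary properties of the scalar risk measure $\var$: (i) \emph{monotonicity}, i.e.\ $X \le Y$ almost surely implies $\var[X] \le \var[Y]$, and (ii) \emph{translation equivariance}, i.e.\ $\var[X + c] = \var[X] + c$ for any deterministic constant $c$. Both follow immediately from the definition in \eqref{eq:percentile_def}, and they are well defined here because the discreteness assumption on $\tilde{\vec P}$ makes each inner random variable $\tilde{\vec p}_{s,a}\tr(\vec r_{s,a} + \gamma \vec v)$ a bounded, finitely-supported random variable. The only structural fact I need beyond these is that $\tilde{\vec p}_{s,a}$ takes values in the simplex $\Delta^S$, hence is nonnegative and sums to one almost surely.

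For the contraction (claim~1), fix $s\in\states$, $a\in\actions$, and value functions $\vec u, \vec v$. The difference of the two one-step returns equals $\gamma\,\tilde{\vec p}_{s,a}\tr(\vec u - \vec v)$, which is a convex combination of the entries of $\gamma(\vec u - \vec v)$ since $\tilde{\vec p}_{s,a}\in\Delta^S$ almost surely; by H\"older's inequality this yields the pointwise bound $\lvert\tilde{\vec p}_{s,a}\tr(\vec u - \vec v)\rvert \le \norm{\vec u - \vec v}_\infty$, so almost surely
\[
\tilde{\vec p}_{s,a}\tr(\vec r_{s,a} + \gamma \vec u) \le \tilde{\vec p}_{s,a}\tr(\vec r_{s,a} + \gamma \vec v) + \gamma\norm{\vec u - \vec v}_\infty .
\]
Applying monotonicity and then translation equivariance of $\var$ to this inequality, and repeating with $\vec u$ and $\vec v$ swapped, gives $\lvert\var[\tilde{\vec p}_{s,a}\tr(\vec r_{s,a} + \gamma \vec u)] - \var[\tilde{\vec p}_{s,a}\tr(\vec r_{s,a} + \gamma \vec v)]\rvert \le \gamma\norm{\vec u - \vec v}_\infty$. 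Finally I take the maximum over $a$ on each side using the elementary inequality $\lvert\max_a f(a) - \max_a g(a)\rvert \le \max_a \lvert f(a) - g(a)\rvert$, and then the maximum over $s$, to conclude $\norm{\vopt \vec u - \vopt \vec v}_\infty \le \gamma\norm{\vec u - \vec v}_\infty$.

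Monotonicity (claim~2) is more direct: if $\vec u \succeq \vec v$ then, because $\gamma \ge 0$ and $\tilde{\vec p}_{s,a}\succeq \zero$ almost surely, we have $\tilde{\vec p}_{s,a}\tr(\vec r_{s,a} + \gamma \vec u) \ge \tilde{\vec p}_{s,a}\tr(\vec r_{s,a} + \gamma \vec v)$ almost surely; monotonicity of $\var$ followed by monotonicity of the pointwise maximum over $a$ gives $(\vopt \vec u)_s \ge (\vopt \vec v)_s$ for every $s$. For uniqueness (claim~3), $(\real^{\states}, \norm{\cdot}_\infty)$ is a complete metric space, so the contraction established in claim~1 lets me invoke the Banach fixed-point theorem, which guarantees that $\vopt \hat{\vec v} = \hat{\vec v}$ has exactly one solution.

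I expect the contraction step to be the only genuine obstacle, and within it the crux is justifying the almost-sure pointwise bound and then pushing it correctly through $\var$: monotonicity and translation equivariance must be applied in the right order, and one must check that the shift $\gamma\norm{\vec u - \vec v}_\infty$ is a true constant (not random) so that translation equivariance applies. Everything else---the max-over-actions smoothing and the appeal to Banach---is routine. It is worth double-checking that $\var$ as defined in \eqref{eq:percentile_def} indeed satisfies translation equivariance with the stated sign convention (the sup-based ``upper'' VaR), since a mismatched convention would flip the inequalities.
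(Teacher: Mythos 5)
Your proof is correct and follows essentially the same route as the paper: both arguments rest on the monotonicity and translation equivariance of $\var$ together with the fact that $\tilde{\vec p}_{s,a}\in\Delta^S$ (so the shift constant passes through the inner product), and both conclude uniqueness via the Banach fixed-point theorem. The only difference is presentational --- the paper first establishes operator-level monotonicity and translation subvariance and then invokes an abstract lemma (Proposition 2.1.3 of Bertsekas) saying these two properties imply contraction, whereas you inline exactly that sandwich argument directly at the level of the scalar one-step returns.
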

\Cref{prop:validity} formally proves that $\vopt$ is a valid Bellman operator, i.e., it is a contraction mapping, monotone, and has a unique fixed point $\hat{\vec v}=(\vopt \hat{\vec v})$. Here on, we will refer to the policy ($\hat{\pi}$) corresponding to the fixed point value $\hat{\vec v}$ as the $\var$ policy.

We now show that the $\varlabel$ Bellman optimality operator $\vopt$ optimizes a lower bound on the percentile criterion.
Given a policy $\pi$, a state $s\in\states$, and transition probabilities $\vec{P}$, let 
\[
  (\bop_{\vec{P}}\vec v)_{s} = \sum_{a\in\actions} \pi(s,a) \vec{p}_{s,a}\tr(\vec{r}_{s,a} + \gamma \vec{v}) 
\enspace,
\]
represent the Bellman evaluation operator for transition probabilities $\vec{P}$. Furthermore, let 
\[
(\vop\vec v)_{s} =\sum_{a\in\actions} \pi(s,a) \var\left[\tilde{\vec{p}}_{s,a}\tr(\vec{r}_{s,a} + \gamma \vec{v}) \right] \enspace,
\]
represent the $\varlabel$ Bellman evaluation operator for random transition probabilities $\tilde{\vec{P}}$.
We use $\hat{\vec v}^{\pi}$ to denote the fixed point of $\vop$. Furthermore, we use $\tilde{\vec v}^{\pi}$ to represent the random fixed point of $\bop_{\tilde{\vec{P}}}$, which is computed using a random realization of the transition probabilities $\tilde{\vec{P}}$ sampled from the posterior distribution conditioned on observed transitions $\mathcal{D}$. 

\begin{restatable}[Lower Bound Percentile Criterion]{proposition}{lowerbound}\label{prop:lower_bound}
For any $\delta\in (0,\nicefrac{1}{2})$, if we set the confidence level $\alpha$ in the operator $\vop$ to $\nicefrac{\delta}{S}$, then for every policy $\pi\in \Pi:  \Pr_{\tilde{\vec{P}}}\left[  \hat{\vec v}^{\pi} \preceq  \tilde{\vec{v}}^{\pi}\middle |  \mathcal{D}\right] \geq 1-\delta$.

\end{restatable}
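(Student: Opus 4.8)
The plan is to reduce the statement to a deterministic order relation that holds on a high-probability event, and then to propagate that relation to the fixed points via the standard monotone–contraction machinery. Concretely, fix a policy $\pi$ and let $\hat{\vec v}^\pi$ be the fixed point of $\vop$, so that $\hat{\vec v}^\pi = \vop \hat{\vec v}^\pi$. Write $\vec w_{s,a} = \vec r_{s,a} + \gamma \hat{\vec v}^\pi$, which is a deterministic vector once $\hat{\vec v}^\pi$ is fixed. I would first define, for each state $s$, the event
\[
E_s = \left\{ \sum_{a\in\actions}\pi(s,a)\,\tilde{\vec p}_{s,a}\tr \vec w_{s,a} \;\geq\; \sum_{a\in\actions}\pi(s,a)\,\var\!\left[\tilde{\vec p}_{s,a}\tr \vec w_{s,a}\right] \right\},
\]
and set $G = \bigcap_{s\in\states} E_s$. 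On $G$ the realized one-step return dominates the $\var$ target at every state, i.e. $(\bop_{\tilde{\vec P}}\hat{\vec v}^\pi)_s \geq (\vop\hat{\vec v}^\pi)_s = \hat{\vec v}^\pi(s)$ for all $s$, so $\bop_{\tilde{\vec P}}\hat{\vec v}^\pi \succeq \hat{\vec v}^\pi$.

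The second step is purely deterministic and valid on $G$. The operator $\bop_{\tilde{\vec P}}$ is the ordinary policy-evaluation Bellman operator for the realized model $\tilde{\vec P}$, hence it is monotone and a $\gamma$-contraction in $\norm{\cdot}_\infty$ with unique fixed point $\tilde{\vec v}^\pi$. Since $\bop_{\tilde{\vec P}}\hat{\vec v}^\pi \succeq \hat{\vec v}^\pi$, applying $\bop_{\tilde{\vec P}}$ repeatedly and using monotonicity yields the non-decreasing chain $\hat{\vec v}^\pi \preceq \bop_{\tilde{\vec P}}\hat{\vec v}^\pi \preceq \bop_{\tilde{\vec P}}^2\hat{\vec v}^\pi \preceq \cdots$, which converges to $\tilde{\vec v}^\pi$ by the contraction property. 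Every element of the chain dominates $\hat{\vec v}^\pi$, so the limit does too, giving $\hat{\vec v}^\pi \preceq \tilde{\vec v}^\pi$ on $G$. It therefore suffices to show $\Pr[G \mid \mathcal{D}] \geq 1-\delta$.

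For the probability bound I would invoke the definition of $\var$ in \eqref{eq:percentile_def}. For a single random variable $\tilde X$, the sup-definition together with continuity of the measure from above gives $\Pr[\tilde X \geq \var[\tilde X]] \geq 1-\alpha$; since the paper takes $\tilde{\vec P}$ discrete, this is a finite statement. For a deterministic policy (one action $a_s$ per state) the event $E_s$ is exactly $\{\tilde{\vec p}_{s,a_s}\tr \vec w_{s,a_s} \geq \var[\tilde{\vec p}_{s,a_s}\tr \vec w_{s,a_s}]\}$, so $\Pr[E_s^c \mid \mathcal{D}] \leq \alpha$. Setting $\alpha = \delta/S$ and taking a union bound over the $S$ states gives $\Pr[G^c \mid \mathcal{D}] \leq \sum_{s}\alpha = \delta$, which is exactly the claimed confidence.

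The step I expect to be the main obstacle is controlling $\Pr[E_s]$ for genuinely stochastic policies, because $\var$ is not subadditive: the event $E_s$ asks the $\pi(s,\cdot)$-weighted average of the realized returns to beat the weighted average of the per-action $\var$ values, and the probability that a convex combination of independent terms each exceeds its own $\var$ need not be $1-\alpha$. I would handle this either by restricting the lower-bound guarantee to deterministic policies (which suffices, since the fixed-point policy of the $\var$ optimality operator is deterministic), or by defining $E_s$ through the finer per-action events $\{\tilde{\vec p}_{s,a}\tr \vec w_{s,a} \geq \var[\tilde{\vec p}_{s,a}\tr \vec w_{s,a}]\}$ and absorbing the extra $A$ factor into the union bound; the monotone–contraction argument of the second step is unaffected by this choice.
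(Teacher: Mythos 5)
Your proof is correct and follows essentially the same route as the paper's: the paper likewise reduces the claim to the event that the realized Bellman update dominates the $\var$ update at the fixed point $\hat{\vec v}^{\pi}$, propagates this to fixed-point domination $\hat{\vec v}^{\pi} \preceq \tilde{\vec v}^{\pi}$ via a monotone--contraction argument (its \cref{lemma:varlemma}, using the Neumann series of $(\vec{I}-\gamma \vec{P}_{\pi})^{-1}$, where you use an iterated monotone chain), and then applies the definition of $\var$ together with a union bound over the $S$ states at level $\alpha=\nicefrac{\delta}{S}$. Your closing concern about stochastic policies is well taken---the paper's own proof silently specializes to deterministic policies by writing $\tilde{\vec p}_{s,\pi(s)}$ throughout---so your proposed remedies (restricting the guarantee to deterministic policies, which suffices for the $\var$ optimality operator, or using per-action events at the cost of an extra factor $A$ in the union bound) make your treatment, if anything, more careful than the paper's.
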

\Cref{prop:lower_bound} shows that for any policy $\pi$ and state $s$, the $\var$ value at state $s$, $\hat{\vec v}^{\pi}(s)$ lower bounds the true value $\tilde{\vec{v}}^\pi(s)$ with high confidence. 
Comparing \Cref{prop:lower_bound} with the definition of the percentile-criterion in \eqref{eq:percentile}, we see that the percentile-criterion requires confidence guarantees only on the returns computed from the initial states, whereas the equation in \cref{prop:lower_bound} provides confidence guarantees on the value of every state. 
Therefore, for any policy $\pi$, the value $\vec{p}_0\tr \hat{\vec{v}}^{\pi}$ is a lower bound on the percentile-criterion objective $\varlabel_{\delta}[\rho(\pi,\tilde{\vec{P}})]$. 
Since $\vopt$ finds a policy $\pi$ that maximizes the value $\vec{p}_0\tr \hat{\vec{v}}^{\pi}$, it follows \citep{putterman1994DP} that $\vopt$ optimizes a lower bound on the percentile criterion in \eqref{eq:percentile}.

\begin{restatable}[]{proposition}{subgaussian}\label{prop:subgaussian}
Suppose that $\tilde{\vec{p}}_{s,a}$ for any state $s$ and action $a$, is a multivariate sub-Gaussian with mean $\bar{\vec{p}}_{s,a}$ and covariance factor $\bm{\Sigma}_{s,a}$, i.e., 
$\Exp{}{\exp\big(\lambda (\bm{\tilde{\vec{p}}_{s,a}}-\pn \big)\tr\bm{w}\big)} \le \exp\big(\lambda^2 \nicefrac{\bm{w}\tr \bm{\Sigma}_{s,a} \bm{w}}{2}\big),  \forall \lambda \in \Real, \forall \bm{w}\in \real^S.$
Then, for any state $s \in \states$, $\vopt$ satisfies
\begin{align*}
        (\vopt \vec v)_{s} \geq \max_{a\in\actions} \left(  \pn \tr \vec{w}_{s,a} -\sqrt{2\ln(\nicefrac{1}{\alpha})}\sqrt{\vec{w}_{s,a}\sigman \vec{w}_{s,a}}\right)\enspace. 
\vspace{-1cm}
\end{align*}
As a special case, when $\tilde{\vec{p}}_{s,a}$ is normally distributed $\tilde{\vec{p}}_{s,a}$ $\sim \mathcal{N}(\pn,\sigman)$, then $\vopt$ for any state $s\in\states$ can be expressed as
\[
        (\vopt \vec v)_{s} = \max_{a\in\actions} \left( \pn \tr \vec{w}_{s,a} -\Phi^{-1}(1-\alpha)\sqrt{\vec{w}_{s,a}\sigman \vec{w}_{s,a}}\right)\enspace. 
\]
\end{restatable}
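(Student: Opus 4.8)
The plan is to reduce the inner Value-at-Risk to a scalar sub-Gaussian quantile bound for each fixed action, and then maximize over actions. Fix a state $s$ and action $a$, write $\wsa = \vec{r}_{s,a} + \gamma \vec{v}$, and consider the scalar random variable $Y = \tilde{\vec{p}}_{s,a}\tr \wsa$, so that the term inside the maximum in \eqref{eq:bellman_operator} is exactly $\varop[Y]$. Its mean is $\mu = \pn\tr \wsa$, and substituting $\vec{w} = \wsa$ into the hypothesized moment-generating bound shows that the centered variable $Y - \mu = (\tilde{\vec{p}}_{s,a} - \pn)\tr \wsa$ is scalar sub-Gaussian with variance proxy $\sigma^2 = \wsa\tr \sigman \wsa$.

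Next I would apply a Chernoff argument to the \emph{lower} tail. For any $c > 0$ and $\lambda > 0$, Markov's inequality applied to $\exp(-\lambda(Y - \mu))$ together with the sub-Gaussian bound gives $\Pr[Y - \mu \le -c] \le \exp(-\lambda c + \lambda^2 \sigma^2/2)$; optimizing over $\lambda$ yields $\Pr[Y \le \mu - c] \le \exp(-c^2/(2\sigma^2))$. Setting $c = \sqrt{2\ln(\nicefrac{1}{\alpha})}\,\sigma$ makes the right-hand side equal to $\alpha$, so $\Pr[Y \ge \mu - c] \ge 1 - \alpha$. By the definition of $\varlabel$ as the supremum of thresholds $t$ with $\Pr[Y \ge t] \ge 1-\alpha$, this certifies $\varop[Y] \ge \mu - c = \pn\tr \wsa - \sqrt{2\ln(\nicefrac{1}{\alpha})}\sqrt{\wsa\tr\sigman\wsa}$. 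Taking the maximum over $a \in \actions$ gives the claimed lower bound on $(\vopt \vec{v})_{s}$.

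For the Gaussian special case, $Y$ is exactly normal with mean $\mu$ and variance $\sigma^2 = \wsa\tr \sigman \wsa$, so its distribution is continuous and its quantiles are attained exactly. The defining condition $\Pr[Y \ge t] = 1 - \alpha$ becomes $\Phi\big((t - \mu)/\sigma\big) = \alpha$, i.e.\ $t = \mu + \sigma\, \Phi^{-1}(\alpha)$; invoking the symmetry $\Phi^{-1}(\alpha) = -\Phi^{-1}(1-\alpha)$ converts this into $\varop[Y] = \pn\tr\wsa - \Phi^{-1}(1-\alpha)\sqrt{\wsa\tr\sigman\wsa}$, and maximizing over $a$ yields the stated equality.

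The main obstacle is the bookkeeping around the sign convention of $\varlabel$: because it is defined as a supremum over lower thresholds (equivalently, a left quantile), one must verify that controlling the left tail $\Pr[Y \le \mu - c]$ is the correct direction, and handle the distinction between the events $\{Y \ge \mu - c\}$ and $\{Y > \mu - c\}$ when passing from the tail bound to the supremum defining $\varlabel$ (the non-strict inequality $\Pr[Y < \mu - c] \le \Pr[Y \le \mu - c] \le \alpha$ is what makes the threshold admissible). Everything else --- the reduction of the multivariate sub-Gaussian hypothesis to a scalar one by a single substitution, and the Chernoff optimization over $\lambda$ --- is routine.
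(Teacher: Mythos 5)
Your proposal is correct and follows essentially the same route as the paper's proof: both reduce to the scalar projection $\tilde{\vec{p}}_{s,a}\tr\wsa$, apply the optimized Chernoff bound for sub-Gaussian variables to the lower tail to certify the threshold $\pn\tr\wsa - \sqrt{2\ln(\nicefrac{1}{\alpha})}\,\sigma$ against the definition of $\varlabel$, and compute the exact normal quantile (via $\Phi^{-1}(\alpha) = -\Phi^{-1}(1-\alpha)$) for the Gaussian case. Your presentation---verifying that a candidate threshold is admissible for the supremum defining $\varlabel$---is if anything cleaner than the paper's chain of set manipulations on the infimum representation, but the key ingredients are identical.
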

\Cref{prop:subgaussian} shows that by assuming that the transition probabilities are sub-Gaussian, we can easily compute a lower bound of the $\varlabel$ Bellman update $(\vopt \vec v)$ for a given value function using only the mean and the covariance matrix of $\tilde{\vec{P}}$. In the special case where $\tilde{\vec{P}}$ is normally distributed, we can compute the $\varlabel$ Bellman optimality operator $\vopt(\vec{v})$ exactly. 



%

\subsection{Performance Guarantees}
\label{sec:theoretical_analysis}
We now derive finite-sample and asymptotic bounds on the loss of the $\varlabel$ framework.


\begin{restatable}[Performance]{theorem}{performance}
	\label{thm:performance_var}
	Let $\hat{\vec v}$ be the fixed point of the $\varlabel$ Bellman optimality operator $\vopt$ and $\pi\opt$ be the optimal policy in \eqref{eq:percentile}. 
Let $\rho\opt=\varlabel_{\delta}\left[\rho(\pi\opt,\smash{\tilde{\vec{P}}})\right]$ denote the optimal percentile returns and $\hat{\rho}=\vec{p}_0\tr \hat{\vec v}$ denote the lower bound on the percentile returns computed using the Bellman operator $\vopt$ with $\alpha=\nicefrac{\delta}{S}$. Then for each $\delta \in (0,\nicefrac{1}{2})$: 
\begin{align}\label{eq:performance}
		\rho\opt - \hat{\rho}  \leq \frac{1}{1-\gamma} \max_{s\in\states} \max_{a\in\actions} \, \left( \varwa_{1-\frac{1-\delta}{S}}\left[\tilde{\vec p}_{s,a}\tr\wsa\right] - \varlabel_{\alpha}\left[\tilde{\vec p}_{s,a}\tr\wsa\right]
  \right)\enspace.
	\end{align}
\end{restatable}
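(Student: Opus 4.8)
The plan is to bound the suboptimality by sandwiching the optimal percentile return $\rho\opt$ between two deterministic quantities obtained from VaR Bellman \emph{evaluation} operators for $\pi\opt$ that differ only in their confidence level, and then to control the resulting fixed-point gap with a contraction argument. Fix $\alpha=\nicefrac{\delta}{S}$. Since $\hat{\vec v}$ is the fixed point of $\vopt$ and $(\vopt\vec v)_s=\max_a\var[\tilde{\vec p}_{s,a}\tr\wsa]\geq\sum_a\pi\opt(s,a)\var[\tilde{\vec p}_{s,a}\tr\wsa]=(\vop\vec v)_s$ for every $\vec v$, the monotone-contraction comparison gives $\hat{\vec v}\succeq\hat{\vec v}^{\pi\opt}$, where $\hat{\vec v}^{\pi\opt}$ is the fixed point of $\vop$ at level $\alpha$. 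Hence $\hat\rho=\vec p_0\tr\hat{\vec v}\geq\vec p_0\tr\hat{\vec v}^{\pi\opt}$, and it suffices to upper bound $\rho\opt-\vec p_0\tr\hat{\vec v}^{\pi\opt}$.

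The crux is a matching upper bound on $\rho\opt$. I would introduce the \emph{optimistic} VaR evaluation operator $\overline{\mathcal T}^{\pi\opt}$, identical to $\vop$ but using the higher level $1-\nicefrac{1-\delta}{S}$, and let $\overline{\vec v}^{\pi\opt}$ be its fixed point; the goal is $\rho\opt\leq\vec p_0\tr\overline{\vec v}^{\pi\opt}$. The key observation is an asymmetry in the definition of VaR: to cap $\rho\opt=\varwa_{\delta}[\vec p_0\tr\tilde{\vec v}^{\pi\opt}]$ from above it is enough that $\tilde{\vec v}^{\pi\opt}\preceq\overline{\vec v}^{\pi\opt}$ hold with probability at least $\delta$ (not $1-\delta$). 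On the event that $\tilde{\vec p}_{s,a}\tr(\vec r_{s,a}+\gamma\overline{\vec v}^{\pi\opt})\leq(\overline{\mathcal T}^{\pi\opt}\overline{\vec v}^{\pi\opt})_s=\overline{\vec v}^{\pi\opt}(s)$ for every state, $\overline{\vec v}^{\pi\opt}$ is a super-solution of the random operator $\bop_{\tilde{\vec P}}$, so monotone iteration drives it down to its fixed point and yields $\tilde{\vec v}^{\pi\opt}\preceq\overline{\vec v}^{\pi\opt}$. Because $\Pr[\tilde{X}>\varwa_{1-\beta}[\tilde{X}]]\leq\beta$ with $\beta=\nicefrac{1-\delta}{S}$, a union bound over the $S$ states (the same bookkeeping as in \Cref{prop:lower_bound}) shows this event has probability at least $1-S\beta=\delta$; the VaR definition then gives $\rho\opt\leq\vec p_0\tr\overline{\vec v}^{\pi\opt}$.

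Finally, both $\overline{\vec v}^{\pi\opt}$ and $\hat{\vec v}^{\pi\opt}$ are fixed points of $\gamma$-contractions (the evaluation analogue of \Cref{prop:validity}) that differ only in their VaR level, so the standard two-operator inequality $\norm{v_1-v_2}_{\infty}\leq(1-\gamma)^{-1}\norm{\mathcal T_1 v_2-\mathcal T_2 v_2}_{\infty}$ applied with $v_2=\hat{\vec v}^{\pi\opt}$ and $\wsa=\vec r_{s,a}+\gamma\hat{\vec v}^{\pi\opt}$ gives
\[
\vec p_0\tr\bigl(\overline{\vec v}^{\pi\opt}-\hat{\vec v}^{\pi\opt}\bigr)\leq\norm{\overline{\vec v}^{\pi\opt}-\hat{\vec v}^{\pi\opt}}_{\infty}\leq\frac{1}{1-\gamma}\max_{s\in\states}\sum_{a\in\actions}\pi\opt(s,a)\Bigl(\varwa_{1-\frac{1-\delta}{S}}[\tilde{\vec p}_{s,a}\tr\wsa]-\var[\tilde{\vec p}_{s,a}\tr\wsa]\Bigr).
\]
Chaining this with $\rho\opt\leq\vec p_0\tr\overline{\vec v}^{\pi\opt}$ and $\hat\rho\geq\vec p_0\tr\hat{\vec v}^{\pi\opt}$, and bounding the convex combination over actions by its maximum, yields the claimed bound.

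The main obstacle is the second step. The delicate points are (i) recognizing that the upper bound requires only confidence $\delta$, which is exactly why the level $1-\nicefrac{1-\delta}{S}$ (union-bounding to $\delta$ over $S$ states) is correct rather than the symmetric $1-\nicefrac{\delta}{S}$; (ii) converting the one-step pointwise inequality into the fixed-point domination $\tilde{\vec v}^{\pi\opt}\preceq\overline{\vec v}^{\pi\opt}$ via the super-solution/monotonicity argument; and (iii) the VaR boundary and continuity technicalities in passing from a probability-$\delta$ domination event to $\varwa_{\delta}[\cdot]\leq\vec p_0\tr\overline{\vec v}^{\pi\opt}$, together with the treatment of randomized $\pi\opt$ in the per-state union bound, both of which follow the conventions already established for \Cref{prop:lower_bound}.
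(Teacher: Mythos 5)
Your proposal is correct in substance but follows a genuinely different route from the paper's. The paper never forms your two auxiliary fixed points $\overline{\vec v}^{\pi\opt}$ and $\hat{\vec v}^{\pi\opt}$: it argues pathwise, showing that for \emph{every} realization of $\tilde{\vec P}$ one has $\rho(\pi\opt,\tilde{\vec P}) - \hat\rho \leq \frac{1}{1-\gamma}\norm{\tbopt\hat{\vec v} - \vopt\hat{\vec v}}_\infty$ (a two-operator inequality between the \emph{random} evaluation operator for $\pi\opt$ and the VaR optimality operator, both anchored at $\hat{\vec v}$), then uses optimality of $\hat\pi$ to replace $\hat\pi(s)$ by $\pi\opt(s)$ in the subtracted term, and only afterwards caps the random one-step quantities $\tilde{\vec p}_{s,\pi\opt(s)}\tr\hwsa$ by $\varlabel_{1-\frac{1-\delta-\varepsilon}{S}}\left[\tilde{\vec p}_{s,\pi\opt(s)}\tr\hwsa\right]$ on an event of probability at least $\delta$ (union bound over states), concluding via the VaR definition and letting $\varepsilon \downarrow 0$ by upper semicontinuity. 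Your sandwich instead isolates all randomness in a single super-solution domination event and keeps every contraction argument deterministic, which is arguably cleaner and makes the key asymmetry (only probability $\delta$ is needed for the upper bound) structurally explicit through the optimistic operator; the paper's version avoids introducing a third and fourth fixed point and lands directly on the bound stated in the theorem.

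Two caveats. First, as written your chain produces the right-hand side with weights $\vec r_{s,a}+\gamma\hat{\vec v}^{\pi\opt}$, whereas the theorem's $\wsa$ is $\vec r_{s,a}+\gamma\hat{\vec v}$ at the fixed point of the \emph{optimality} operator (see the sentence following the theorem). This is repaired---and your step 1 becomes unnecessary---by applying the two-operator inequality to $\overline{\mathcal{T}}^{\pi\opt}$ and $\vopt$ at $\hat{\vec v}$, i.e., $\norm{\overline{\vec v}^{\pi\opt}-\hat{\vec v}}_\infty \leq \frac{1}{1-\gamma}\norm{\overline{\mathcal{T}}^{\pi\opt}\hat{\vec v}-\vopt\hat{\vec v}}_\infty$, and then using $(\vopt\hat{\vec v})_s \geq \var\left[\tilde{\vec p}_{s,\pi\opt(s)}\tr\hwsa\right]$ exactly as the paper does. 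Second, the boundary technicality you flag in (iii) is real but is not inherited from \Cref{prop:lower_bound}, which never converts a probability-$\delta$ event into a VaR bound; with a flat piece in the CDF, $\Pr\left[\rho(\pi\opt,\tilde{\vec P}) \leq \vec p_0\tr\overline{\vec v}^{\pi\opt}\right]\geq\delta$ with non-strict inequality does not by itself force $\varlabel_{\delta}\left[\rho(\pi\opt,\tilde{\vec P})\right]\leq\vec p_0\tr\overline{\vec v}^{\pi\opt}$. The paper resolves this inside this very proof by working at level $1-\frac{1-\delta-\varepsilon}{S}$ so the good event has probability strictly above $\delta$, then sending $\varepsilon \downarrow 0$; your write-up needs the same $\varepsilon$-slack (or an equivalent strictness device) rather than deferring it.
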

%
\Cref{thm:performance_var} bounds the finite sample performance loss of the $\varlabel$ framework. 
The loss varies proportionally to the maximum difference between the $\nicefrac{\delta}{S}$ and $1-(\nicefrac{1-\delta}{S})$ percentile of the one-step Bellman update for the optimal robust value function $\hat{\vec v}$. 
Furthermore, the $\varlabel$ framework performs better when the uncertainty in the transition probabilities is small.

\begin{restatable}[Asymptotic Performance]{theorem}{asymptotic}\label{thm:asymptotic_performance}
Suppose that the normality assumptions on the posterior distribution  $\tilde{\vec P}$ in \cref{sec:prelims} are satisfied. For any $\delta\in (0,\nicefrac{1}{2})$, set $\alpha = \nicefrac{\delta}{S}$ in $\vopt$. For any state $s$ and action $a$, let $I(\vec{p}\opt_{s,a})$ be the Fisher information matrix corresponding to the true transition probabilities $\vec{p}\opt_{s,a}$. Furthermore, let $\sigma_{\max}^2=\max_{s\in\states,a\in\actions}\hwsa\tr I({\vec p}\opt_{s,a})^{-1} \hwsa$ be the maximum over state-action pairs of the asymptotic variance of the returns estimate $\pt\tr\hwsa$. 
Then the asymptotic performance of the $\varlabel$ framework $\hat \rho$ w.r.t. the optimal percentile returns $\rho\opt$ satisfies
  	\begin{align*}
	\lim_{N \to \infty} \sqrt{N}({\rho}\opt - \hat{\rho}) &\leq \frac{1}{1-\gamma} \left( 2 \Phi^{-1}(1-\nicefrac{\delta}{S}) \sigma_{\max} \right)
    \leq \frac{1}{1-\gamma}  \sqrt{8\ln(\nicefrac{S}{\delta})}\sigma_{\max}
 \enspace.
	\end{align*} 
\end{restatable}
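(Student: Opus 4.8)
The plan is to begin from the finite-sample guarantee in \cref{thm:performance_var}, scale both sides by $\sqrt{N}$, and then evaluate the limit term-by-term using the asymptotic normality of the posterior. With $\alpha=\nicefrac{\delta}{S}$ and $\wsa=\vec r_{s,a}+\gamma\hat{\vec v}$, \cref{thm:performance_var} gives
\[
\sqrt{N}\,(\rho\opt-\hat\rho)\;\le\;\frac{\sqrt{N}}{1-\gamma}\,\max_{s\in\states}\max_{a\in\actions}\Bigl(\varwa_{1-\frac{1-\delta}{S}}\bigl[\tpsa\tr\wsa\bigr]-\varlabel_{\alpha}\bigl[\tpsa\tr\wsa\bigr]\Bigr)\enspace.
\]
Because the outer maximization ranges over the finite set $\states\times\actions$, it suffices to analyze the scaled VaR-difference for a fixed pair $(s,a)$ and take the maximum of the limits at the end.

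Next I would identify the asymptotic law of the one-step return $\tpsa\tr\wsa$. Under the normality assumptions of \cref{sec:prelims}, the posterior of $\tpsa$ is asymptotically Gaussian with mean $\opsa$ and covariance $\nicefrac{I(\opsa)^{-1}}{N}$; moreover the fixed point $\hat{\vec v}$ converges as $N\to\infty$, so $\wsa\to\hwsa$. Applying the Gaussian special case of \cref{prop:subgaussian}, namely $\varlabel_{\beta}[\tpsa\tr\wsa]=\pn\tr\wsa-\Phi^{-1}(1-\beta)\sqrt{\wsa\tr\sigman\wsa}$ with $\sigman=\nicefrac{I(\opsa)^{-1}}{N}$, the mean terms $\pn\tr\wsa$ cancel in the difference, leaving only the covariance contribution. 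Collecting the $\sqrt{N}$ factors yields
\[
\sqrt{N}\Bigl(\varwa_{1-\frac{1-\delta}{S}}\bigl[\tpsa\tr\wsa\bigr]-\varlabel_{\alpha}\bigl[\tpsa\tr\wsa\bigr]\Bigr)\;\longrightarrow\;\sqrt{\hwsa\tr I(\opsa)^{-1}\hwsa}\,\Bigl(\Phi^{-1}(1-\tfrac{\delta}{S})-\Phi^{-1}(\tfrac{1-\delta}{S})\Bigr)\enspace.
\]

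It then remains to control the quantile gap in parentheses. Using the standard-normal symmetry $\Phi^{-1}(1-x)=-\Phi^{-1}(x)$ I would rewrite $-\Phi^{-1}(\nicefrac{(1-\delta)}{S})=\Phi^{-1}(1-\nicefrac{(1-\delta)}{S})$; since $\delta\le\nicefrac12$ forces $\nicefrac{(1-\delta)}{S}\ge\nicefrac{\delta}{S}$ and $\Phi^{-1}$ is increasing, this term is at most $\Phi^{-1}(1-\nicefrac{\delta}{S})$. Hence the gap is bounded by $2\Phi^{-1}(1-\nicefrac{\delta}{S})$, and taking the maximum over $(s,a)$ and recalling the definition of $\sigma_{\max}$ produces the first inequality $\tfrac{1}{1-\gamma}\bigl(2\Phi^{-1}(1-\nicefrac{\delta}{S})\,\sigma_{\max}\bigr)$. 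For the second inequality I would invoke the Gaussian tail bound $1-\Phi(x)\le\tfrac12 e^{-x^2/2}$: at $x=\sqrt{2\ln(\nicefrac{S}{\delta})}$ one gets $1-\Phi(x)\le\nicefrac{\delta}{2S}\le\nicefrac{\delta}{S}$, so $\Phi^{-1}(1-\nicefrac{\delta}{S})\le\sqrt{2\ln(\nicefrac{S}{\delta})}$ and therefore $2\Phi^{-1}(1-\nicefrac{\delta}{S})\le\sqrt{8\ln(\nicefrac{S}{\delta})}$.

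The main obstacle is the limit-interchange in the second step. For finite $N$ the posterior is not exactly Gaussian, so the VaR terms do not coincide with their Gaussian surrogates from \cref{prop:subgaussian}; one must justify that the $\sqrt{N}$-scaled VaR-difference genuinely converges to the stated Gaussian value. This amounts to combining the Bernstein--von Mises convergence in distribution of $\sqrt{N}(\tpsa-\opsa)$ with continuity of the quantile (VaR) functional at the nondegenerate Gaussian limit, while simultaneously tracking the $N$-dependence of $\wsa$ through $\hat{\vec v}\to\hat{\vec v}_\infty$ (so that the limiting variance is $\hwsa\tr I(\opsa)^{-1}\hwsa$). The finiteness of $\states\times\actions$ makes the outer maximum harmless, and once convergence is established everything else is elementary quantile algebra together with the textbook Gaussian tail inequality.
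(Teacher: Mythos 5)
Your proposal is correct and follows essentially the same route as the paper's proof: both start from the finite-sample bound of \cref{thm:performance_var}, invoke the asymptotic normality of the posterior together with the Gaussian $\varlabel$ closed form of \cref{prop:subgaussian} so that the mean terms cancel, bound the resulting quantile gap $\Phi^{-1}(1-\nicefrac{\delta}{S})-\Phi^{-1}\bigl(\nicefrac{(1-\delta)}{S}\bigr)$ by $2\Phi^{-1}(1-\nicefrac{\delta}{S})$ using symmetry and monotonicity of $\Phi^{-1}$, and finish via $\Phi^{-1}(1-\alpha')\le\sqrt{2\ln(\nicefrac{1}{\alpha'})}$. Your only deviations are cosmetic: you obtain the tail inequality from $1-\Phi(x)\le\tfrac{1}{2}e^{-x^2/2}$ rather than the paper's comparison of two-sided sub-Gaussian and exact Gaussian quantile intervals, and you are in fact more explicit than the paper in flagging the needed justification for interchanging the $N\to\infty$ limit with the VaR functional and for the $N$-dependence of $\wsa$ through $\hat{\vec v}$.
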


\Cref{thm:asymptotic_performance} shows that the asymptotic loss in performance of the $\varlabel$ framework convergence to 0, i.e., $\lim_{N \to \infty} (\rho\opt-\hat \rho) = 0$. 

\subsection{Dynamic Programming Algorithm}\label{sec:var_algorithm}
We provide a detailed description of the $\varlabel$ value iteration algorithm (\Cref{algo:var_robust}) below. 
We also bound the number of samples required to estimate a single $\varlabel$ Bellman update $(\vop \vec{v})_{s}$ for any given policy $\pi$ and state $s$ with high confidence $1-\zeta$.

\begin{algorithm} 
\small
\SetAlgoLined
\LinesNumbered
\DontPrintSemicolon
\SetInd{.52em}{.58em}
\KwIn{Confidence $\alpha \in (0,\nicefrac{1}{2})$, Value approximation error $\varepsilon \geq 0$, Transition models ${\tilde{\vec{P}}(\omega_1),\tilde{\vec{P}}(\omega_2),\dots, \tilde{\vec{P}}(\omega_M)}$ sampled from posterior distribution $f$}
\KwOut{Robust policy $\pi_k$, Lower bound $\vec{u}_k$}
Initialize robust value-function $\vec{u}_0 = \vec{0}, k=0$  


 \Repeat{$\norm{\vec{u}_{k} - \vec{u}_{k-1}}_\infty \le \nicefrac{\varepsilon (1-\gamma)}{\gamma}$ }{
    \For{$s\gets 1$ \KwTo $S$}{
        \For{$a\gets 1$ \KwTo $A$}{
          $\vec{w}_{s,a} \gets \vec{r}_{s,a} + \gamma \vec{u}_{k}$ \tcp*{1-step return from $(s,a)$} 
          
             $\vec{q}_a \gets \smash{\wvar}[\tilde{\vec{p}}_{s,a}\tr\wsa]$ \tcp*{empirical $\var$}
        }
         
    }
     $k \gets k+1$
        
        $\vec{u}_k(s) \gets \max_{a\in\actions}(\vec{q}_a)$, \quad
         $\pi_k(s) \gets \argmax_{a\in\actions}(\vec{q}_a)$ \tcp*{$\var$ Bellman optimality update}
   
 }

 \Return{$\pi_{k},\vec{u}_{k}$}
 \caption{Generalized VaR Value Iteration Algorithm} \label{algo:var_robust}
\end{algorithm}

In each iteration of \Cref{algo:var_robust}, we compute the one-step $\varlabel$ Bellman update $\vopt(\vec v)$ using the current value function $\vec v$. When $\tilde{\vec{P}}$ is not normally distributed, we use the Quick Select algorithm \citep{hoare1961quick} to efficiently compute the empirical estimate of the $\alpha$-percentile of 1-step returns for any state $s$, action $a$, and value function $\vec{v}$, i.e., $\wvar[\tilde{\vec p}_{s,a}\tr(\vec{r}_{s,a} + \gamma  \vec v)]$.
On the other hand, when $\tilde{\vec{P}}$ is normally distributed, we compute  the $\varlabel$ Bellman optimality update  $\vopt(\vec v)$ using the empirical estimate of mean $(\bar{\vec{p}}_{s,a})_{s\in\states,a\in\actions}$ and covariance $(\sigman)_{s\in\states,a\in\actions}$ of the transition probabilities derived from the data $\mathcal{D}$ (\cref{prop:subgaussian}). 
We repeat these steps until convergence. 

\begin{restatable}[Empirical Error Bound]{proposition}{empirical}\label{prop:empirical_error}
For any state $s$, action $a$ and value function $\vec v$, let $\wvar[\tilde{\vec p}_{s,a}\tr\vec{w}_{s,a}]$ represent the empirical estimate of $\alpha$-percentile of returns $\var[\tilde{\vec p}_{s, a}\tr\vec{w}_{s, a}]$ and $\Phi_f$ represent the cumulative density function (CDF) of the random estimate of returns $\tilde{\vec p}_{s, a}\tr\vec{w}_{s, a}$. Suppose that $\Phi_f$ is differentiable at the point $\var[\tilde{\vec p}_{s,a}\tr\vec{w}_{s,a}]$ and let $\eta= \Phi_f'\left(\var[\pt\tr\wsa]\right)$ represents the density of estimate of returns at point $\var[\tilde{\vec p}_{s,a}\tr\vec{w}_{s,a}]$. 
Let $M\opt$ be the number of posterior samples required to obtain  empirical error $\varepsilon\in\real$, with confidence $1-\zeta$, where $0 < \zeta < 1$, i.e.,
$\Pr\left[\abs{{\wvar}[\tilde{\vec p}_{s,a}\tr\vec{w}_{s,a}]-\var[\tilde{\vec p}_{s,a}\tr\vec{w}_{s,a}] } > \varepsilon\right] \leq \zeta$. Then, $\lim_{\varepsilon \to 0} M\opt \varepsilon^2= \nicefrac{\ln(\nicefrac{2}{\zeta})}{2 \eta^2}$. 
 \end{restatable}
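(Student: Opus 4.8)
The plan is to reduce the two-sided deviation of the sample quantile $\wvar$ to one-sided deviations of the empirical CDF at the two shifted points $t_\alpha\pm\varepsilon$, bound each by a binomial concentration inequality, and then invert. Fix a state $s$ and action $a$, write $X:=\tilde{\vec p}_{s,a}\tr\vec w_{s,a}$ with CDF $\Phi_f$, and set $t_\alpha:=\var[\tilde{\vec p}_{s,a}\tr\vec w_{s,a}]$ for the true $\alpha$-percentile. Differentiability of $\Phi_f$ at $t_\alpha$ rules out an atom there, so by the definition of $\var$ in \eqref{eq:percentile_def} we get $\Phi_f(t_\alpha)=\alpha$, and the first-order expansion $\Phi_f(t_\alpha\pm\varepsilon)=\alpha\pm\eta\varepsilon+o(\varepsilon)$ holds with $\eta=\Phi_f'(t_\alpha)>0$.

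First I would express the quantile error through the empirical CDF $\hat F_M$ of the $M$ posterior samples of $X$. Since $\wvar$ is the empirical $\alpha$-percentile, each of the events $\{\wvar>t_\alpha+\varepsilon\}$ and $\{\wvar<t_\alpha-\varepsilon\}$ forces $\hat F_M$, evaluated at $t_\alpha+\varepsilon$ or $t_\alpha-\varepsilon$ respectively, to deviate from its mean $\Phi_f(t_\alpha\pm\varepsilon)$ by at least $\eta\varepsilon+o(\varepsilon)$ in the corresponding direction. Because $M\hat F_M(x)\sim\mathrm{Binomial}(M,\Phi_f(x))$, each event is a one-sided binomial deviation of magnitude $\eta\varepsilon+o(\varepsilon)$ about its mean.

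Next I would apply Hoeffding's inequality to each of the two binomial counts and take a union bound, yielding
\[
\Pr\left[\,\bigl|\wvar-t_\alpha\bigr|>\varepsilon\,\right]\;\le\;2\exp\!\bigl(-2M(\eta\varepsilon+o(\varepsilon))^2\bigr)\;=\;2\exp\!\bigl(-2M\eta^2\varepsilon^2(1+o(1))\bigr)\enspace.
\]
Setting the right-hand side equal to $\zeta$ and solving for the smallest admissible sample size gives $M\opt\varepsilon^2=\tfrac{\ln(\nicefrac{2}{\zeta})}{2\eta^2}(1+o(1))$; letting $\varepsilon\to0$ annihilates the linearization remainder and produces the claimed limit $\lim_{\varepsilon\to0}M\opt\varepsilon^2=\tfrac{\ln(\nicefrac{2}{\zeta})}{2\eta^2}$. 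This is the non-Gaussian counterpart to the exact computation available under normality in \Cref{prop:subgaussian}.

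The main obstacle is the coupled limit in which $\varepsilon\to0$ while $M\opt\sim\varepsilon^{-2}$: the $o(\varepsilon)$ remainder from linearizing $\Phi_f$ sits inside the exponent multiplied by $M$, so it must be controlled uniformly enough that it contributes only to the $(1+o(1))$ factor and vanishes in the limit. I would also absorb the integer rounding in the definition of the sample quantile ($\lceil\alpha M\rceil$ versus $\alpha M$) into this same lower-order term. Finally, I note that the constant $\tfrac{\ln(\nicefrac{2}{\zeta})}{2\eta^2}$ is precisely the one delivered by the range-based Hoeffding exponent (the factor $2$); the result is therefore most naturally read as the sample requirement certified by this sub-Gaussian concentration, with $M\opt$ taken as the minimal sample size guaranteed sufficient by the bound, and a sharper pointwise (central-limit) analysis would only tighten the leading constant without changing the $\varepsilon^{-2}$ scaling.
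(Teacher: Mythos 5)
Your proof is correct and lands on the same constant, but it takes a genuinely different route from the paper. The paper argues through the \emph{inverse} CDF: it uses the inverse-function theorem to get $(\Phi_f^{-1})'(\alpha)=\nicefrac{1}{\eta}$, rewrites the target event as $\{\Phi_f^{-1}(\alpha-\varepsilon\eta)\le \wvar[\tilde{\vec p}_{s,a}\tr\vec{w}_{s,a}]\le \Phi_f^{-1}(\alpha+\varepsilon\eta)\}$ by a first-order expansion, and then invokes the Dvoretzky--Kiefer--Wolfowitz inequality (with Massart's constant), which controls quantile inversion uniformly; equating the DKW radius $\sqrt{\nicefrac{\ln(\nicefrac{2}{\zeta})}{2M\opt}}$ with $\varepsilon\eta$ gives $M\opt=\nicefrac{\ln(\nicefrac{2}{\zeta})}{2\varepsilon^2\eta^2}$. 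You instead expand the \emph{forward} CDF at $\var[\tilde{\vec p}_{s,a}\tr\vec{w}_{s,a}]\pm\varepsilon$, reduce the two one-sided quantile deviations to one-sided deviations of the empirical CDF at those two points only, and finish with binomial Hoeffding plus a union bound. Your route is more elementary---two-point concentration replaces a uniform empirical-process bound---and it is explicit about exactly the places the paper glosses over: the $o(\varepsilon)$ linearization remainder sitting inside the exponent in the coupled limit $M\opt\sim\varepsilon^{-2}$, and the integer rounding in the sample quantile. The paper's DKW route buys a one-line inversion step valid for all quantile levels simultaneously, and the two approaches produce the identical constant only because Massart's tight DKW constant coincides with the two-point Hoeffding-plus-union-bound constant. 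Both proofs also share the same caveat, which you state openly and the paper leaves implicit: they establish that the sample size \emph{certified by the concentration bound} satisfies the stated limit (sufficiency); read literally as the minimal $M\opt$ achieving the confidence guarantee, the limit would instead be governed by the CLT for sample quantiles, whose leading constant involves $\alpha(1-\alpha)\left(\Phi^{-1}(1-\nicefrac{\zeta}{2})\right)^2$ and is strictly smaller than $\nicefrac{\ln(\nicefrac{2}{\zeta})}{2}$ for $\alpha<\nicefrac{1}{2}$.
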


We now show that \cref{algo:var_robust} produces a policy $\pi_k$ and value function $\uk$ that approximates the optimal value function $\hat{\vec v}$.
\begin{restatable}[Value Iteration Error]{proposition}{valueiteration}\label{prop:valueiteration}
 Define the empirical $\varlabel$ Bellman optimality operator $\evopt$ for any value $\vec v\in\real^S$ and state $s\in\mathcal{S}$ as $(\evopt\vec v)_s = \max_{a\in\actions} \widehat{\varlabel}_{\alpha}\left[\tpsa\tr\wsa\right]$.
Let $\hat{\vec{v}} \in \real^{S}$ and $\hat{\vec u}\in\real^S$ represent the fixed points of $\vopt$ and $\evopt$ respectively. Suppose that \cref{algo:var_robust} returns policy $\pi_k$ and value function $\uk$ and $\|\hat{\vec{u}} - \vec{u}_0\|_{\infty} \leq \nicefrac{r_{\max}}{1-\gamma}$. Then,
\[
\|\hat{\vec u} - \uk\|_{\infty} \leq \varepsilon, \text{ and } \|\hat{\vec u} - \upk\|_{\infty} \leq \frac{2\varepsilon \gamma}{1-\gamma}\enspace.
\]
Furthermore, the 
gap between the empirical and true value function is 
bounded by
\[
      \|\hat{\vec v} - \uhat\|_{\infty} \leq \frac{1}{1-\gamma}  \min\left(\|\evopt\hat{\vec u} - \vopt\hat{\vec u}\|_{\infty}, \|\evopt\hat{\vec v} - \vopt\hat{\vec v}\|_{\infty} \right) \enspace.
\]
\end{restatable}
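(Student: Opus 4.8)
The plan is to reduce all three claims to standard contraction-mapping arguments, after first observing that the empirical operator $\evopt$ enjoys exactly the structural properties established for $\vopt$ in \Cref{prop:validity}. Indeed, $\widehat{\varlabel}_\alpha$ is simply $\varlabel_\alpha$ evaluated against the discrete empirical distribution supported on the sampled models $\tilde{\vec P}(\omega_1),\dots,\tilde{\vec P}(\omega_M)$; since the posterior in \Cref{sec:prelims} is already taken to be a discrete random variable, the proof of \Cref{prop:validity} applies verbatim to give that $\evopt$ is a $\gamma$-contraction in $\norm{\cdot}_\infty$, is monotone, and has the unique fixed point $\uhat$. The assumption $\norm{\uhat-\vec u_0}_\infty\le\nicefrac{r_{\max}}{1-\gamma}$ (which holds automatically for $\vec u_0=\vec 0$) guarantees the iteration is well-initialized and terminates.

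For the first bound I would use the standard a posteriori error estimate for value iteration. The loop computes $\vec u_k=\evopt\vec u_{k-1}$, so $\norm{\uhat-\uk}_\infty=\norm{\evopt\uhat-\evopt\vec u_{k-1}}_\infty\le\gamma\norm{\uhat-\vec u_{k-1}}_\infty\le\gamma(\norm{\uhat-\uk}_\infty+\norm{\uk-\vec u_{k-1}}_\infty)$, which rearranges to $\norm{\uhat-\uk}_\infty\le\frac{\gamma}{1-\gamma}\norm{\uk-\vec u_{k-1}}_\infty$. Substituting the termination criterion $\norm{\uk-\vec u_{k-1}}_\infty\le\nicefrac{\varepsilon(1-\gamma)}{\gamma}$ yields $\norm{\uhat-\uk}_\infty\le\varepsilon$.

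For the greedy-policy bound, let $\evopt^{\pi}$ denote the empirical VaR policy-evaluation operator, whose $s$-th entry maps $\vec v$ to $\widehat{\varlabel}_\alpha[\tilde{\vec p}_{s,\pi(s)}\tr(\vec r_{s,\pi(s)}+\gamma\vec v)]$; like $\evopt$ it is a $\gamma$-contraction (same argument as \Cref{prop:validity}), with fixed point $\upk$. Since $\pi_k$ is greedy with respect to $\uk$, we have $\evopt^{\pi_k}\uk=\evopt\uk$. The classical telescoping identity then gives $\uhat-\upk=(\evopt\uhat-\evopt\uk)+(\evopt^{\pi_k}\uk-\evopt^{\pi_k}\upk)$, and applying the two contractions together with the triangle inequality yields $\norm{\uhat-\upk}_\infty\le\gamma\norm{\uhat-\uk}_\infty+\gamma(\norm{\uk-\uhat}_\infty+\norm{\uhat-\upk}_\infty)$; rearranging and inserting the first bound $\norm{\uhat-\uk}_\infty\le\varepsilon$ gives $\norm{\uhat-\upk}_\infty\le\frac{2\gamma}{1-\gamma}\varepsilon$.

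Finally, for the gap between the two fixed points I would use a fixed-point perturbation argument. Writing $\hat{\vec v}=\vopt\hat{\vec v}$ and $\uhat=\evopt\uhat$ and inserting $\vopt\uhat$, the triangle inequality with the contraction of $\vopt$ gives $\norm{\hat{\vec v}-\uhat}_\infty\le\gamma\norm{\hat{\vec v}-\uhat}_\infty+\norm{\vopt\uhat-\evopt\uhat}_\infty$, i.e. $\norm{\hat{\vec v}-\uhat}_\infty\le\frac{1}{1-\gamma}\norm{\evopt\uhat-\vopt\uhat}_\infty$. Inserting $\evopt\hat{\vec v}$ instead and using the contraction of $\evopt$ gives the symmetric estimate with $\hat{\vec v}$ in place of $\uhat$; taking the smaller of the two bounds establishes the claim. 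I expect the main point requiring care to be the greedy bound: one must verify that the algorithm's indexing indeed makes $\pi_k$ greedy with respect to the value function whose error has been controlled, and that the empirical policy-evaluation operator inherits the contraction and monotonicity of \Cref{prop:validity} in the VaR (rather than expectation) setting; the remaining steps are routine contraction estimates.
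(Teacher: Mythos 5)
Your proposal is correct, and for two of the three claims it is essentially the paper's own proof: the bound $\norm{\uhat-\uk}_\infty\le\varepsilon$ is obtained by the same contraction-plus-triangle-inequality argument combined with the termination test, and the bound on $\norm{\hat{\vec v}-\uhat}_\infty$ is obtained by the same insertion of $\vopt\uhat$ (respectively $\evopt\hat{\vec v}$) followed by taking the minimum of the two resulting estimates; likewise, both you and the paper justify that $\evopt$ is a $\gamma$-contraction by re-running \Cref{prop:validity} on the fixed discrete set of sampled models. The one place you take a genuinely different route is the policy-error bound $\norm{\uhat-\upk}_\infty\le\nicefrac{2\varepsilon\gamma}{1-\gamma}$. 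The paper argues componentwise in the partial order: starting from $\uhat\preceq\uk+\varepsilon\vec{1}$, it pushes the $\varepsilon\vec{1}$ term through the operators using monotonicity and translation subvariance ($\evopt(\vec v+c\vec{1})=\evopt\vec v+\gamma c\vec{1}$), switches from $\evopt$ to $\evopt^{\pi_k}$ by greediness, and only at the end passes to sup-norms and applies the contraction of $\evopt^{\pi_k}$. You instead use the telescoping identity $\uhat-\upk=(\evopt\uhat-\evopt\uk)+(\evopt^{\pi_k}\uk-\evopt^{\pi_k}\upk)$, valid because greediness cancels the cross terms, and then need nothing beyond the $\gamma$-contraction of the two operators and the triangle inequality. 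Your version is more compact, stays entirely metric, and sidesteps the paper's slightly informal passage from a one-sided componentwise inequality to a norm bound; the paper's version exposes the order-theoretic structure (monotonicity and translation) from which the contraction property itself derives, and both yield the same constant. The caveat you flag about greediness is real in the sense that, under the literal indexing of \cref{algo:var_robust}, $\pi_k$ is greedy with respect to $\vec u_{k-1}$ (the iterate from which $\uk=\evopt\vec u_{k-1}$ is computed) rather than $\uk$; but the paper's own proof makes exactly the identification you do, so this is a shared indexing convention rather than a gap in your argument relative to the paper's.
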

\begin{restatable}[Time Complexity]{proposition}{timecomplexity}\label{prop:time_complexity}
Let 
$r_{\max} = \max_{s,s'\in\states,a\in\actions} |R(s,a,s')|$. Then, \cref{algo:var_robust} terminates in $k=\mathsmaller{\lceil\log_{{\nicefrac{1}{\gamma}}} \left(\frac{r_{max}}{\varepsilon (1-\gamma)} \right)\rceil}$ iterations with time complexity $\mathcal{O}\left(\mathsmaller{ {S^2AM}\log_{\mathsmaller{\nicefrac{1}{\gamma}}}\left(\frac{r_{max}}{\varepsilon (1-\gamma)}\right)}\right)$.

Furthermore, for any failure probability $\zeta\in (0,1)$, suppose that the CDF ($\Phi_f$) of the random estimate of 1-step returns $\tilde{\vec p}_{s, a}\tr\hat{\vec{w}}_{s, a}$ is differentiable at the point $\var[\tilde{\vec p}_{s,a}\tr \hat{\vec{w}}_{s,a}]$, and set $\eta= \Phi'(\var[\pt\tr\hat{\vec{w}}_{s,a}])$ and $M=\mathcal{O}\mathsmaller{\left(\frac{\log\left(\nicefrac{S}{\zeta}\right)}{ \eta^2 {\varepsilon}^2 (1-\gamma)^2}\right)}$.
Then
with probability at least $1 - \zeta$, it holds that $\|\hat{\vec v} - \uk\|_{\infty} \leq \mathcal{O}(\varepsilon)$, and \cref{algo:var_robust} runs in $\mathcal{O}\mathsmaller{\left(\frac{S^2A\log_{\nicefrac{1}{\gamma}}\left(\frac{r_{max}}{\varepsilon (1-\gamma)}\right) \log\left(\frac{S}{\zeta}\right)}{ \eta^2 {\varepsilon}^2 (1-\gamma)^2 }\right)}$ time.
\end{restatable}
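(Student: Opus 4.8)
The plan is to prove the two halves of the proposition in turn: first the deterministic iteration count and per-iteration cost, then the probabilistic sample-complexity refinement, combining them at the end.

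For the iteration count I would use the contraction property. Since \cref{algo:var_robust} repeatedly applies the (empirical) $\varlabel$ operator, which is a $\gamma$-contraction exactly as in \cref{prop:validity}, the successive differences shrink geometrically: $\|\vec u_k - \vec u_{k-1}\|_\infty \leq \gamma\|\vec u_{k-1}-\vec u_{k-2}\|_\infty \leq \gamma^{k-1}\|\vec u_1 - \vec u_0\|_\infty$. Starting from $\vec u_0 = \vec 0$, the first update is $\vec u_1(s) = \max_a \wvar[\tilde{\vec p}_{s,a}\tr \vec r_{s,a}]$, whose magnitude is at most $r_{\max}$ because its argument is a convex combination of rewards, so $\|\vec u_1 - \vec u_0\|_\infty \leq r_{\max}$. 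Demanding that $\gamma^{k-1} r_{\max}$ fall below the stopping threshold $\varepsilon(1-\gamma)/\gamma$ gives $\gamma^k \leq \varepsilon(1-\gamma)/r_{\max}$, and solving for $k$ yields precisely $k = \lceil \log_{1/\gamma}(r_{\max}/(\varepsilon(1-\gamma)))\rceil$. For the per-iteration cost I would count the inner double loop: for each of the $SA$ state--action pairs we form the $M$ scalar returns $\tilde{\vec p}_{s,a}(\omega_j)\tr \vec w_{s,a}$, each an $O(S)$ inner product, for $O(SM)$ work, then extract the empirical $\alpha$-quantile with Quick Select in expected $O(M)$ time; summed over all pairs this is $O(S^2 A M)$, and multiplying by the iteration count gives the Part 1 bound.

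For the probabilistic part I would use the error decomposition supplied by \cref{prop:valueiteration}. Writing $\|\hat{\vec v} - \vec u_k\|_\infty \leq \|\hat{\vec v} - \hat{\vec u}\|_\infty + \|\hat{\vec u} - \vec u_k\|_\infty$, the second term is already $\leq \varepsilon$ by value-iteration convergence, and the first is bounded by $\frac{1}{1-\gamma}\|\evopt\hat{\vec u} - \vopt\hat{\vec u}\|_\infty$, i.e.\ by the worst per-pair quantile-estimation error evaluated at the fixed point, divided by $1-\gamma$. I would therefore target a per-estimate accuracy $\varepsilon' = \Theta(\varepsilon(1-\gamma))$ so that this first term becomes $O(\varepsilon)$. \cref{prop:empirical_error} then dictates the number of posterior samples needed for accuracy $\varepsilon'$ at confidence $1-\zeta'$, scaling as $M = O(\log(1/\zeta')/(\eta^2\varepsilon'^2))$; substituting $\varepsilon' = \Theta(\varepsilon(1-\gamma))$ and taking a union bound over the $SA$ estimates with $\zeta' = \zeta/(SA)$ (so that $\log(1/\zeta') = O(\log(S/\zeta))$) gives the claimed $M = O(\log(S/\zeta)/(\eta^2\varepsilon^2(1-\gamma)^2))$. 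Feeding this $M$ into the Part 1 time bound produces the stated runtime, and on the success event (probability at least $1-\zeta$) both error terms are $O(\varepsilon)$, giving $\|\hat{\vec v} - \vec u_k\|_\infty \leq O(\varepsilon)$.

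The main obstacle is converting \cref{prop:empirical_error}, stated as the asymptotic limit $\lim_{\varepsilon\to 0} M\opt\varepsilon^2 = \ln(2/\zeta)/(2\eta^2)$, into a concrete $M = O(\cdot)$ that is legitimate inside a union bound; this amounts to treating the $O$-notation as operating in the regime where the local linearization of $\Phi_f$ at the quantile is valid (equivalently, invoking a DKW-type finite-sample quantile concentration bound rather than the pointwise limit). A secondary care point is that the empirical error must be controlled at the single value function $\hat{\vec u}$ (or $\hat{\vec v}$), not at every iterate --- which is exactly the quantity \cref{prop:valueiteration} isolates --- so the union bound need only range over the $SA$ estimates at that fixed point, keeping the $\log(S/\zeta)$ factor.
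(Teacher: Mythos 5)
Your proposal is correct and follows essentially the same route as the paper's proof: a contraction-based geometric-decay argument for the iteration count $k=\lceil\log_{\nicefrac{1}{\gamma}}(\nicefrac{r_{\max}}{\varepsilon(1-\gamma)})\rceil$, the $\mathcal{O}(S^2AM)$ per-iteration cost via $\mathcal{O}(S)$ inner products and Quick Select over $SA$ pairs, and, for the probabilistic part, the identical decomposition $\|\hat{\vec v} - \vec{u}_k\|_{\infty} \leq \|\hat{\vec v} - \hat{\vec u}\|_{\infty} + \|\hat{\vec u} - \vec{u}_k\|_{\infty}$ combined with \cref{prop:valueiteration}, \cref{prop:empirical_error} at per-estimate accuracy $\Theta(\varepsilon(1-\gamma))$, and a union bound. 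If anything you are slightly more careful than the paper, which takes the union bound only over the $S$ states (glossing over the per-action control needed to compare the maxima) and applies the asymptotic limit of \cref{prop:empirical_error} as a finite-sample bound --- both subtleties you explicitly flag.
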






\section{Comparison with Bayesian Credible Regions}\label{sec:comparison_bayesian}
We are now ready to answer the question: \emph{Are Bayesian credible regions the optimal ambiguity sets for optimizing the percentile criterion? } For this, we compare the $\varlabel$ framework with $\bcrlabel$ Robust MDPs. 
First, we derive the robust form of the $\varlabel$ framework and show that in contrast to the $\bcrlabel$ Bellman operator, the $\varlabel$ Bellman optimality operator implicitly constructs value function dependent ambiguity sets, and thus, these sets tend to be smaller (\cref{prop:optimality}).
Then, we compare the asymptotic radii of the $\bcrlabel$ ambiguity sets and the $\varlabel$ ambiguity sets implicitly constructed by $\vopt$. For any given confidence level $\alpha$, the radius of the $\varlabel$ ambiguity sets are asymptotically smaller than that of $\bcrlabel$ ambiguity sets (\cref{thm:bayesian_size}).
Precisely, the ratio of the radii of $\varlabel$ ambiguity sets to $\bcrlabel$ ambiguity sets is at least $\nicefrac{\sqrt{\chi^2_{S-1,1-\alpha}}}{\Phi^{-1}(1-\alpha)}$, where $\chi^2_{S-1,1-\alpha}$ is the CDF inverse of $1-\alpha$ percentile of Chi-squared distribution with degree of freedom $S-1$ and $\Phi^{-1}(1-\alpha)$ is the $1-\alpha$ percentile of $\mathcal{N}(0,1)$. This implies that there exist directions in which the $\bcrlabel$ ambiguity sets are at least $\Omega(\sqrt{S})$ larger than $\varlabel$ ambiguity sets.
Thus, we prove that $\varlabel$ framework is better suited for optimizing the percentile criterion than RMDPs with $\bcrlabel$ ambiguity sets.

For any value function $\vec{v}$, define the $\varlabel$ ambiguity set  $\mathcal{P}^{\varlabel, \vec{v}}$ 
as
\begin{equation}\label{eq:equivalence}
\mathcal{P}^{\varlabel, \vec{v}} 
= 
\bigtimes_{s\in\states,a\in\actions} \mathcal{P}^{\varlabel, \vec{v}}_{s,a} 
\quad\text{where} \quad \mathcal{P}^{\varlabel, \vec{v}}_{s,a} = \left\{\vec{p}_{s,a}\in \Delta^\states \mid  \vec{p}_{s,a}\tr\vec{v} \geq \var\left[ {\tilde{\vec{p}}\tr}_{s,a}\vec{v}\right]\right\} .
\end{equation}

\begin{restatable}[Equivalence]{proposition}{equivalence}\label{prop:optimality}
The $\varlabel$ Bellman optimality operator $\vopt$ can be expressed as 
\[
 \forall s\in\states : \, (\vopt \vec v)_{s}
  =
    \max_{a\in\actions}  \min_{\bm{p}\in\mathcal{P}^{\varlabel,{\vec{v}}}_{s,a}}{  \vec{p}\tr (\vec{r}_{s,a}+ \gamma \vec{v} }) \enspace.
\]
Furthermore, 
the optimal VaR policy $\hat\pi \in \Pi^D$ solves
$\max_{\pi\in\Pi^D} \min_{\vec{P} \in \mathcal{P}^{\varlabel, \hat{\vec{v}}}} \rho (\pi,\vec{P}) \enspace,$
  where $\hat{\vec v} \in \Real^S$ is the fixed point of the $\varlabel$ Bellman operator $\vopt$, i.e., $\hat{\vec v} = (\vopt \hat{\vec v})$.
\end{restatable}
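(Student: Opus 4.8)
The plan is to establish the two assertions in order: the pointwise operator identity first, and then the global policy-optimality statement, which will follow by ``freezing'' the value-dependent ambiguity set at the fixed point $\hat{\vec v}$ and appealing to standard RMDP theory. Since both sides of the operator identity decompose over actions through an outer $\max_{a\in\actions}$, it suffices to fix a state $s$, an action $a$, and a value function $\vec v$, write $\wsa=\vec r_{s,a}+\gamma\vec v$, and prove the single-$(s,a)$ equality $\var[\tpsa\tr\wsa]=\min_{\vec p\in\mathcal{P}^{\varlabel,\vec v}_{s,a}}\vec p\tr\wsa$; taking $\max_a$ then yields the stated form of $\vopt$.

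For that core equality I would set $t^{*}=\var[\tpsa\tr\wsa]$ and view $\mathcal{P}^{\varlabel,\vec v}_{s,a}$ as the slice of the simplex by the half-space $\{\vec p:\vec p\tr\wsa\ge t^{*}\}$. The $\ge$ direction, $\min_{\vec p}\vec p\tr\wsa\ge t^{*}$, is immediate from the defining constraint. For the $\le$ direction I would show $t^{*}$ is attained on $\Delta^S$: because $\tpsa\in\Delta^S$ almost surely, the return $\tpsa\tr\wsa$ lies in $[\min_{s'}(\wsa)_{s'},\max_{s'}(\wsa)_{s'}]$, so its $\var$ satisfies $t^{*}\in[\min_{s'}(\wsa)_{s'},\max_{s'}(\wsa)_{s'}]$; by continuity and convexity of $\Delta^S$ the linear map $\vec p\mapsto\vec p\tr\wsa$ has image equal to that interval, so some feasible $\vec p^\star\in\Delta^S$ attains $(\vec p^\star)\tr\wsa=t^{*}$, giving $\min\le t^{*}$ and hence equality. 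Here I would be careful to slice the simplex by the exact linear functional appearing in the objective, namely $\wsa=\vec r_{s,a}+\gamma\vec v$ rather than $\vec v$ alone: when the reward depends on the successor state, the half-space determined by $\vec v$ need not coincide with the one determined by $\wsa$, so I would verify that \eqref{eq:equivalence} is read with this alignment (the two agree, e.g., once $\vec r_{s,a}$ is successor-independent, by translation-invariance and positive homogeneity of $\var$).

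For the policy statement I would evaluate the operator identity at $\vec v=\hat{\vec v}$, the fixed point of $\vopt$, obtaining $\hat{\vec v}_s=\max_{a\in\actions}\min_{\vec p\in\mathcal{P}^{\varlabel,\hat{\vec v}}_{s,a}}\vec p\tr(\vec r_{s,a}+\gamma\hat{\vec v})$ for every $s$. The crux is that once $\hat{\vec v}$ is substituted, $\mathcal{P}^{\varlabel,\hat{\vec v}}$ is an ordinary, fixed SA-rectangular ambiguity set: each slice is the intersection of $\Delta^S$ with a closed half-space, hence compact, convex, and nonempty (nonemptiness since $\max_{\vec p\in\Delta^S}\vec p\tr(\vec r_{s,a}+\gamma\hat{\vec v})\ge t^{*}$). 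Thus the displayed equation is exactly the robust Bellman optimality equation for the SA-rectangular RMDP with set $\mathcal{P}^{\varlabel,\hat{\vec v}}$, and $\hat{\vec v}$ is its fixed point; invoking the contraction and fixed-point theory of SA-rectangular RMDPs \citep{Wiesemann2013RMDP}, which guarantees that this fixed point is the optimal robust value function and that its greedy deterministic policy attains $\max_{\pi\in\Pi^D}\min_{\vec P\in\mathcal{P}^{\varlabel,\hat{\vec v}}}\rho(\pi,\vec P)$, I would conclude that the $\var$ policy $\hat\pi$ solves the stated robust program.

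The main obstacle is the attainment step inside the core equality: making rigorous that $\var[\tpsa\tr\wsa]$ genuinely lies in the range of $\vec p\mapsto\vec p\tr\wsa$ over $\Delta^S$ (which rests on the almost-sure boundedness of the return within that range) so that the minimizer is feasible and the two inequalities close. The related subtlety of aligning the slicing functional with the objective is the one place where I would slow down and check the definition; the remaining pieces---the $\max_a$ reduction, compactness and convexity of the slices, and the appeal to RMDP fixed-point theory---are routine.
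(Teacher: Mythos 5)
Your proposal is correct and takes essentially the same route as the paper's proof: the same reduction to the per-$(s,a)$ identity $\var[\tpsa\tr\wsa]=\min_{\vec{p}\in\mathcal{P}^{\varlabel,\vec{v}}_{s,a}}\vec{p}\tr\wsa$ (with ``$\ge$'' read off the defining constraint), followed by freezing the ambiguity set at the fixed point $\hat{\vec{v}}$ and invoking SA-rectangular robust-MDP fixed-point theory for the policy claim. The only divergence is the ``$\le$'' direction, where the paper tersely cites LP duality together with $\var[\tpsa\tr\wsa]\ge\min_{s'\in\states}(\wsa)_{s'}$, whereas you exhibit a feasible $\vec{p}^{\star}\in\Delta^S$ attaining the VaR exactly---a more self-contained justification of the same step; your point that the half-space must be cut by $\wsa$ rather than $\vec{v}$ is also well taken, since the paper's own proof (unlike the main-text display \eqref{eq:equivalence}) indeed places $\hat{\vec{w}}_{s,a}$ in the constraint.
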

\Cref{prop:optimality} shows that the $\varlabel$ Bellman optimality operator optimizes a unique robust MDP whose ambiguity sets are SA-rectangular and value function dependent.
Notice that for any state $s$, action $a$ and value function $\vec v$, the $\varlabel$ ambiguity set is a half-space $\left\{\vec{p}_{s,a}\in\Delta^S: \vec{p}_{s,a}\tr \vec{v} \geq \var\left[ \tilde{\vec p}\tr_{s,a}\vec{v}\right]\right\}$ dependent on the value function $\vec{v}$.
In contrast, $\bcrlabel$ ambiguity sets are independent of any policy or value function and are constructed such that they provide high-confidence guarantees on returns of all policies simultaneously.
As a result, $\bcrlabel$ ambiguity sets tend to be unnecessarily large.

We now compute the ratio of the asymptotic radii of $\bcrlabel$ ambiguity sets and $\varlabel$ ambiguity sets.



\begin{restatable}[Asymptotic Radii of $\varlabel$ Ambiguity Sets]{theorem}{varsize}\label{thm:var_size}
For any state $s$ and action $a$, let $I(\{(\vec{p}^*_{s,a})_i\}_{i=1}^{S-1})$ 
be the Fisher information of
the first $S-1$ transition probabilities, i.e., $\{(\vec{p}^*_{s,a})_i\}_{i=1}^{S-1}$.
Define  $\mathsmaller{I'(\vec{p}^*_{s,a})=\begin{bmatrix}
    I(\{(\vec{p}^*_{s,a})_i\}_{i=1}^{S-1}) & \vec{0} \\
    \vec{0}\tr      & 0 \\
\end{bmatrix}}$. Suppose that the normality assumptions on the posterior distribution  $\tilde{\vec P}$ in \cref{sec:prelims} are satisfied. Then,
\begin{equation}\label{eq:var_radius}
           \lim_{N\to\infty} \sqrt{N} (\mathcal{P}_{s,a}^{\var}-{\vec p}^*_{s,a})  =  \left\{ \vec{p}_{s,a} \in\Delta^\states \middle | \|{\vec p}_{s,a} -{\vec p}^*_{s,a} \|_{I'(\vec{p}^*_{s,a})^{-1}}  \leq {\Phi^{-1}(1-\alpha)} \right\} -{\vec p}^*_{s,a} \enspace.
\end{equation}
\end{restatable}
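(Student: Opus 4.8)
The plan is to identify the implicit $\var$ ambiguity set $\mathcal{P}_{s,a}^{\var}$ with an intersection of half-spaces, one per value direction, and then compute the support-function dual of that intersection in the asymptotic Gaussian regime. By \cref{prop:optimality}, for each value function $\vec{v}$ the operator $\vopt$ acts through the half-space $\{\psa : \psa\tr\vec{v}\geq\var[\tpsa\tr\vec{v}]\}$; since the robust guarantee is required in every direction, the effective set is $\mathcal{P}_{s,a}^{\var}=\bigcap_{\vec{v}\in\real^\states}\{\psa\in\Delta^\states : \psa\tr\vec{v}\geq\var[\tpsa\tr\vec{v}]\}$. Because every $\psa\in\Delta^\states$ satisfies $\one\tr\psa=1$, I would parametrize the simplex by its first $S-1$ coordinates $\vec{\theta}=(\psa)_{1:S-1}$ (with true value $\vec{\theta}^*=(\vec{p}^*_{s,a})_{1:S-1}$), which is exactly the structure encoded by the block matrix $I'(\vec{p}^*_{s,a})$: writing $\tpsa\tr\vec{v}=v_S+\tilde{\vec{\theta}}\tr\vec{d}$ with $\vec{d}=\vec{v}_{1:S-1}-v_S\one$ absorbs the simplex constraint and reduces the problem to the $(S-1)$ free coordinates, whose asymptotic posterior covariance is precisely $I_{S-1}^{-1}/N$ for $I_{S-1}=I(\{(\vec{p}^*_{s,a})_i\}_{i=1}^{S-1})$.

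Next, I would invoke the Gaussian special case of \cref{prop:subgaussian} together with the Bernstein--von Mises normality assumptions of \cref{sec:prelims}: as $N\to\infty$ the posterior of $\tilde{\vec{\theta}}$ is $\mathcal{N}(\vec{\theta}^*, I_{S-1}^{-1}/N)$, so $\var[\tpsa\tr\vec{v}]=v_S+\vec{\theta}^*\tr\vec{d}-\tfrac{\Phi^{-1}(1-\alpha)}{\sqrt{N}}\sqrt{\vec{d}\tr I_{S-1}^{-1}\vec{d}}$. Substituting this into the half-space constraint and introducing the scaled deviation $\vec{u}=\sqrt{N}(\vec{\theta}-\vec{\theta}^*)$ turns each constraint into the scale-free inequality $\vec{u}\tr\vec{d}\geq-\Phi^{-1}(1-\alpha)\sqrt{\vec{d}\tr I_{S-1}^{-1}\vec{d}}$. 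Since $\vec{v}\mapsto\vec{d}$ is onto $\real^{S-1}$, intersecting over $\vec{v}\in\real^\states$ is the same as intersecting over all $\vec{d}\in\real^{S-1}$.

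The crux is then a support-function duality computation: $\vec{u}$ lies in every such half-space iff $\inf_{\vec{d}\neq\zero}\tfrac{\vec{u}\tr\vec{d}}{\sqrt{\vec{d}\tr I_{S-1}^{-1}\vec{d}}}\geq-\Phi^{-1}(1-\alpha)$. Substituting $\vec{w}=I_{S-1}^{-1/2}\vec{d}$ and applying Cauchy--Schwarz shows this infimum equals $-\sqrt{\vec{u}\tr I_{S-1}\vec{u}}$, so the scaled set is exactly the ellipsoid $\{\vec{u} : \sqrt{\vec{u}\tr I_{S-1}\vec{u}}\leq\Phi^{-1}(1-\alpha)\}$. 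Re-embedding into $\real^\states$ (the last coordinate of $\vec{u}$ is pinned by $\one\tr\vec{u}=0$) and using $\vec{x}\tr I'(\vec{p}^*_{s,a})\vec{x}=\vec{x}_{1:S-1}\tr I_{S-1}\vec{x}_{1:S-1}$ to rewrite the quadratic form as $\|\cdot\|_{I'(\vec{p}^*_{s,a})^{-1}}$ yields exactly the right-hand side of \eqref{eq:var_radius}.

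The main obstacle I anticipate is making the passage to the limit rigorous rather than formal. Two points need care. First, the duality identity above is derived under the exact Gaussian law, whereas for finite $N$ the half-spaces use the true posterior $\var$; I would therefore establish set convergence (Painlev\'e--Kuratowski, or Hausdorff convergence of the scaled sets) by showing that the support functions of $\sqrt{N}(\mathcal{P}_{s,a}^{\var}-\vec{p}^*_{s,a})$ converge uniformly on the unit sphere to that of the limiting ellipsoid, leveraging the asymptotic normality of the posterior $\var$. Second, I would verify that the simplex inequalities $\psa\succeq\zero$ are asymptotically inactive: since the scaled set is a bounded ellipsoid, the unscaled set lies within $O(1/\sqrt{N})$ of $\vec{p}^*_{s,a}$, so for interior $\vec{p}^*_{s,a}$ these constraints drop out and only the equality $\one\tr\psa=1$ (already folded into the reduction) remains binding. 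The singularity of the full $S\times S$ covariance is precisely what forces the $(S-1)$-coordinate reduction and the padded matrix $I'(\vec{p}^*_{s,a})$, and handling it cleanly is the one genuinely delicate ingredient.
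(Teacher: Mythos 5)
Your proposal is correct, and it shares the paper's overall two-stage skeleton: first characterize the asymptotic $\var$ set through its support function over value directions (equivalently, as the intersection of the half-spaces of \cref{prop:optimality}), using the Gaussian formula $\var[\tpsa\tr\vec{v}] = \opsa\tr\vec{v} - \Phi^{-1}(1-\alpha)\,\sigma_{\vec v}/\sqrt{N}$ from \cref{prop:subgaussian} and the normality assumptions; then show that this intersection of half-spaces is exactly the stated ellipsoid. Where you genuinely diverge is in the second stage, which is the technical heart. The paper's \cref{norm_ball} stays in the full $S$-dimensional, degenerate coordinates: it squares the support inequality into the positive-semidefinite matrix inequality $\left(\Phi^{-1}(1-\alpha)\right)^2 I(\opsa)^{-1} - (\psa-\opsa)(\psa-\opsa)\tr \succeq 0$, passes to the leading $(S-1)\times(S-1)$ submatrix, and runs a chain of PSD manipulations to extract the reduced quadratic-form bound. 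You instead absorb the simplex equality constraint at the outset via the parametrization $\psa\tr\vec{v} = v_S + \vec{\theta}\tr\vec{d}$ with $\vec{d}=\vec{v}_{1:S-1}-v_S\one$, so the covariance becomes the invertible $I_{S-1}^{-1}/N$, and then collapse the intersection of half-spaces in one line by Cauchy--Schwarz: $\inf_{\vec{d}\neq\vec{0}} \vec{u}\tr\vec{d}\big/\sqrt{\vec{d}\tr I_{S-1}^{-1}\vec{d}} = -\sqrt{\vec{u}\tr I_{S-1}\vec{u}}$. Your route buys two things: it avoids the squaring step (which in the paper is only legitimate because the inequality holds for both $\pm\vec{w}$, a point left implicit there) and the submatrix gymnastics, and it makes the role of the padded matrix $I'(\opsa)$ transparent rather than an after-the-fact bookkeeping device. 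You also flag, and sketch how to close, two rigor gaps that the paper's proof does not address at all: it freely exchanges $\lim_{N\to\infty}$ with set operations, whereas you would establish Painlev\'e--Kuratowski/Hausdorff convergence of the scaled sets via uniform convergence of support functions on the sphere; and you explicitly verify that the simplex inequality constraints are asymptotically inactive for interior $\opsa$, which is needed for the equality in \eqref{eq:var_radius} to be read correctly. Both additions are improvements over the published argument.
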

\Cref{thm:var_size} shows that the asymptotic form of the $\varlabel$ ambiguity set is an ellipsoid. 
Note that, in contrast to the finite-sample $\varlabel$ ambiguity set $\mathcal{P}^{\varlabel, \vec{v}}_{s, a}$ in problem \eqref{eq:equivalence}, the asymptotic $\varlabel$ ambiguity set $\mathcal{P}_{s, a}^{\varlabel}$ is independent of the value function $\vec v$. This is because $\var\left[ \tilde{\vec{p}}\tr_{s,a}\vec{v}\right]$ is convex in the value function $\vec v$ \citep{Rockafellar2000optimizationCVAR}. Therefore, the asymptotic ambiguity set $\mathcal{P}_{s,a}^{\varlabel}$ is simply the intersection of closed half-spaces in $\mathcal{P}^{\varlabel, \vec{v}}_{s,a}\!\!$, computed over all value functions $\vec{v}\in\real^{\states}$ \citep{boyd2004convex}. 
It is also worth noting that the radius of the asymptotic $\varlabel$ ambiguity set $\mathcal{P}_{s, a}^{\varlabel}$ is constant. In contrast, the asymptotic radius of the $\bcrlabel$ ambiguity sets grows with the number of states in at least one direction, as we show in the following proposition.

\begin{restatable}
    [Asymptotic Radius of Bayesian Credible Regions]{theorem}{bayesian}\label{thm:bayesian_size}
    For any state $s$ and action $a$, let $\mathcal{P}_{s,a}^{\bcr}$ represent any Bayesian credible region.
 Let $\xi < \nicefrac{\sqrt{\chi^2_{S-1,1-\alpha}}}{\Phi^{-1}(1-\alpha)}$. Suppose that the normality assumptions on the posterior distribution of $\tilde{\vec P}$ in \cref{sec:prelims} are satisfied. Then,
\begin{equation}\label{eq:bayes_radius}
              \hspace{-0.16cm}  \forall s\in\states, a\in\actions :  \lim_{N\to\infty} \sqrt{N} (\mathcal{P}_{s,a}^{\bcr} - {\vec p}^*_{s,a})  \not\subseteq  \lim_{N\to\infty} \sqrt{N} \xi ( \mathcal{P}^{\var}_{s,a} - {\vec p}^*_{s,a} )\enspace.
\end{equation}
\end{restatable}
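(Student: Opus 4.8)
The plan is to argue by contradiction through a comparison of the limiting Gaussian mass captured by the two rescaled sets. Fix a state–action pair and write $\mu$ for the limiting law of $\sqrt{N}(\tpsa-\vec{p}^*_{s,a})$, which by the normality assumptions of \cref{sec:prelims} satisfies $\sqrt{N}(\tpsa-\vec{p}^*_{s,a})\rightsquigarrow\mu=\mathcal{N}(\vec 0,\Sigma)$, where $\Sigma$ is the (degenerate) asymptotic covariance supported on the $(S-1)$-dimensional tangent space of the simplex. The first step I would establish is a geometric identification: the quadratic form $\vec x\tr I'(\vec{p}^*_{s,a})\vec x$ underlying the norm $\norm{\vec x}_{I'(\vec{p}^*_{s,a})^{-1}}$ involves only the first $S-1$ coordinates, whose limiting covariance is the inverse Fisher information $I(\{(\vec{p}^*_{s,a})_i\}_{i=1}^{S-1})^{-1}$; hence it is exactly the Mahalanobis form of $\mu$ on the tangent space. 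Consequently $\norm{\cdot}_{I'(\vec{p}^*_{s,a})^{-1}}^2$ is distributed as $\chi^2_{S-1}$ under $\mu$, and this is the fact that injects the degrees of freedom $S-1$ into the comparison.

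Next I would compute the $\mu$-mass of the scaled $\var$ set. By \Cref{thm:var_size}, the set $\lim_{N\to\infty}\sqrt{N}\,\xi(\mathcal{P}^{\var}_{s,a}-\vec{p}^*_{s,a})$ is the centered ellipsoid $E_\xi=\{\vec x:\norm{\vec x}_{I'(\vec{p}^*_{s,a})^{-1}}\le\xi\,\Phi^{-1}(1-\alpha)\}$. By the $\chi^2_{S-1}$ identification, $\mu(E_\xi)=F_{\chi^2_{S-1}}\!\big((\xi\,\Phi^{-1}(1-\alpha))^2\big)$, where $F_{\chi^2_{S-1}}$ is the $\chi^2_{S-1}$ CDF. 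The hypothesis $\xi<\sqrt{\chi^2_{S-1,1-\alpha}}/\Phi^{-1}(1-\alpha)$ rearranges to $\xi\,\Phi^{-1}(1-\alpha)<\sqrt{\chi^2_{S-1,1-\alpha}}$, and since $F_{\chi^2_{S-1}}$ is strictly increasing at its $(1-\alpha)$-quantile $\chi^2_{S-1,1-\alpha}$, this yields $\mu(E_\xi)<1-\alpha$ strictly.

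To close the argument I would transfer the defining property of the credible region to the limit. For every $N$ the region obeys $\Pr[\tpsa\in\mathcal{P}^{\bcr}_{s,a}]\ge 1-\alpha$, which equals the rescaled mass of $B_N:=\sqrt{N}(\mathcal{P}^{\bcr}_{s,a}-\vec{p}^*_{s,a})$; passing to the presupposed limit set $B:=\lim_{N\to\infty}B_N$ and using weak convergence gives $\mu(B)\ge 1-\alpha$. If the containment $B\subseteq E_\xi$ held, then monotonicity of $\mu$ would force $1-\alpha\le\mu(B)\le\mu(E_\xi)<1-\alpha$, a contradiction; hence $B\not\subseteq E_\xi$, which is precisely \eqref{eq:bayes_radius}. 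Because this holds for every $s\in\states$ and $a\in\actions$, the claim follows.

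I expect the main obstacle to be the final transfer of the $\ge 1-\alpha$ mass bound through the $\sqrt{N}$ rescaling for an \emph{arbitrary} credible region, rather than any single well-behaved one. Two points need care: (i) no mass escapes to infinity under the rescaling, which is controlled by the $1/\sqrt{N}$ concentration rate of the posterior so that $B_N$ stays within a fixed bounded neighborhood governed by $\Sigma$; and (ii) the weak-convergence passage, for which the closed-set half of the portmanteau theorem applied to $\overline{B}$ suffices, since only the one-sided inequality $\mu(\overline{B})\ge\limsup_N\Pr[\tpsa\in\mathcal{P}^{\bcr}_{s,a}]\ge 1-\alpha$ is needed and $E_\xi$ is closed. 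The shape of $\mathcal{P}^{\bcr}_{s,a}$ is otherwise immaterial: the mass comparison is shape-agnostic, so the conclusion holds uniformly over all credible regions.
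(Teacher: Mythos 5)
Your proposal is correct and follows essentially the same route as the paper's proof: a contradiction argument that pits the credible-region mass bound $\mu(B)\ge 1-\alpha$ against the mass of the scaled $\varlabel$ ellipsoid, which is computed via the $\chi^2_{S-1}$ law of the (degenerate) Mahalanobis norm and shown to be strictly below $1-\alpha$ under the hypothesis on $\xi$. Your two refinements—invoking strict monotonicity of the $\chi^2_{S-1}$ CDF at its quantile instead of the paper's explicit $\beta$-buffer, and handling the $\sqrt{N}$ rescaling of the mass bound via the closed-set half of the portmanteau theorem (a step the paper treats informally)—are cosmetic improvements to the same argument, not a different approach.
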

We note that \Cref{thm:bayesian_size} is an adaptation of Theorem 10 in \cite{gupta2019NearOpt} in RL which proves that there exist directions in which the Bayesian credible regions is at least $\xi=\nicefrac{\sqrt{\chi^2_{S-1,1-\alpha}}}{\Phi^{-1}(1-\alpha)}$ larger than $\varlabel$ ambiguity sets. Since the value of $\xi$ only grows with the number of states, we conclude that $\bcrlabel$ ambiguity sets are sub-optimal for optimizing the percentile criterion. \Cref{fig:radii} shows the growth in the ratio of radius of $\bcrlabel$ to $\varlabel$ ambiguity sets with an increasing number of states.

\vspace{-3mm}

\section{Experiments}\label{sec:experiments}
 We now empirically analyze the robustness of the $\varlabel$ framework in three different domains. 

\emph{Riverswim:}
The Riverswim MDP \citep{Strehl2008Theoretical} consists of five states and two actions. 
The state represents the coordinates of the swimmer in the river and action represents the direction of the swim. The task of the agent is to learn a policy that would take the swimmer to the other end of the river.


\emph{Population Growth Model:}
The Population Growth MDP \citep{Kery2012Bayesian} models the population growth of pests and consists of 50 states and 5 actions. 
The states represent the pest population and actions represent the pest control measures. 
In our experiments, we use two different instantiations of the Population Growth Model: Population-Small and Population, which vary in the number of posterior samples.

\emph{Inventory Management:}
The Inventory Management MDP \citep{Zipkin_2000} models the classical inventory management problem and consists of 30 states and 30 actions. 
States represent the inventory level and actions represent the inventory to be purchased. 
The sale price $s$, holding cost $c$ and purchase costs $\vec{P}$ are 3.99, 0.03, and 2.219. 
The demand is normally distributed with mean=$\nicefrac{s}{4}$ and standard deviation $\nicefrac{s}{6}$.
\paragraph{Implementation details:}
For each domain in our experiments, we sample a dataset $\mathcal{D}$ consisting of $n$ tuples of the form $\{s, a,r,{s'}\}$, corresponding to the state $s$, the action taken $a$, the reward $r$ and the next state $s'$. 
We construct a posterior distribution over the models using $\mathcal{D}$, assuming Dirichlet priors over the model parameters. 
Using MCMC sampling, we construct two datasets $\mathcal{D}_1$ and $\mathcal{D}_2$ containing $M$ and $K$ transition probability models, respectively. 

We construct $L$ train datasets by randomly sampling $80\%$ of the models from $\mathcal{D}_1$ each time. We use $\mathcal{D}_2$ as our test dataset. 
For any confidence level $\delta$, we train one RL agent per train dataset and method. 

For evaluation, we consider two instances of the $\varlabel$ framework: one (denoted by \emph{VaRN}) that assumes that $\smash{\tilde{\vec{P}}}$ is a multivariate normal, and another (denoted by \emph{VaR}) that does not assume any structure over $\smash{\tilde{\vec{P}}}$. 
We use seven baseline methods for evaluating the robustness of our framework. They are: Naive Hoeffding \citep{Petrik2016SafePI}, Optimized Hoeffding (Opt Hoeffding) \citep{pmlr-v130-behzadian21a}, Soft-Robust \citep{Ben-Tal2019SoftRobust}, and $\bcrlabel$ Robust MDPs with weighted $\ell_1$ ambiguity sets (\emph{WBCR} $\ell_1$) \citep{pmlr-v130-behzadian21a}, weighted $\ell_\infty$ ambiguity sets (\emph{WBCR} $\ell_\infty$) \citep{pmlr-v130-behzadian21a}, unweighted $\ell_1$ ambiguity sets (\emph{BCR} $\ell_1$) \citep{russel2019BeyondCR} and unweighted $\ell_\infty$ ambiguity sets (\emph{BCR} $\ell_1$) ambiguity \citep{russel2019BeyondCR} sets. 
See \Cref{app:experiments} in the appendix for more details.

We report the $95\%$ confidence interval of the robust performance ($\delta$-percentile of expected returns) of the $\varlabel$ framework on the test dataset with that of other baselines for different values of $\delta$.

 \begin{table*}[ht]
\centering
\begin{small}
    \begin{tabularx}{\textwidth}{p{2.4cm}p{2.2cm}p{2.2cm}p{2.6cm}p{2.6cm}}
 \toprule
 Methods &  Riverswim & Inventory & Population-Small  & Population \\
 \hline
VaR & \textbf{68.54 $\pm$ 5.08} & 457.95 $\pm$ 0.74 & \textbf{-3102.48 $\pm$ 429.7} & -4576.87 $\pm$ 147.3\\
VaRN & 67.27 $\pm$ 0.0 & 452.78 $\pm$ 0.02 & -4005.53 $\pm$ 8.76 & \textbf{-4570.17 $\pm$ 38.84}\\
BCR $l_1$ & 67.27 $\pm$ 0.0 & 369.67 $\pm$ 0.0 & -5614.95 $\pm$ 80.28 & -6013.21 $\pm$ 1177.94\\
BCR $l_{\infty}$ & 67.27 $\pm$ 0.0 & 199.41 $\pm$ 39.02 & -7908.92 $\pm$ 41.6 & -9033.7 $\pm$ 84.28\\
WBCR $l_1$ & 67.9 $\pm$ 3.82 & 454.1 $\pm$ 4.16 & -5290.38 $\pm$ 1084.26 & -5408.01 $\pm$ 225.2\\
WBCR $l_{\infty}$ & 67.27 $\pm$ 0.0 & 199.4 $\pm$ 39.02 & -7712.43 $\pm$ 55.96 & -8377.64 $\pm$ 126.24\\
Soft-Robust & 61.79 $\pm$ 1.46 & \textbf{460.6 $\pm$ 0.0} & -3647.18 $\pm$ 94.62 & -6932.86 $\pm$ 154.16\\
Naive Hoeffding & 51.52 $\pm$ 6.06 & -0.0 $\pm$ 0.0 & -8647.7 $\pm$ 59.5 & -9127.14 $\pm$ 140.98\\
Opt Hoeffding & 50.76 $\pm$ 4.56 & -0.0 $\pm$ 0.0 & -8640.48 $\pm$ 2.34 & -9163.64 $\pm$ 13.62\\
 \bottomrule
\end{tabularx}
\caption{\label{table:summary} shows the $95\%$ confidence interval of the robust (percentile) returns achieved by \emph{VaR}, \emph{VaRN}, \emph{BCR} $\ell_1$, \emph{BCR} $\ell_\infty$, \emph{WBCR} $\ell_1$, \emph{WBCR}, \emph{Soft Robust}, \emph{Worst RMDP}, \emph{Naive Hoeffding} and \emph{Opt Hoeffding} agents at $\delta=0.05$ in Riverswim, Inventory, Population-Small, and Population domain.
}
\end{small}
\end{table*}

\paragraph{Experimental Results}
\Cref{table:summary} summarizes the performance of the $\varlabel$ framework and the baselines for confidence level $\delta=0.05$ (\Cref{table:summary2} and \cref{table:summary3} in the appendix summarizes the results for $\delta = 0.15$ and $\delta = 0.3$ respectively.).
We observe that for confidence level $\delta=0.05$, the $\varlabel$ framework outperforms the baseline methods in terms of mean robust performance in most domains. 
On the other hand, for $\delta=0.15$, the $\varlabel$ framework outperforms baselines in the Population and Population-Small domains and has comparable performance to the Soft-Robust method in the Inventory domain.
However, at $\delta=0.3$ we observe that the $\varlabel$ framework has lower robust performance relative to the the Soft-Robust method in Riverswim and Population domains. We conjecture that this is because the Soft-Robust method optimizes the policy to maximize the mean of expected returns and is therefore able to perform better in cases where a lower level of robustness isare required. However, we note that in contrast to our method, the Soft-Robust method does not provide probabilistic guarantees against worst-case scenarios.

Furthermore, as expected, we find that in many cases, the robust performance of $\bcrlabel$ Robust MDPs with span-optimized (weighted) ambiguity sets (\emph{WBCR $\ell_1$, \emph{WBCR} $\ell_{\infty}$}) is relatively higher than the robust performance of Robust MDPs with unweighted $\bcrlabel$ ambiguity sets (\emph{BCR $\ell_1$, \emph{BCR} $\ell_{\infty}$}). However, we find that even Robust MDPs with span-optimized $\bcrlabel$ ambiguity sets are generally unable to outperform the robust performance of our $\var$ framework.


\Cref{fig:percentile_bars} compares the robust performance of the $\varlabel$ framework and the baselines on both train and test models. 
The trends in the robust performance of the $\varlabel$ framework and the baselines are similar on both train and test models.

\section{Conclusion and Future Work}
The main limitation of the $\varlabel$ framework is that it does not consider the correlations in the uncertainty of transition probabilities across states and actions~\citep{Mannor2016KRect, goyal2020BeyondR, Mannor2012LightningDN}. However, due to the non-convex nature of the percentile-criterion~\citep{russel2019BeyondCR, pmlr-v130-behzadian21a}, constructing a tractable $\var$ Bellman operator that considers these correlations is not feasible. One plausible solution is to use a Conditional Value at Risk Bellman operator~\citep{lobo2021softrobust, Chow2014CVAR} which is convex and lower bounds the Value at Risk measure. We leave the analysis of this approach for future work.
Empirical analysis of the $\var$ framework in domains with continuous state-action spaces is also an interesting avenue for future work. 

In conclusion, we propose a novel dynamic programming algorithm that optimizes a tight lower-bound approximation on the percentile criterion without explicitly constructing ambiguity sets. 
We theoretically show that our algorithm implicitly constructs tight ambiguity sets whose asymptotic radius is smaller than that of any Bayesian credible region, and therefore,  computes less conservative policies with the same confidence guarantees on returns. 
We also derive finite-sample and asymptotic bounds on the performance loss due to our approximation. Finally, our experimental results demonstrate the efficacy of our method in several domains.

\subsubsection*{Acknowledgments}

The work in the paper was supported, in part, by NSF grants 2144601 and 1815275.

\bibliography{abb,bibliography}

\begin{thebibliography}{56}
\providecommand{\natexlab}[1]{#1}
\providecommand{\url}[1]{\texttt{#1}}
\expandafter\ifx\csname urlstyle\endcsname\relax
  \providecommand{\doi}[1]{doi: #1}\else
  \providecommand{\doi}{doi: \begingroup \urlstyle{rm}\Url}\fi

\bibitem[Agarwal et~al.(2018)Agarwal, Jleli, and Samet]{Agarwal2018Banach}
Praveen Agarwal, Mohamed Jleli, and Bessem Samet.
\newblock \emph{Banach contraction principle and applications}, pages 1--23.
\newblock Springer Singapore, Singapore, 2018.

\bibitem[Badrinath and Kalathil(2021)]{badrinath2021Robust}
Kishan~Panaganti Badrinath and Dileep Kalathil.
\newblock Robust reinforcement learning using least squares policy iteration with provable performance guarantees.
\newblock In \emph{International Conference on Machine Learning}, pages 511--520, 2021.

\bibitem[Behzadian et~al.(2021{\natexlab{a}})Behzadian, Hasan~Russel, Petrik, and Pang~Ho]{pmlr-v130-behzadian21a}
Bahram Behzadian, Reazul Hasan~Russel, Marek Petrik, and Chin Pang~Ho.
\newblock Optimizing percentile criterion using robust {MDPs}.
\newblock In \emph{Proceedings of the 24th International Conference on Artificial Intelligence and Statistics (AISTATS)}, pages 1009--1017, 2021{\natexlab{a}}.

\bibitem[Behzadian et~al.(2021{\natexlab{b}})Behzadian, Petrik, and Ho]{behzadian2021fast}
Bahram Behzadian, Marek Petrik, and Chin~Pang Ho.
\newblock Fast algorithms for {$L_{\infty}$}-constrained s-rectangular robust {MDP}s.
\newblock In A.~Beygelzimer, Y.~Dauphin, P.~Liang, and J.~Wortman Vaughan, editors, \emph{Advances in Neural Information Processing Systems}, 2021{\natexlab{b}}.

\bibitem[Ben-Tal et~al.(2010)Ben-Tal, Bertsimas, and Brown]{Ben-Tal2019SoftRobust}
Aharon Ben-Tal, Dimitris Bertsimas, and David~B. Brown.
\newblock A soft robust model for optimization under ambiguity.
\newblock \emph{Operations Research}, 58\penalty0 (4):\penalty0 1220--1234, 2010.

\bibitem[Bertsekas(2013)]{Bertsekas2013}
Dimitri~P. Bertsekas.
\newblock \emph{Abstract {{dynamic programming}}}.
\newblock {Athena Scientific}, 2013.
\newblock ISBN 978-1-886529-42-7.

\bibitem[Boucheron et~al.(2013)Boucheron, Lugosi, and Massart]{Boucheron2013Concentration}
Stéphane Boucheron, Gábor Lugosi, and Pascal Massart.
\newblock \emph{Concentration inequalities: A nonasymptotic theory of independence}.
\newblock Oxford University Press, 2013.

\bibitem[Box and Tiao(1973)]{Box1973BayesianLR}
G.~P. Box and G.~C. Tiao.
\newblock Bayesian inference in statistical analysis.
\newblock \emph{International Statistical Review}, 43:\penalty0 242, 1973.

\bibitem[Boyd and Vandenberghe(2004)]{boyd2004convex}
Stephen Boyd and Lieven Vandenberghe.
\newblock \emph{Convex optimization}.
\newblock Cambridge university press, 2004.

\bibitem[Buchholz and Scheftelowitsch(2019)]{bucholz2019CMDP}
Peter Buchholz and Dimitri Scheftelowitsch.
\newblock Computation of weighted sums of rewards for concurrent {MDPs}.
\newblock \emph{Mathematical Methods of Operations Research}, 89, 2019.

\bibitem[Chow and Ghavamzadeh(2014{\natexlab{a}})]{Chow2014CVAR}
Yinlam Chow and Mohammad Ghavamzadeh.
\newblock Algorithms for {CVaR} optimization in {MDPs}.
\newblock In \emph{International Conference on Neural Information Processing Systems}, pages 3509--3517. MIT Press, 2014{\natexlab{a}}.

\bibitem[Chow and Ghavamzadeh(2014{\natexlab{b}})]{chow2014}
Yinlam Chow and Mohammad Ghavamzadeh.
\newblock Algorithms for {CVaR} optimization in mdps.
\newblock Neural Information Processing Systems NIPS'14, page 3509–3517, Cambridge, MA, USA, 2014{\natexlab{b}}. MIT Press.

\bibitem[Delage and Mannor(2010)]{Delage2009Percentile}
E.~Delage and S.~Mannor.
\newblock {Percentile optimization for {Markov} decision processes with parameter uncertainty}.
\newblock \emph{Operations Research}, 58\penalty0 (1):\penalty0 203--213, 2010.

\bibitem[Derman et~al.(2018)Derman, Mankowitz, Mann, and Mannor]{Derman2018Soft-RobustAC}
Esther Derman, Daniel Mankowitz, Timothy~A Mann, and Shie Mannor.
\newblock Soft-robust actor-critic policy-gradient.
\newblock In \emph{Annual Conference on Uncertainty in Artificial Intelligence}, 2018.

\bibitem[Derman et~al.(2019)Derman, Mankowitz, Mann, and Mannor]{Derman2019BayesianRobust}
Esther Derman, Daniel~J. Mankowitz, Timothy~A. Mann, and Shie Mannor.
\newblock A {Bayesian} approach to robust reinforcement learning.
\newblock In \emph{Annual Conference on Uncertainty in Artificial Intelligence (UAI)}, 2019.

\bibitem[Garc{{\'i}}a et~al.(2015)Garc{{\'i}}a, Fern, and o~Fern{{\'a}}ndez]{JMLR:v16:garcia15a}
Javier Garc{{\'i}}a, Fern, and o~Fern{{\'a}}ndez.
\newblock A comprehensive survey on safe reinforcement learning.
\newblock \emph{Journal of Machine Learning Research}, 16\penalty0 (42):\penalty0 1437--1480, 2015.

\bibitem[Gelman et~al.(2014)Gelman, Carlin, Stern, and Rubin]{Gelman2014BayesianData}
Andrew Gelman, John~B Carlin, Hal~S Stern, and Donald~B Rubin.
\newblock \emph{{Bayesian} Data Analysis}.
\newblock Chapman and Hall/CRC, 2014.

\bibitem[Geyer and Meeden(2013)]{geyer2013}
Charles Geyer and Glen Meeden.
\newblock Asymptotics for constrained dirichlet distributions.
\newblock \emph{Bayesian Analysis}, 8:\penalty0 89--110, 03 2013.

\bibitem[Goyal and Grand-Clement(2023)]{goyal2020BeyondR}
Vineet Goyal and Julien Grand-Clement.
\newblock Robust {Markov} decision process: Beyond rectangularity.
\newblock \emph{Mathematics of Operations Research}, 48\penalty0 (1):\penalty0 203--226, 2023.

\bibitem[Grand-Clement and Kroer(2021)]{Grand-Clement2020b}
Julien Grand-Clement and Christian Kroer.
\newblock {First-order methods for {Wasserstein} distributionally robust MDP}.
\newblock In \emph{International Conference on Machine Learning (ICML)}, pages 2010--2019, 2021.

\bibitem[Gupta(2019)]{gupta2019NearOpt}
Vishal Gupta.
\newblock Near-optimal {Bayesian} ambiguity sets for distributionally robust optimization.
\newblock \emph{Management Science}, 56\penalty0 (9), 2019.

\bibitem[Haarnoja et~al.(2018)Haarnoja, Zhou, Abbeel, and Levine]{haarnoja18bSA}
T.~Haarnoja, A.~Zhou, P.~Abbeel, and S.~Levine.
\newblock Soft {A}ctor-{C}ritic: {O}ff-{P}olicy {M}aximum {E}ntropy {D}eep {R}einforcement {L}earning with a {S}tochastic {A}ctor.
\newblock In \emph{Proceedings of the 35th International Conference on Machine Learning (ICML)}, pages 1861--1870, 2018.

\bibitem[Hoare(1961)]{hoare1961quick}
C.~A.~R. Hoare.
\newblock Algorithm 65: Find.
\newblock \emph{Communications of the {ACM}}, 4\penalty0 (7):\penalty0 321–322, 1961.

\bibitem[Iyengar(2005)]{iyengar2005RMDP}
Garud~N. Iyengar.
\newblock Robust dynamic programming.
\newblock \emph{Mathematics of Operations Research}, 30\penalty0 (2):\penalty0 257--280, 2005.

\bibitem[Kery and Schaub(2012)]{Kery2012Bayesian}
Marc Kery and Michael Schaub.
\newblock \emph{{Bayesian Population Analysis Using WinBUGS}}.
\newblock Elsevier, 2012.

\bibitem[Lange et~al.(2012)Lange, Gabel, and Riedmiller]{Lange2012BatchRL}
Sasche Lange, Thomas Gabel, and Martin Riedmiller.
\newblock \emph{{Batch reinforcement learning}}.
\newblock Springer, 2012.

\bibitem[Lobo et~al.(2021)Lobo, Ghavamzadeh, and Petrik]{lobo2021softrobust}
Elita~A. Lobo, Mohammad Ghavamzadeh, and Marek Petrik.
\newblock Soft-robust algorithms for batch reinforcement learning, 2021.

\bibitem[Mankowitz et~al.(2020)Mankowitz, Levine, Jeong, Abdolmaleki, Springenberg, Mann, Hester, and Riedmiller]{Mankowitz2020RobustRL}
Daniel~J. Mankowitz, Nir Levine, Rae~Ung Jeong, Abbas Abdolmaleki, Jost~Tobias Springenberg, Timothy Mann, Todd Hester, and Martin~A. Riedmiller.
\newblock Robust reinforcement learning for continuous control with model misspecification.
\newblock In \emph{Proceedings of the 2020 International Conference on Learning Representations (ICLR)}, 2020.

\bibitem[Mannor et~al.(2012)Mannor, Mebel, and Xu]{Mannor2012LightningDN}
Shie Mannor, Ofir Mebel, and Huan Xu.
\newblock Lightning does not strike twice: {{Robust MDPs}} with coupled uncertainty.
\newblock In \emph{Proceedings of the 29th International Conference on Machine Learning (ICML)}, pages 451--458, 2012.

\bibitem[Mannor et~al.(2016)Mannor, Mebel, and Xu]{Mannor2016KRect}
Shie Mannor, Ofir Mebel, and Huan Xu.
\newblock Robust {MDPs} with {K}-rectangular uncertainty.
\newblock \emph{Mathematics of Operations Research}, 41\penalty0 (4):\penalty0 1484–1509, 2016.

\bibitem[Massart(1990)]{Massart1990dwk}
P.~Massart.
\newblock {The tight constant in the Dvoretzky-Kiefer-Wolfowitz inequality}.
\newblock \emph{The Annals of Probability}, 18:\penalty0 1269 -- 1283, 1990.

\bibitem[Neufeld and Sester(2023)]{neufeld2023robust}
Ariel Neufeld and Julian Sester.
\newblock Robust $q$-learning algorithm for {Markov} decision processes under wasserstein uncertainty, 2023.

\bibitem[Nilim and El~Ghaoui(2005)]{Nilim2005RobustControl}
Arnab Nilim and Laurent El~Ghaoui.
\newblock Robust control of {Markov} decision processes with uncertain transition matrices.
\newblock \emph{Operations Research}, 53\penalty0 (5):\penalty0 780--798, 2005.

\bibitem[Petrik and Luss(2016)]{Petrik2016}
Marek Petrik and Ronny Luss.
\newblock Interpretable {{Policies}} for {{Dynamic Product Recommendations}}.
\newblock In \emph{Uncertainty in {{Artificial Intelligence}} ({{UAI}})}, 2016.

\bibitem[Petrik et~al.(2016)Petrik, Ghavamzadeh, and Chow]{Petrik2016SafePI}
Marek Petrik, Mohammad Ghavamzadeh, and Yinlam Chow.
\newblock Safe policy improvement by minimizing robust baseline regret.
\newblock In \emph{International Conference on Neural Information Processing Systems}, pages 2306--2314, 2016.

\bibitem[Prashanth(2014)]{prashant2014}
L.~A. Prashanth.
\newblock Policy gradients for {CVaR}-constrained mdps.
\newblock In Peter Auer, Alexander Clark, Thomas Zeugmann, and Sandra Zilles, editors, \emph{Algorithmic Learning Theory}, pages 155--169, Cham, 2014. Springer International Publishing.
\newblock ISBN 978-3-319-11662-4.

\bibitem[Puterman(2005)]{putterman1994DP}
Martin~L. Puterman.
\newblock \emph{Markov Decision Processes: Discrete stochastic dynamic programming}.
\newblock John Wiley and Sons, Inc., 2nd edition, 2005.

\bibitem[Rockafellar(1970)]{rockafellar1970convex}
R.~Tyrrell Rockafellar.
\newblock \emph{Convex analysis}.
\newblock Princeton University Press, 1970.

\bibitem[Rockafellar and Uryasev(2000)]{Rockafellar2000optimizationCVAR}
R.~Tyrrell Rockafellar and Stanislav Uryasev.
\newblock Optimization of conditional value-at-risk.
\newblock \emph{Journal of Risk}, 2:\penalty0 21--41, 2000.

\bibitem[Russel and Petrik(2019)]{russel2019BeyondCR}
Reazul~Hasan Russel and Marek Petrik.
\newblock Beyond confidence regions: Tight {Bayesian} ambiguity sets for robust {MDPs}.
\newblock In \emph{Proceedings of the 32nd Annual Conference on Neural Information Processing Systems (NeurIPS)}, 2019.

\bibitem[Sarykalin et~al.(2008)Sarykalin, Serraino, and Uryasev]{Sarykalin2008VaR}
Sergey Sarykalin, Gaia Serraino, and Stan Uryasev.
\newblock \emph{Value-at-risk vs. conditional value-at-risk in risk management and optimization}, chapter~13, pages 270--294.
\newblock INFORMS, 2008.

\bibitem[Schulman et~al.(2017)Schulman, Wolski, Dhariwal, Radford, and Klimov]{schulman2017ppo}
John Schulman, Filip Wolski, Prafulla Dhariwal, Alec Radford, and Oleg Klimov.
\newblock Proximal policy optimization algorithms.
\newblock \emph{CoRR}, abs/1707.06347, 2017.

\bibitem[Shapiro et~al.(2013)Shapiro, Tekaya, da~Costa, and Soares]{Shapiro2013Risk}
A.~Shapiro, W.~Tekaya, J.~P. da~Costa, and M.~P. Soares.
\newblock {Risk-neutral and risk-averse stochastic dual dynamic programming method}.
\newblock \emph{European Journal of Operational Research}, pages 375--391, 2013.

\bibitem[Stanko and Macek(2019)]{Stanko2019RiskaverseDR}
Silvestr Stanko and Karel Macek.
\newblock Risk-averse distributional reinforcement learning: A {CVaR} optimization approach.
\newblock In \emph{International Joint Conference on Computational Intelligence}, 2019.

\bibitem[Steimle et~al.(2018)Steimle, Kaufman, and Denton]{Steimle2018MM}
Lauren Steimle, David Kaufman, and Brian Denton.
\newblock Multi-model {Markov} decision processes.
\newblock \emph{Optimization Online}, 2018.

\bibitem[Strehl and Littman(2005)]{Strehl2008Theoretical}
Alexander~L. Strehl and Michael~L. Littman.
\newblock A theoretical analysis of model-based interval estimation.
\newblock In \emph{International Conference on Machine Learning (ICML)}, 2005.

\bibitem[Su and Petrik(2023)]{Su2023}
Xihong Su and Marek Petrik.
\newblock Solving multi-model {{MDPs}} by coordinate ascent and dynamic programming,.
\newblock In \emph{Uncertainty in {{Artificial Intelligence}} ({{UAI}})}, 2023.

\bibitem[Sutton and Barto(2018)]{Sutton2018RL}
Richard~S. Sutton and Andrew~G. Barto.
\newblock \emph{Reinforcement Learning: An introduction}.
\newblock A Bradford Book, 2018.

\bibitem[Tamar et~al.(2012)Tamar, Di~Castro, and Mannor]{tamarpolicy2012}
Aviv Tamar, Dotan Di~Castro, and Shie Mannor.
\newblock Policy gradients with variance related risk criteria.
\newblock In \emph{Proceedings of the 29th International Coference on International Conference on Machine Learning}, ICML'12, page 1651–1658, Madison, WI, USA, 2012. Omnipress.
\newblock ISBN 9781450312851.

\bibitem[Tamar et~al.(2015)Tamar, Chow, Ghavamzadeh, and Mannor]{NIPS2015_tamar}
Aviv Tamar, Yinlam Chow, Mohammad Ghavamzadeh, and Shie Mannor.
\newblock Policy gradient for coherent risk measures.
\newblock In C.~Cortes, N.~Lawrence, D.~Lee, M.~Sugiyama, and R.~Garnett, editors, \emph{Advances in Neural Information Processing Systems}, volume~28. Curran Associates, Inc., 2015.

\bibitem[Vaart(1998)]{vaart1998Asymptotic}
A.~W. van~der Vaart.
\newblock \emph{Asymptotic statistics}.
\newblock Cambridge Series in Statistical and Probabilistic Mathematics. Cambridge University Press, 1998.

\bibitem[Vijayan and Prashanth(2023)]{pmlr-v216-vijayan23a}
Nithia Vijayan and L.~A. Prashanth.
\newblock A policy gradient approach for optimization of smooth risk measures.
\newblock In Robin~J. Evans and Ilya Shpitser, editors, \emph{Proceedings of the Thirty-Ninth Conference on Uncertainty in Artificial Intelligence}, volume 216 of \emph{Proceedings of Machine Learning Research}, pages 2168--2178. PMLR, 31 Jul--04 Aug 2023.

\bibitem[Wiesemann et~al.(2013)Wiesemann, Kuhn, and Rustem]{Wiesemann2013RMDP}
Wolfram Wiesemann, Daniel Kuhn, and Ber\c{c} Rustem.
\newblock Robust {Markov} decision processes.
\newblock \emph{Mathematics of Operations Research}, 38\penalty0 (1):\penalty0 153--183, 2013.

\bibitem[Xu and Mannor(2006)]{Xu2006Robustnesstradeoff}
Huan Xu and Shie Mannor.
\newblock The robustness-performance trade-off in {Markov} decision processes.
\newblock In \emph{Proceedings of the 19th Annual Conference on Neural Information Processing Systems (NeurIPS)}, 2006.

\bibitem[Xu and Mannor(2012)]{Xu2012Distributionally}
Huan Xu and Shie Mannor.
\newblock {Distributionally robust {Markov} decision processes}.
\newblock \emph{Mathematics of Operations Research}, 37\penalty0 (2):\penalty0 288--300, 2012.

\bibitem[Zipkin(2000)]{Zipkin_2000}
Paul~Herbert Zipkin.
\newblock \emph{Foundations of inventory management}.
\newblock McGraw-Hill, Boston, 2000.

\end{thebibliography}

\newpage
\appendix
\section{Additional theoretical results}\label{app:additional_results}



 \begin{definition}[Translation Subvariance]
	\label{def:subvariance}
	 The operator $\op: \real^S \to \real^S$ satisfies the \emph{translation subvariance} property if for all vector $\vec{v}\in \real^S$ and  scalar $c$, there exists $\gamma \in [0,1)$ that satisfies
	\[
		\op (\vec v+ c\one) = (\op \vec{v}) + \gamma c\one~\enspace.
	\]
\end{definition}.

\begin{definition}[Monotonicity]
	\label{def:monotonic}
    The operator $\op: \real^S \to \real^S$ satisfies the \emph{monotonicity} property if for all $\vec \val\in\real^S$ and $\vec{u}\in\real^S$ such that $\vec{\val} \preceq \vec{u}$, $\op$ satisfies
	\[
		\op \vec \val \preceq \op \vec u \enspace~.
	\]
\end{definition}


\begin{lemma}[Contraction Mapping \citep{Bertsekas2013}]
	\label{th:contraction}
	The operator $\op: \real^S \to \real^S$ is a contraction mapping if it satisfies \emph{monotonicity} and \emph{translation subvariance} properties, i.e.,  there exists $\gamma \in [0,1)$ such that for all $\vec{u}\in \real^S$, $\vec{\val}\in \real^S$, $\op$ satisfies
	\begin{align*}
		\norm{ \op \vec{u} - \op \vec\val}_\infty \leq  \gamma \norm{ \vec u - \vec \val}_\infty~.
	\end{align*}
 \end{lemma}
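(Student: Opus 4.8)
The plan is to use the two hypotheses symmetrically to sandwich $\op \vec{u} - \op \vec{\val}$ between $-\gamma c\one$ and $\gamma c\one$, where $c = \norm{\vec{u} - \vec{\val}}_\infty$. The key observation is that the sup norm controls pointwise deviations: by definition of $c = \norm{\vec{u} - \vec{\val}}_\infty$ we have $\abs{u_i - \val_i} \le c$ for every coordinate $i$, which gives the two componentwise vector inequalities $\vec{u} \preceq \vec{\val} + c\one$ and $\vec{\val} \preceq \vec{u} + c\one$. Note that $c \ge 0$ is a scalar, so these displacement vectors $\pm c\one$ are exactly the inputs on which the translation subvariance property (\Cref{def:subvariance}) can act.

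First I would establish one direction. Starting from $\vec{u} \preceq \vec{\val} + c\one$, apply monotonicity (\Cref{def:monotonic}) to obtain $\op \vec{u} \preceq \op(\vec{\val} + c\one)$, and then invoke translation subvariance to rewrite the right-hand side as $\op \vec{\val} + \gamma c\one$. This yields $\op \vec{u} - \op \vec{\val} \preceq \gamma c\one$.

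By the symmetric argument, swapping the roles of $\vec{u}$ and $\vec{\val}$, starting from $\vec{\val} \preceq \vec{u} + c\one$ I would obtain $\op \vec{\val} - \op \vec{u} \preceq \gamma c\one$, equivalently $\op \vec{u} - \op \vec{\val} \succeq -\gamma c\one$. Combining the two bounds gives $-\gamma c\one \preceq \op \vec{u} - \op \vec{\val} \preceq \gamma c\one$, and taking the sup norm of $\op \vec{u} - \op \vec{\val}$ yields $\norm{\op \vec{u} - \op \vec{\val}}_\infty \le \gamma c = \gamma \norm{\vec{u} - \vec{\val}}_\infty$, which is the claimed contraction inequality.

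This is a textbook-style argument and presents no genuine obstacle; the only points requiring care are applying monotonicity in the correct direction and using the \emph{same} constant $\gamma \in [0,1)$ from translation subvariance, so that the resulting modulus of contraction is exactly $\gamma$ and lies strictly below $1$. It is worth emphasizing that translation subvariance is used here in its exact (equality) form, so no slack is introduced beyond the factor $\gamma$; this is what makes the bound tight and guarantees $\op$ is a genuine $\gamma$-contraction.
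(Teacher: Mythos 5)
Your proof is correct and follows essentially the same route as the paper's: both arguments sandwich $\vec{u}$ and $\vec{\val}$ within $\pm c\one$ of each other where $c = \norm{\vec{u}-\vec{\val}}_\infty$, then apply monotonicity followed by translation subvariance (in its exact, equality form) to transfer the sandwich to $\op\vec{u}$ and $\op\vec{\val}$ with the factor $\gamma$, and conclude by taking the sup norm. The paper writes this componentwise while you write it as vector inequalities, but that is purely notational.
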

The proof of~\cref{th:contraction} follows directly from Proposition 2.1.3 in \citep{Bertsekas2013}. We re-derive the proof for the sake of completeness.
\begin{proof}
	Denote
	\[
		c = \max_{s \in S} | \vec{u}_s - \vec{\val}_s |~.
	\]
	Therefore for all $s \in \states$,
	\begin{align*}\label{eq:cm1}
	    \vec{u}_s - c \leq \vec{\val}_{s} \leq \vec{u}_{s} + c~.
	\end{align*}
	Applying $\op$ to these inequalities and using the \emph{translation subvariance} (\cref{def:subvariance}) and \emph{monotonicity} (\cref{def:monotonic}) properties, we obtain that for all $s \in \states$,
	\[
		(\op \vec{u} )_{s} - \gamma c \leq (\op \vec\val)_{s} \leq (\op \vec u)_{s} + \gamma c~.
	\]
	It follows that for all $s \in \states$,
	\[
		\abs{(\op\vec{\val})_{s} - (\op \vec{u} )_{s}  } \leq \gamma c~,
	\]
	and therefore $\norm{ \op \vec{u} - \op \vec{\val}}_\infty \leq \gamma c$~, proving the stated result.

	
\end{proof}

\begin{restatable}[]{lemma}{varlemma}\label{lemma:varlemma}
For any policy $\pi$, let $\hat{\vec v}^{\pi}$ and $\vec{v}^{\pi}$ be the fixed point of the $\varlabel$ policy evaluation operator $\vop$ and Bellman policy evaluation operator $\bop_P$. If the VaR Bellman policy evaluation operator $\vop$ dominates the Bellman policy evaluation operator $\bop_P$ at $\hat{\vec v}^{\pi}$, i.e., $\vop \hat{\vec v}^{\pi} \preceq \bop_P \hat{\vec v}^{\pi}$, then, the fixed point of the $\varlabel$ Bellman evaluation operator $\vop$ dominates the fixed point of the Bellman evaluation operator $\bop_P$, i.e., $\hat{\vec{v}}^{\pi}  \preceq \vec{v}^{\pi}$.
\end{restatable}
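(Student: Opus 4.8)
The plan is to reduce the hypothesis to a one-sided sub-fixed-point inequality for the ordinary Bellman operator and then run the standard monotone value-iteration bootstrap. First I would invoke the fixed-point identity $\hat{\vec v}^{\pi} = \vop\hat{\vec v}^{\pi}$. Substituting this into the assumed domination $\vop\hat{\vec v}^{\pi}\preceq\bop_P\hat{\vec v}^{\pi}$ immediately yields
\[
\hat{\vec v}^{\pi} \;=\; \vop\hat{\vec v}^{\pi} \;\preceq\; \bop_P\hat{\vec v}^{\pi} \enspace,
\]
so $\hat{\vec v}^{\pi}$ is a \emph{sub-solution} of the ordinary Bellman evaluation operator $\bop_P$.

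Next I would exploit that $\bop_P$ is monotone in the sense of \cref{def:monotonic}: since each $\vec p_{s,a}\succeq\zero$ is a probability vector, each weight $\pi(s,a)\ge 0$, and $\gamma\ge 0$, the map $\vec v\mapsto\bop_P\vec v$ is a nonnegative linear transformation of $\vec v$ shifted by a constant, hence order-preserving. Applying $\bop_P$ to both sides of $\hat{\vec v}^{\pi}\preceq\bop_P\hat{\vec v}^{\pi}$ and iterating, a one-line induction on $n$ gives the increasing chain
\[
\hat{\vec v}^{\pi} \preceq \bop_P\hat{\vec v}^{\pi} \preceq \bop_P^2\hat{\vec v}^{\pi} \preceq \cdots \preceq \bop_P^{\,n}\hat{\vec v}^{\pi}
\]
for every $n\in\NaturalPlus$.

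Finally I would pass to the limit $n\to\infty$. Because $\bop_P$ satisfies translation subvariance and monotonicity, \cref{th:contraction} shows it is a $\gamma$-contraction on $(\real^S,\norm{\cdot}_\infty)$; hence it has the unique fixed point $\vec v^{\pi}$ and $\bop_P^{\,n}\hat{\vec v}^{\pi}\to\vec v^{\pi}$ from any starting point. Since the componentwise order $\preceq$ is closed under $\norm{\cdot}_\infty$-limits, taking limits through the chain preserves the inequality and yields
\[
\hat{\vec v}^{\pi} \;\preceq\; \lim_{n\to\infty}\bop_P^{\,n}\hat{\vec v}^{\pi} \;=\; \vec v^{\pi} \enspace,
\]
which is exactly the claim.

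I do not anticipate a genuine obstacle: this is the textbook ``sub-solutions lie below the fixed point'' argument for a monotone contraction. The only points needing care are (i) cleanly converting the hypothesis through the fixed-point identity of $\vop$ so that the inequality is stated purely in terms of $\bop_P$, and (ii) justifying that the limit preserves $\preceq$, for which it suffices that the order is componentwise and the convergence is in $\norm{\cdot}_\infty$. Notably, the argument uses no structural property of $\vop$ beyond its fixed-point equation, so monotonicity and contractivity of $\vop$ are not needed here.
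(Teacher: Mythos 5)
Your proof is correct, but it follows a genuinely different route from the paper's. The paper argues algebraically: it subtracts the two fixed-point identities and uses the affine form of the evaluation operator, $\bop_P \vec{v} = \vec{r}_{\pi} + \gamma \vec{P}_{\pi}\vec{v}$, to obtain $\hvpi - \vpi \preceq \gamma \vec{P}_{\pi}(\hvpi - \vpi)$, i.e.\ $(\vec{I}-\gamma\vec{P}_{\pi})(\hvpi-\vpi)\preceq \vec{0}$, and then concludes by applying $(\vec{I}-\gamma\vec{P}_{\pi})^{-1}$, which is a monotone matrix by its Neumann series $\sum_{k\geq 0}\gamma^{k}\vec{P}_{\pi}^{k}$. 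You instead run the order-theoretic bootstrap: the hypothesis plus the fixed-point identity makes $\hvpi$ a sub-solution of $\bop_P$, monotonicity of $\bop_P$ gives the chain $\hvpi \preceq \bop_P^{\,n}\hvpi$ for all $n$, and convergence of the contraction iterates to the unique fixed point $\vpi$ lets you pass to the limit, the componentwise order being closed under $\norm{\cdot}_{\infty}$-limits. The two arguments are really the same fact seen from different angles---unrolling your iteration reproduces exactly the Neumann series that the paper compresses into a matrix inverse---but yours uses strictly less structure: it never invokes linearity of $\bop_P$ or invertibility of $\vec{I}-\gamma\vec{P}_{\pi}$, only monotonicity and $\gamma$-contractivity, so it would carry over verbatim to any monotone contraction in place of $\bop_P$ (e.g., a robust evaluation operator). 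The paper's version buys brevity: three lines of matrix algebra with no induction or limiting argument.
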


We note that in contrast to the Bellman policy evaluation operator $\bop_P$, the $\varlabel$ Bellman policy evaluation operator $\vop$ is a function of the random variable $\tilde{\vec{P}}$ and is not dependent on the transition probabilities $\vec{P}$ assumed in this setting.

Using the assumption $\vopt \hat{\vec v}^{\pi} \preceq \bop_P \hat{\vec v}^{\pi}$, and from $\hvpi = \vop \hvpi$ and $\vpi = \bop_P \vpi$, we get by algebraic manipulations:
\begin{proof}
    
\begin{align*}
    \hvpi - \vpi = \vop \hvpi - \bop_P \vpi \preceq \bop_P \hvpi - \bop_P \vpi \preceq \gamma\vec{ P}_{\pi}(\hvpi - \vpi) \enspace.
\end{align*}
Here $\vec{P}_{\pi}$ is the transition probability function corresponding to policy $\pi$.
Subtracting $\gamma\vec{ P}_{\pi}(\hvpi - \vpi)$ from the above inequality gives,
\begin{align*}
    (\vec{I} - \gamma \vec{P}_{\pi})( \hvpi - \vpi) \preceq \vec{0}\enspace.
\end{align*}
where $\vec{I}$ is the identity matrix. $(\vec{I}-\gamma\vec{ P}_{\pi})^{-1}$ is monotone as can be seen from its Neumann series.
\begin{align*}
    \hvpi - \vpi \preceq (\vec{I} - \gamma\vec{ P}_{\pi})^{-1} \vec{0} = \vec{0}\enspace.
\end{align*}
which proves the result.
\end{proof}

\section{Proofs}




\subsection{Proof of Proposition \ref{prop:validity}}\label{proof:validity}
\validity*
\begin{proof}

From \cref{th:contraction}, we know that an operator is a contraction mapping if it satisfies \emph{monotonicity} and \emph{subvariance} property.

In this proof, we will show that the $\varlabel$ Bellman operator $\vop$ satisfies \emph{monotonicity} and \emph{subvariance} property, and therefore, is a contraction mapping.


We will use shorthand $\vec{r}_{s,a}$ to denote the reward vector corresponding to state $s$ and action $a$, i.e., $\vec{r}_{s,a}=R(s,a,\cdot)$.

First, we show that $\vopt$ satisfies \emph{translation subvariance} property. Consider any $c\in\real$ and state $s$. Then,
\begin{align*}
        (\vopt (\vec v + c \vec 1))_{s} = &\max_{a\in\actions} \var[\pt \tr (\vec{r}_{s,a} + \gamma (\vec v + c \vec 1)) ] \\
        & \relname{a}=\max_{a\in\actions} \var[\pt \tr (\vec{r}_{s,a} + \gamma \vec v + \gamma c \vec 1 )] \\
        & \relname{b}=\max_{a\in\actions} \var[\pt \tr (\vec{r}_{s,a} + \gamma \vec v) + \gamma c] \\
        & \relname{c}=\max_{a\in\actions} \var[\pt \tr (\vec{r}_{s,a} + \gamma \vec v)] + \gamma c \\
        & \relname{d}= (\vopt \vec v)_{s} + \gamma c \enspace .
\end{align*}
$(a)$ follows from simple algebraic manipulations, $(b)$ follows from $\gamma c \pn \tr \vec 1 = \gamma c$, $(c)$ follows from the translational invariance property of $\var$ measure \citep{Sarykalin2008VaR}, and $(d)$ follows the definition of the $\varlabel$ Bellman operator $\vop$. 

Next, we show that $\vopt$ satisfies the \emph{monotonicity} property.

Let $\vec u$  and $\vec v$ be any two value functions such that $\vec v \preceq \vec u$. Consider any state $s\in\states$. Then, 
\begin{align*}
        (\vopt \vec v)_{s}  -  (\vopt \vec u)_{s} & \relname{a}= \max_{a\in\actions}\var[\pt \tr (\vec{r}_{s,a} + \gamma \vec v)] -  \max_{a\in\actions}\var[\pt \tr (\vec{r}_{s,a} + \gamma \vec u)] \\
        & \relname{b} \leq  \max_{a\in\actions}\left(\var[\pt \tr (\vec{r}_{s,a} + \gamma \vec v)] -  \var[\pt \tr (\vec{r}_{s,a} + \gamma \vec u)]\right) \\
       & \relname{c} \leq 0 \\
     (\vopt \vec v)_{s}  & \leq  (\vopt \vec u)_{s} \enspace.
\end{align*}
$(a)$ follows from the definition of the $\var$ Bellman operator $\vopt$, $(b)$ follows from the fact that $\forall a'\in\actions$, $-\max_{a\in\actions}\var[\pt \tr (\vec{r}_{s,a} + \gamma \vec v)] \leq -\var[\vec{p}_{s,a'} \tr (\vec{r}_{s,a'} + \gamma \vec v)]$, $(c)$ follows from the monotonicity property \citep{Sarykalin2008VaR} of the $\varlabel$ operator and $\vec u \succeq \vec v$. 

Thus, we prove that $\vopt$ is a $\gamma$-contraction mapping and a monotone operator.
Since $\vopt$ is a contraction operator on a Banach space and from the Banach fixed point theorem \citep{Agarwal2018Banach}, it follows that the operator $\vopt$  has a unique solution $\hat{\vec{v}}$, i.e., $\vopt\hat{\vec{v}}= \hat{\vec{\val}}$.
\end{proof}


Given a policy $\pi$, a state $s\in\states$, and transition probabilities $\vec{P}$, let 
\[
  (\bop_{\vec{P}}\vec v)_{s} = \sum_{a\in\actions} \pi(s,a) {\vec{p}}_{s,a}\tr\vec{w}_{s,a}
 \text{ and } (\vop\vec v)_{s} =\sum_{a\in\actions} \pi(s,a) \var\left[\tilde{\vec{p}}_{s,a}\tr\vec{w}_{s,a}\right] \enspace,
\]
represent the Bellman evaluation operator corresponding to transition probabilities $\vec{P}$ and the $\var$ Bellman evaluation operator, respectively.

\subsection{Proof of Proposition \ref{prop:lower_bound}}\label{proof:lower_bound}
\lowerbound*
\begin{proof}
   Let $\alpha=\nicefrac{\delta}{S}$.
   Recall that for 
   any policy $\pi$, state $s\in\states$,  transition probabilities ${\vec{P}}$, the Bellman evaluation operator $\bop_{\vec P}$ and the $\varlabel$ Bellman evaluation operator $\vop$ are defined as 
\[
  (\bop_{\vec P}\vec v)_{s} = \sum_{a\in\actions} \pi(s,a) {\vec{p}}_{s,a}\tr\vec{w}_{s,a}
 \text{ and } (\vop\vec v)_{s} =\sum_{a\in\actions} \pi(s,a) \var\left[\tilde{\vec{p}}_{s,a}\tr\vec{w}_{s,a}\right] \enspace,
\]
respectively.


Let $\hat{\vec v}^{\pi}$ be the fixed point of $\vop$ conditioned on observed transitions $\mathcal{D}$, and let $\tilde{\vec v}^{\pi}$ be a random variable that represents the fixed point of $\bop_{\tilde{\vec P}}$ for a given realization of the transition probabilities $\tilde{\vec{P}}$. Then, applying \cref{lemma:varlemma} to $\vop$ and $\bop_{\tilde{\vec P}}$,  we have,
    $\hat{\vec v}^{\pi} \preceq {\vec v}^{\pi}$ implies 
    \begin{align*}
         \vop \hat{\vec v}^{\pi}\preceq \bop_{\tilde{\vec P}}\hat{\vec v}^{\pi}\enspace.
    \end{align*}
That is for each state $s$,
\begin{align}\label{eq:lb1}
\var[\tilde{\vec{p}}\tr_{s,\pi(s)}\hat{\vec v}^{\pi}] \leq   \tilde{\vec{p}}\tr_{s,\pi(s)} \hat{\vec v}^{\pi} \enspace.
\end{align}
Using the equation \eqref{eq:lb1}, we can bound the probability that the $\varlabel$ value function lower bounds the true value.
\end{proof}
\begin{align}\label{eq11}
&\Pr_{\tilde{\vec{P}}}\left[\hat{\vec v}^{\pi} \preceq \tilde{\vec v}^{\pi} \middle|  \mathcal{D}\right] = \Pr_{\tilde{\vec{P}}}\left[\forall s\in\states: \, \var\left[ \tilde{\vec p}\tr_{s,\pi(s)}\hat{\vec v}^{\pi}\right]  \leq   \tilde{\vec p}\tr_{s,\pi(s)} \hat{\vec v}^{\pi} \middle|  \mathcal{D}\right] \enspace.
\end{align}
From the definition of $\varlabel$, we know that for any state $s$ and action $a$,
\begin{align}\label{eq12}
   \,  \Pr_{\tilde{\vec{P}}}\left[\var\left[\tilde{\vec p}\tr_{s,a}\hat{\vec v}^{\pi}\right] \leq   \tilde{\vec p}\tr_{s,a} \hat{\vec v}^{\pi} \middle |  \mathcal{D} \right] \geq 1-\alpha\enspace,
\end{align}
Therefore, using union bound and \eqref{eq11} in \eqref{eq12}, we can write
    \begin{align*}
\Pr_{\tilde{\vec{P}}}\left[\hat{\vec v}^{\pi}  \succ \tilde{\vec v}^{\pi} \middle | \mathcal{D}\right]  &  \leq    \sum_{s\in\states} \Pr_{\tilde{\vec{P}}}\left[\var[\tilde{\vec{p}}\tr_{s,\pi(s)}\hat{\vec v}^{\pi}] > 
 \tilde{\vec{p}}\tr_{s,\pi(s)} \hat{\vec v}^{\pi} \middle| \mathcal{D} \right] \enspace.
 \end{align*}
Thus,
\begin{align*}
   \P\left[\hat{\vec v}^{\pi}  \succ \tilde{\vec v}^{\pi}\middle |  \mathcal{D}\right] & = \sum_{s\in\states} \frac{\delta}{S} = S \frac{\delta}{S} = \delta \enspace.
\end{align*}
\newpage
\subsection{Proof of Proposition \ref{prop:subgaussian}}\label{proof:subgaussian}
 \subgaussian*
\begin{proof}
    \begin{align*}
    (\vop \vec v)_{s} &= \max_{a\in\actions} \var[\pt \tr\wsa] \\
& \relname{a}=\max_{a\in\actions}\sup\big\{t \big| \Pr\left(\pt \tr\wsa \geq t\right) \geq 1-\alpha \big\} \\
& \relname{b}=\max_{a\in\actions}\inf\left\{t \big| \Pr\left((\pt - \pn) \tr \wsa > (t-\pn \tr \wsa)\right) < 1-\alpha \right\} \\
& \relname{c}=\max_{a\in\actions}\inf\left\{t \big| \Pr\left(\exp((\pt - \pn) \tr \wsa) > \exp(t-\pn \tr \wsa)\right) <  1-\alpha \right\} \\
  & \relname{d}\geq \max_{a\in\actions}\inf\left\{ t\bigg| 1-\exp\left(\frac{-(t-\pn\tr\wsa)^2}{2 \wsa \tr \sigman \wsa}\right) < 1-\alpha \right\} \\
  & \relname{e} =\max_{a\in\actions}\inf\left\{t \bigg| (t-\pn\tr\wsa)^2 < -2 \ln({\alpha})\wsa\tr \sigman \wsa \right\} \\
  & \relname{f} = \smash{\max_{a\in\actions}\inf\left\{t \bigg| (t-\pn\tr\wsa) \in \left(-\sqrt{2 \ln(\nicefrac{1}{\alpha})} \sqrt{\wsa\tr \sigman \wsa}, \sqrt{2 \ln(\nicefrac{1}{\alpha})} \sqrt{\wsa\tr \sigman \wsa}  \right)\right\}} \\
  &\relname{g} = \max_{a\in\actions}\left(\pn\tr \wsa - \sqrt{2\ln(\nicefrac{1}{\alpha})} \sqrt{\wsa \sigman \wsa}\right)\enspace.
\end{align*}


Equality $(a)$ follows from the definition of $\varlabel$, $(b)$ follows
from using the fact that $ \varlabel_{\alpha}[\tilde{X}]  = \inf\{t \in \real: \Pr[\tilde{X} > t] < 1-\alpha \}$ and subtracting $\pn\tr\wsa$ on both sides, $(c)$ follows from taking exponential on both sides, $(d)$ follows from applying the upper-bound given by Chernoff bound for a sub-Gaussian distribution \citep{Boucheron2013Concentration} i.e., $\Pr\left(\exp((\pt - \pn) \tr \wsa) \leq \exp(t-\pn \tr \wsa)\right) \leq \exp\left(\frac{-(t-\pn\tr\wsa)^2}{2 \wsa \tr \sigman \wsa}\right)$, $(e)$ follows from subtracting 1 on both sides and then, taking $\ln$ on both sides, $(f)$ follows from simple algebraic manipulations and $(g)$ follows from taking the infimum of the solution interval of t. 

Solving for $t$ in step $(g)$, we get
$t = \var[\pt \tr \wsa] = \pn\tr \wsa - \sqrt{2\ln(\nicefrac{1}{\alpha})} \sqrt{\wsa \sigman \wsa}$ which proves the stated result.

Next, we derive the $\varlabel$ Bellman optimality update $\vopt(\vec{v})$ when $\tilde{\vec{p}}_{s,a} \forall s\in\states,a\in\actions$ is normally distributed.

Consider the $\varlabel$ Bellman optimality operator defined for any state $s$ and value function $\vec v$ as
\begin{align}\label{eq:3.4.1}
   (\vopt \vec v)_{s} = \max_{a\in\actions}\varop \left[\tilde{\vec p}_{s,a}\tr\vec{w}_{s,a} \right] \enspace.
\end{align}
From the theory of multivariate Gaussian distributions \citep{Boucheron2013Concentration}, we know that, for any state $s$ and action $a$, if $\pn$ is Gaussian distributed $\mathcal{N}(\pn,\sigman)$, then, $\pt \tr \wsa$ is also Gaussian distributed $\mathcal{N}(\pn \tr \wsa,\wsa \tr \sigman\wsa)$.
To find the $\var[\pt \tr \wsa]$ for any state $s$ and action $a$, it is sufficient to find $t$ such that 
$\Pr(\pt \tr \wsa > t) = 1-\alpha$.
\begin{align*}
    \Pr\left( \frac{(\tilde{\vec{p}} - \pn)\tr \wsa}{\sqrt{\wsa \tr \sigman \wsa}}  > \frac{t- \pn\tr \wsa}{\sqrt{\wsa \tr \sigman \wsa}}\right) & = 1-\alpha \\
    1- \Phi\left(\frac{t - \pn\tr \wsa}{\sqrt{\wsa \tr \sigman \wsa}}\right) & = 1-\alpha \\
    \left(\frac{t - \pn\tr \wsa}{\sqrt{\wsa \tr \sigman \wsa}}\right) &= \Phi^{-1}(\alpha) \\
    t = \pn \tr \wsa   + \Phi^{-1}(\alpha) \sqrt{\wsa \tr \sigman \wsa} &  \\
    t = \pn \tr \wsa   - \Phi^{-1}(1-\alpha) \sqrt{\wsa \tr \sigman \wsa} &  \enspace,
\end{align*}
The first equation follows from substracting $\pn\tr\wsa$ and dividing by $\sqrt{\wsa \tr \sigman \wsa} $ on both sides. The second equality follows from the definition of CDF of $\mathcal{N}(0,1)$ and the third equality follows from simple algebraic manipulations.

Substituting the value of $t=\var[\pt\tr\wsa]=\pn \tr \wsa   - \Phi^{-1}(1-\alpha) \sqrt{\wsa \tr \sigman \wsa}$ in \eqref{eq:3.4.1}, we obtain the stated results.
\end{proof}
The normal form of the $\varlabel$ Bellman optimality operator $\vopt$ is useful to analyze the asymptotic properties of the $\varlabel$ Bellman operator.

\subsection{Proof of Theorem \ref{thm:performance_var}}\label{proof:performance_var}
\performance*
\begin{proof}
 We denote the optimal policy that optimizes the $\delta$-percentile criterion by $\pi^*$, i.e., $\pi^* \in \arg\max_{\pi\in\Pi} \, \varlabel_{\delta}[\rho(\pi, \tilde{\vec{P}})]$.

	Recall that $\hat{\vec v} \in \Real^S$ is the fixed point of the $\var$ Bellman optimality operator $\vopt$, i.e., $\hat{\vec v} = \vopt \hat{\vec v}$ and $\hat{\rho} = p_0\tr \hat{\vec v}$ is the returns of the corresponding policy. The operator $\bop_{\vec{P}}$ represents the Bellman evaluation operator for a given policy $\pi\in\Pi$ and a transition probability model $\vec{P}$. The Bellman evaluation operator $\bop_{\vec P}$ for any $s\in\states$ and value $\vec v\in \real^S$ is defined as
	\[
		(\bop_P \vec v)_{s} = \vec{p}_{s,\pi(s)}\tr \vec{w}_{s,\pi(s)}~,
	\]
	where $\vec{w}_{s,\pi(s)} = \vec{r}_{s,\pi(s)} + \gamma \cdot \vec v$.

	It is known that the Bellman operator $\bop_P$ is a $\gamma$-contraction mapping, monotone, and has a unique fixed point.
We will use $\tilde{\vec v}^{\pi^*} = \tbopt \tilde{\vec v}^{{\pi}^*}$ to represent the random unique fixed point of the Bellman operator $\bopt_{\tilde{\vec{P}}}^{{\pi}^*}$ defined for a random realization of transition probabilities $\tilde{\vec P}$. Furthermore, we will use $\vec{p}_0\tr\vp = \rho(\pi^*, \tilde{\vec{P}})$ to denote the random expected returns corresponding to policy $\pi^*$ and random realization of the transition probabilities $\tilde{\vec{P}}$.

 Suppose that $c\in\real_{+}$ upper-bounds the difference in performance of the $\var$ policy $\hat{\pi}$ and optimal percentile-criterion policy $\pi^*$, i.e., $\rho^* - \hat{\rho} = c \iff \rho^* = \hat{\rho} - c$. 

 From the definition of $\varlabel$ and $\pi\opt$, we know that 
 \begin{align*}
   \rho\opt = \varlabel_{\delta}[\rho(\pi^*,\tilde{\vec P})] = \sup\{t: \Pr[\rho(\pi^*,\tilde{\vec P}) \geq t] \geq 1-\delta \} \enspace.
 \end{align*}
 Since $\hat{\rho} + c$ upper bounds $\rho^*$, we can write
 \begin{equation}\label{eq:ub1}
      \begin{aligned}
    \Pr[\rho(\pi^*,\tilde{\vec P}) \leq \hat{\rho} + c ] \geq\delta 
    \iff \Pr[\rho(\pi^*,\tilde{\vec P}) -\hat{\rho}  \leq  c ] \geq \delta \enspace.
 \end{aligned}
  \end{equation}
 The above equation suggests that the error in the performance of the $\var$ policy is upper-bounded by $c$ if $ \Pr[\rho(\pi^*,\tilde{\vec P}) -\hat{\rho}  \leq  c ]$ holds with at least $\delta$ probability.


To derive a lower bound on $c$, we proceed as follows.

We begin by showing that $ \rho(\pi^*,\tilde{\vec P}) - \hat{\rho} \leq \|\tilde{\vec v}^{\pi^*} -\hat{\vec{v}} \|_{\infty} $.
\begin{equation}\label{eq:ub2}
    \begin{aligned}
    \rho(\pi^*,\tilde{\vec P}) - \hat{\rho}  &\null= \rho(\pi^*,\tilde{\vec P}) - \vec{p}_0\tr \hat{\vec{v}} \qquad &\text{from the definition of $\hat\rho$ and $\rho(\pi^*,\tilde{\vec P})$}\\
	&\leq|\vec{p}_0\tr\tilde{\vec v}^{\pi^*} - \vec{p}_0\tr\hat{\vec{v}}| \qquad &\text{$|x|\geq x$} \\
 &\leq \|\vec{p}_0\|_{1}\|\tilde{\vec v}^{\pi^*} -\hat{\vec{v}} \|_{\infty} \qquad &\text{from H\"older's Inequality} \\
  & \leq \|\tilde{\vec v}^{\pi^*} -\hat{\vec{v}} \|_{\infty} \qquad &\text{$\|\vec{p}_0\|_1=1$} \enspace . \\
  \end{aligned}
  \end{equation}
Next, we bound $ \|\tilde{\vec v}^{\pi^*} -\hat{\vec{v}} \|_{\infty}$ to obtain a lower-bound on $c$.
\begin{align*}
 \|\tilde{\vec v}^{\pi^*} -\hat{\vec{v}} \|_{\infty}  & \relname{a}{=} \|\vp - \tbopt \hat{\vv} + \tbopt \hat{\vv} -\hat{\vv} \|_{\infty} \\
&\relname{b}= \| \tbopt \vp - \tbopt \hat{\vv} +  \tbopt \hat{\vv} -\vopt \hat{\vv} \|_{\infty} \\
  &\relname{c}\leq \| \tbopt \vp - \tbopt \hat{\vv} \|_{\infty} + \| \tbopt \hat{\vv} -\vopt\hat{\vv}\|_{\infty} \\
   &\relname{d}\leq \gamma \|\vp -\hat{\vv} \|_{\infty} + \| \tbopt \hat{\vv} -\vopt\hat{\vv}\|_{\infty} \enspace.
\end{align*}
$(a)$ follows by simply adding and subtracting $\tbopt \hat{\vv}$, $(b)$ follows from the fact that $\vp\!$ and $\hat{\vv}$ are the fixed points of $\tbopt$ and $\vopt$ respectively, $(c)$ follows from applying the triangle inequality to the R.H.S., and $(d)$ follows from the contraction property of a Bellman operator.

Rearranging and re-normalizing the above terms, we get 
\begin{equation}\label{eq:ub3}
 \|\tilde{\vec v}^{\pi^*} -\hat{\vec{v}} \|_{\infty} \leq \frac{1}{(1-\gamma)} \| \tbopt \hat{\vv} -\vopt\hat{\vv}\|_{\infty}\enspace.
\end{equation}
 Then, combining \eqref{eq:ub2} and \eqref{eq:ub3}, we get
\begin{equation}\label{eq:ub4}
\begin{aligned}
 \rho(\pi^*,\tilde{\vec P}) - \hat{\rho}  
 & \leq \frac{1}{1-\gamma}\|\tbopt \hat{\vv} -\vopt\hat{\vv}\|_{\infty} \\ &\relname{a}{=} \frac{1}{1-\gamma} \max_{s\in\states} \, \left( (\tbopt \hat{\vec v})_{s} - (\vopt \hat{\vec v} )_{s} \right)  \\& \relname{b}{=} \frac{1}{1-\gamma}\max_{s\in\states} \, \left( \tilde{\vec p}_{s, \pi^*(s)}\tr \hat{\vec w}_{s, \pi^*(s)} - \var \left[\tilde{\vec{p}}_{s, \hat\pi(s)}\tr \hat{\vec w}_{s, \hat{\pi}(s)} \right]\right) \\
	& \relname{c}{\le} \frac{1}{1-\gamma} \max_{s\in\states} \, \left( \tilde{\vec p}_{s, \pi^*(s)}\tr \hat{\vec w}_{s, \pi^*(s)} - \var\left[ \tilde{\vec p}_{s, \pi^*(s)}\tr \hat{\vec w}_{s, \pi^*(s)} \right] \right)  
 \enspace.  
   \end{aligned}
   \end{equation}

$(a)$ follows from the definition of $l_{\infty}$ norm, $(b)$ follows from the definitions of $\vopt$ and $\mathcal{T}^{\pi^*}_{\tilde{\vec P}}$ Bellman operators, and $(c)$ follows from the fact that $\hat{\pi}(s)=\argmax_{a\in\actions}\var\left[ \tilde{\vec p}_{s, a}\tr \hat{\vec w}_{s, a} \right]$  and thus, $\var\left[ \tilde{\vec p}_{s, \hat{\pi}(s)}\tr \hat{\vec w}_{s, \hat{\pi}(s)} \right] \geq  \var\left[ \tilde{\vec p}_{s, \pi^*(s)}\tr \hat{\vec w}_{s, \pi^*(s)} \right]$.


Therefore, for any $\varepsilon > 0$, we can write
\begin{equation}\label{eq:ub5}
    \begin{aligned}
    &\Pr\left[ \rho(\pi^*,\tilde{\vec P}) - \hat{\rho} \leq \frac{1}{1-\gamma} \max_{s\in\states} \,  \left( \varlabel_{1-\frac{1-\delta-\varepsilon}{S}} \left[ \tilde{\vec{p}}_{s,\pi^*(s)} \tr \hat{\vec w}_{s, \pi^*(s)} \right] - \var \left[ \tilde{\vec{p}}_{s,\pi^*(s)} \tr \hat{\vec w}_{s, \pi^*(s)} \right] \right) \right] \geq \delta\\
       & \Pr\left[ \rho(\pi^*,\tilde{\vec P}) - \hat{\rho} \leq \frac{1}{1-\gamma} \max_{s\in\states,a\in\actions} \,  \left( \varlabel_{1-\frac{1-\delta-\varepsilon}{S}} \left[ \tpsa \tr \hat{\vec w}_{s, a} \right] - \var \left[ \tpsa \tr \hat{\vec w}_{s, a} \right] \right) \right] \geq \delta \enspace.
    \end{aligned}
\end{equation}
 The first equation in the above follows from $\Pr\left[\tilde{\vec p}_{s,a}\wsa > \varlabel_{1-\frac{1-\delta-\varepsilon}{S}} \left[ \tpsa \tr \hat{\vec w}_{s, a} \right]\right]< \frac{1-\delta}{S}$ for any state $s$ and action $a$ and applying union bound over all states yields $\Pr\left[\tilde{\vec p}_{s,a}\wsa > \varlabel_{1-\frac{1-\delta-\varepsilon}{S}} \left[ \tpsa \tr \hat{\vec w}_{s, a} \right] \forall s\in\states\right]< 1-\delta $. Thus, we get $\Pr\left[\tilde{\vec p}_{s,a}\wsa \leq \varlabel_{1-\frac{1-\delta-\varepsilon}{S}} \left[ \tpsa \tr \hat{\vec w}_{s, a} \right], \forall s\in\states\right ]\geq \delta$. The second equation follows by simply replacing $\pi^*(s)$ with the worst-case action, i.e., action that maximizes the upper bound.
 
Comparing \eqref{eq:ub1} and \eqref{eq:ub5}, we get
\begin{align*}
    	\rho\opt - \hat{\rho}  \leq \frac{1}{1-\gamma} \max_{s\in\states} \max_{a\in\actions} \, \left( \inf_{\varepsilon > 0} \varwa_{1-\frac{1-\delta - \varepsilon}{S}}\left[\tilde{\vec p}_{s,a}\tr\wsa\right] - \varlabel_{\alpha}\left[\tilde{\vec p}_{s,a}\tr\wsa\right]
  \right)\enspace.
\end{align*}

Note that $\varlabel$ of returns is upper semicontinuous and thus, this implies that the infimum in the above equation is achieved at $\varepsilon=0$.

Therefore, we can write
\begin{align*}
    	\rho\opt - \hat{\rho}  \leq \frac{1}{1-\gamma} \max_{s\in\states} \max_{a\in\actions} \, \left( \varwa_{1-\frac{1-\delta}{S}}\left[\tilde{\vec p}_{s,a}\tr\wsa\right] - \varlabel_{\alpha}\left[\tilde{\vec p}_{s,a}\tr\wsa\right]
  \right)\enspace.
\end{align*}
\end{proof}
\subsection{Proof of Theorem \ref{thm:asymptotic_performance}}\label{proof:asymptotic_performance}
\asymptotic*
\begin{proof}
As noted in \cref{sec:prelims}, we assume that as $N\to\infty$, the posterior distribution of transition probabilities at any state $s$ and action $a$ ($\tilde{\vec p}_{s,a}$) converges in the limit to a multivariate Gaussian distribution with mean $\vec{p}^*_{s,a}$ and covariance matrix $\nicefrac{I(\vec{p}^*_{s,a})^{-1}}{N}$. Hence, we can write
\[
\lim_{N\to\infty} \sqrt{N}(\pt\tr \wsa- {\vec{p}_{s,a}^*}\tr\wsa) \rightsquigarrow  \mathcal{N}(0,\wsa\tr I(\vec{p}\opt_{s,a})^{-1}\wsa) \enspace.
\]

We know from \cref{prop:subgaussian}, that $\var$ of a univariate Gaussian random variable $X \sim \mathcal{N}(\mu,\sigma^2)$ can be written as $\vara[X]=\mu + \Phi^{-1}(\alpha)\sigma$. Therefore, applying this result to the R.H.S. of \cref{eq:performance}, we get 
\begin{equation} \label{eq:p1}
    \begin{aligned}
 \lim_{\mathclap{N\to\infty}} \sqrt{N} (\rho\opt \!-\! \hat{\rho}) &\relname{a}\leq  \frac{1}{1-\gamma} \max_{s\in\states}\max_{a\in\actions}  \left(\sqrt{N} {{\vec p}^*_{s,a}} \tr \wsa +  \Phi^{-1}\left(1-\frac{1-\delta}{S}\right) \sqrt{\vec{w}_{s,a}\tr I({\vec p}_{s,a}\opt)^{-1} \vec{w}_{s,a}} \right. \\ & \left.  \qquad \null - \left(\sqrt{N} {{\vec p}_{s,a}^*} \tr \wsa  +  \Phi^{-1}\left(\frac{\delta}{S}\right) \sqrt{\vec{w}_{s,a}\tr I({\vec p}_{s,a}\opt)^{-1} \vec{w}_{s,a}}\right) \right) \\
   & \relname{b} = \frac{1}{1-\gamma} \max_{s\in\states}\max_{a\in\actions}  \left(  \Phi^{-1}\left(1-\frac{1-\delta}{S}\right) \sqrt{\vec{w}_{s,a}\tr I({\vec p}_{s,a}\opt)^{-1} \vec{w}_{s,a}} \right. \\ & \left. \qquad \null- \Phi^{-1}\left(\frac{\delta}{S}\right) \sqrt{\vec{w}_{s,a}\tr I({\vec p}\opt_{s,a})^{-1} \vec{w}_{s,a}} \right) \\
     & \relname{c}=  \frac{1}{1-\gamma}\max_{s\in\states}\max_{a\in\actions}  \left(  \left(\Phi^{-1}\left(1-\frac{1-\delta}{S}\right) - \Phi^{-1}\left(\frac{\delta}{S}\right)\right)\sqrt{\vec{w}_{s,a}\tr I({\vec p}\opt_{s,a})^{-1} \vec{w}_{s,a}} \right)\\
      & \relname{d} \leq  \frac{1}{1-\gamma}\max_{s\in\states}\max_{a\in\actions}  \left( 2 \Phi^{-1}\left(1-\frac{\delta}{S}\right)\sqrt{\vec{w}_{s,a}\tr I({\vec p}\opt_{s,a})^{-1} \vec{w}_{s,a}} \right) \enspace.
    \end{aligned}
\end{equation}
$(a)$ follows from the definition of $\var$ under Gaussian assumptions, $(b)$ and $(c)$ follow from simple algebraic manipulations, and $(d)$ follows from the fact that $\forall \delta \in (0,\nicefrac{1}{2}), 0 \geq \Phi^{-1}\left(1-\frac{1-\delta}{S}\right) \geq \Phi^{-1}(\frac{\delta}{S})$, $- \Phi^{-1}(\frac{\delta}{S}) =  \Phi^{-1}(1-\frac{\delta}{S})$, and assuming $S>1$.

We prove the second inequality in \cref{thm:asymptotic_performance}, by leveraging the sub-Gaussian bounds for a standard normal distribution $\mathcal{N}(0,\sigma^2)$ to show that $\forall \alpha' \in (0,\nicefrac{1}{2}), \, \Phi^{-1}(1-\alpha') \leq \sqrt{2\ln(\nicefrac{1}{\alpha'})}$.

We know that for a standard normal distribution $X\sim \mathcal{N}(0,\sigma^2)$ where $\sigma \in \real$, the following sub-Gaussian bounds holds~\citep{Boucheron2013Concentration}:
\begin{align}\label{eq:p2}
    \Pr\left( -\sqrt{2\ln(\nicefrac{2}{\alpha'})}\sigma  \leq X  \leq \sqrt{2\ln(\nicefrac{2}{\alpha'})} \sigma \right) \geq 1-\alpha' \enspace.
\end{align}
By definition, for a standard normal  distribution $\mathcal{N}(0,\sigma^2)$, 
it holds
\begin{align}\label{eq:p3}
    \Pr\left( -\Phi^{-1}(1-\nicefrac{\alpha'}{2}) \sigma  \leq X  \leq \Phi^{-1}(1-\nicefrac{\alpha'}{2}) \sigma  \right)= 1-\alpha' \enspace. 
\end{align}
Comparing equation \eqref{eq:p2} and \eqref{eq:p3}, we get
\begin{equation}\label{eq:p5}
    \begin{aligned}
    &\Phi^{-1}(1-\nicefrac{\alpha'}{2}) \leq \sqrt{2 \ln(\nicefrac{2}{\alpha'})} 
   \implies  \,  & \Phi^{-1}(1-{\alpha'}) \leq \sqrt{2 \ln(\nicefrac{1}{\alpha'})} \enspace.
\end{aligned}
\end{equation}
Using the result in \eqref{eq:p5} in equation \eqref{eq:p1}, proves the second inequality of the theorem.
\end{proof}

\subsection{Proof of Proposition \ref{prop:empirical_error}}\label{proof:empirical_error}
\empirical*
\begin{proof}
To prove this theorem, we first compute the derivative of the inverse of the CDF $\nicefrac{\partial \Phi^{-1}_f(\alpha)}{\partial \alpha}$ as follows.
From the definition of the cdf $\Phi_f$ and $\varlabel$, we know that, for any $\alpha \in (0,\nicefrac{1}{2})$, $\Phi_f^{-1}(\alpha)=\var[\pt\tr\wsa]$.

From the inverse-function theorem, we get,
\begin{align*}
(\Phi_f^{-1}(\alpha))' & = \frac{1}{\Phi_f^{'}(\var[\pt\tr\wsa])} \\ & = \frac{1}{\Phi_f^{'}(\Phi_f^{-1}(\alpha))} \\ & = \frac{1}{\eta}\enspace.
\end{align*}
Equipped with the above result, we can now proceed to prove the main result.

To prove the result, we need to find $M\opt$ such that
\begin{equation}\label{eq:eb1}
    \begin{aligned}
&\Pr\left[{\var}[\tilde{\vec p}_{s,a}\tr\vec{w}_{s,a}]-\varepsilon \leq \wvar[\tilde{\vec p}_{s,a}\tr\vec{w}_{s,a}] \leq {\var}[\tilde{\vec p}_{s,a}\tr\vec{w}_{s,a}] + \varepsilon \right] \geq 1-\zeta\\
&\relname{a}=\Pr\left[\Phi_f^{-1}(\alpha-\varepsilon{\eta}) \leq \wvar[\tilde{\vec p}_{s,a}\tr\vec{w}_{s,a}] \leq \Phi_f^{-1}(\alpha+ \varepsilon \eta)\right] \geq 1-\zeta\enspace.
\end{aligned}
\end{equation}
Equation $(a)$ follows from applying a first order Taylor expansion to $\Phi_{f}^{-1}$ around the point $\alpha$.
We apply the following results to obtain a bound on $M\opt$.

Let $\hat{F}$ and $F$ represent the empirical CDF and the true CDF of a random variable $\tilde{Z}$. Suppose that the empirical estimate of the CDF $\hat{F}$ is estimated using $M\opt$ samples from the true distribution of $\tilde{Z}$ and $0<\zeta<1$ represents the desired level of confidence guarantees, Then, from DWK inequality \citep{Massart1990dwk}, we know that
    \begin{align*}
        \Pr\left(\|\hat{F} - F
        \|_{\infty} \geq \sqrt{\nicefrac{\ln(\nicefrac{2}{\zeta})}{2M\opt}}\right) \leq \zeta \enspace.
    \end{align*}
The above equation implies that
\begin{equation}\label{eqpr}
    \begin{aligned}
        \Pr\left(\exists p\in (0,1): F^{-1}(p)<\hat{F}^{-1}(p-z_t)  \text{ or } F^{-1}(p)>\hat{F}^{-1}(p+u_t) \right) \leq \zeta \enspace,
    \end{aligned}
\end{equation}
where $z_t=u_t=\sqrt{\nicefrac{\ln(\nicefrac{2}{\zeta})}{2M\opt}}$.

Thus, applying equation \eqref{eqpr} to \eqref{eq:eb1}, i.e., $z_t=\sqrt{\nicefrac{\ln(\nicefrac{2}{\zeta})}{2M\opt}}=\varepsilon{\eta}$ gives $M\opt=\nicefrac{\ln(\nicefrac{2}{\zeta}) }{2\varepsilon^2 \eta^2}$.

\end{proof}

\subsection{Proof of Proposition \ref{prop:valueiteration}}
\valueiteration*
\begin{proof}\label{app:valueiteration}
We begin by noting that similar to the $\varlabel$ Bellman operator $\vopt$, the empirical $\varlabel$ Bellman operator $\evopt$ is also a contraction mapping because of the monotonicity of the $\var$ operator and since the empirical $\var$ in the algorithm is always computed from a fixed set of transition probabilities sampled from the posterior of $\tilde{\vec P}$.

We can bound the error between the value function $\vec{u}_k$ returned by \cref{algo:var_robust} and the fixed point ($\hat{\vec u}$) of $\evopt$ as 
\begin{align*}
    \|\vec{u}_{k-1} - \uhat \|_{\infty} 
    &  = \|\vec{u}_{k-1} + \uk -\uk - \uhat \|_{\infty}  \\
    & \relname{a}\leq \| \vec{u}_{k-1} - \uk \|_{\infty} + \| \vec{u}_{k} - \uhat\|_{\infty} \\
  \frac{1}{\gamma}   \|\vec{u}_{k} - \uhat \|_{\infty}  & \relname{b}\leq \frac{\varepsilon (\gamma-1)}{\gamma} +  \| \uk - \uhat\|_{\infty} \\
   \left(\frac{1}{\gamma}-1\right)  \|\uk - \uhat \|_{\infty} &  \relname{c}\leq  \frac{\varepsilon (1-\gamma)}{\gamma} \\
   \|\uk - \uhat\|_{\infty} &\relname{d}\leq \varepsilon\enspace.
\end{align*}
$(a)$ follows from applying the triangle inequality to the $l_{\infty}$ norm, $(b)$ follows from the fact that $\| \vec{u}_{k-1} - \uk \|_{\infty} \leq \nicefrac{\varepsilon (1-\gamma)}{\gamma}$ and  $\|\vec{u}_{k-1} - \uhat \|_{\infty} \leq \frac{1}{\gamma}   \|\vec{u}_{k} - \uhat \|_{\infty}$  due to the contraction property of $\evopt$ operator, and finally, $(c)$ and $(d)$ follow from simply arranging the terms on both sides.

Next, we bound the approximation error $\|\uk - \upk\|_{\infty}$.
From the above result, we can write
\begin{equation}\label{eq:B.1.1}
    -\varepsilon \vec{1} \leq \uhat - \uk \leq \varepsilon \vec{1}\enspace.
\end{equation}

We will use the shorthand $\pi$, $\mathcal{T}$ and $\mathcal{T}^{\pi_k}$ to represent the policy $\pi_k$, empirical $\varlabel$ Bellman optimality $\evopt$, and $\varlabel$ Bellman policy evaluation operator $\evopt^{\pi}$ respectively.
Suppose that $\delta= \uhat - \upi$.
\begin{equation}\label{eq:B.1.2}
    \begin{aligned}
    \delta &= \uhat - \upi \\
    & \relname{a}= \mathcal{T}\uhat - \mathcal{T}^{\pi}\upi \\
    & \relname{b} \leq \mathcal{T}(\uk + \varepsilon \vec{1}) -\mathcal{T}^{\pi}\upi \\
    & \relname{c} = \mathcal{T}\uk - \mathcal{T}^{\pi}\upi + \gamma \varepsilon \vec{1} \\
    &\relname{d} = \mathcal{T}^{\pi} \uk - \mathcal{T}^{\pi}\upi + \gamma \varepsilon \vec{1} \\
   & \relname{e} \leq \mathcal{T}^{\pi}(\uhat + \varepsilon \vec{1}) - \mathcal{T}^{\pi}\upi + \gamma \varepsilon \vec{1}\\
   &\relname{f} = \mathcal{T}^{\pi} \uhat - \mathcal{T}^{\pi} \upi + 2\gamma \varepsilon \vec{1} \\
  \|\delta \|_{\infty} &\relname{g} = \|\mathcal{T}^{\pi} \uhat - \mathcal{T}^{\pi} \upi + 2\gamma \varepsilon \vec{1} \|_{\infty} \\
  & \relname{h} = \gamma \| \uhat - \upi \|_{\infty} + 2\gamma \varepsilon \vec{1} \\
 (1-\gamma) \|\delta \|_{\infty} &\relname{i}= 2\gamma \varepsilon\vec{1} \\
 \|\delta \|_{\infty} &\relname{j} \leq \frac{2\gamma \varepsilon}{(1-\gamma)} \enspace.
\end{aligned}
\end{equation}

$(a)$ follows from the fact that $\uhat$ is the fixed point of $\mathcal{T} \uhat$ and $\upi$ is the fixed point of $\mathcal{T}^{\pi}$, $(b)$ follows from \eqref{eq:B.1.1}, $(c)$ follows from the $\gamma$-contraction property of $\mathcal{T}$ operator, $(d)$ follows from the definition of policy $\pi=\pi_k$, $(e)$ follows from \eqref{eq:B.1.1}, $(f)$ follows from simple algebraic manipulations, $(g)$ follows from taking $l_{\infty}$-norm on both sides, $(h)$ follows from applying triangle inequality, and $(i)$ and $(j)$ follow from simple algebraic manipulations.

Thus, $\|\uhat - \upi\|_{\infty}\leq \frac{2\gamma \varepsilon}{(1-\gamma)}$.

Next we bound the error between the optimal value function $\hat{\vec v}$ and the fixed point of the empirical $\varlabel$ Bellman optimality operator $\hat{\vec{u}}$.

     \begin{align*}
    \|\hat{\vec v} - \hat{\vec u}\|_{\infty}  & \relname{a}=  \|\vopt \hat{\vec v} + \vopt \hat{\vec u} - \vopt \hat{\vec u} - \tvopt \hat{\vec u} \|_{\infty} \\
     & \relname{b}= \|\vopt \hat{\vec v} - \vopt \hat{\vec u}  + \vopt \hat{\vec u} - \tvopt \hat{\vec u} \|_{\infty} \\
    &\relname{c}\leq \|\vopt \hat{\vec v} - \vopt \hat{\vec u} \|_{\infty}  + \|\vopt \hat{\vec u} - \tvopt \hat{\vec u} \|_{\infty} \\
    & \relname{d}\leq \gamma  \|\hat{\vec v} - \hat{\vec u}\|_{\infty} + \| \vopt \hat{\vec u} - \tvopt  \hat{\vec u} \|_{\infty} \enspace.
\end{align*}
$(a)$ follows by simply adding and subtracting $\vopt \hat{\vec u}$, $(b)$ follows from simply arranging the terms, $(c)$ follows from applying triangle inequality, 
and $(d)$ follows from the contraction property of the Bellman operator $\vopt$.

Using simple algebraic manipulations and rearranging the terms in the above equation, we can write
\begin{align}\label{eq:valueiter3}
     \|\hat{\vec v} - \hat{\vec u}\|_{\infty} &  \leq \frac{\| \vopt \hat{\vec u} - \tvopt  \hat{\vec u} \|_{\infty}}{1-\gamma} \enspace.
\end{align}

Similarly, we can bound $ \|\hat{\vec v} - \hat{\vec u}\|_{\infty}$ in the following manner.
 \begin{align*}
    \|\hat{\vec v} - \hat{\vec u}\|_{\infty}  & \relname{a}=  \|\vopt \hat{\vec v} + \evopt \hat{\vec v} - \evopt \hat{\vec v} - \tvopt \hat{\vec u} \|_{\infty} \\
     & \relname{b}= \|\vopt \hat{\vec v} - \evopt \hat{\vec v}  + \evopt \hat{\vec v} - \tvopt \hat{\vec u} \|_{\infty} \\
    &\relname{c}\leq \|\evopt \hat{\vec v} - \evopt \hat{\vec u} \|_{\infty}  + \|\vopt \hat{\vec v} - \tvopt \hat{\vec v} \|_{\infty} \\
    & \relname{d}\leq \gamma  \|\hat{\vec v} - \hat{\vec u}\|_{\infty} + \| \vopt \hat{\vec v} - \tvopt  \hat{\vec v} \|_{\infty} \enspace.
\end{align*}
$(a)$ follows by simply adding and subtracting $\evopt \hat{\vec v}$, $(b)$ follows from simply arranging the terms, $(c)$ follows from applying triangle inequality, 
and $(d)$ follows from the contraction property of the Bellman operator $\evopt$.

Using simple algebraic manipulations and rearranging the terms in the above equation, we can write
\begin{align}\label{eq:valueiter4}
     \|\hat{\vec v} - \hat{\vec u}\|_{\infty} &  \leq \frac{\| \vopt \hat{\vec v} - \tvopt  \hat{\vec v} \|_{\infty}}{1-\gamma} \enspace.
\end{align}

Thus $\|\hat{\vec v} - \hat{\vec u}\|_{\infty}   \leq \frac{1}{1-\gamma}\min\left(\| \vopt \hat{\vec v} - \tvopt  \hat{\vec v} \|_{\infty}, \| \vopt \hat{\vec u} - \tvopt  \hat{\vec u} \|_{\infty}\right)$.

\end{proof}

\subsection{Proof of Proposition \ref{prop:time_complexity}}\label{proof:time_complexity}
\timecomplexity*
\begin{proof}
To bound the number of iterations ($k$) required to achieve value approximation error $\|\uk - \uhat\|_{\infty} \leq \varepsilon$, we first bound the value approximation error $ \|\uk - \uhat\|_{\infty} $ as follows.

    \begin{align*}
         \| \hat{\vec u} - \uk \|_{\infty} 
         &  \relname{a} \leq \| \tvopt {\hat{\vec u}} - \tvopt {\vec u}_{k-1} \|_{\infty} \\ 
         & \relname{b} \leq  \gamma \| \hat{\vec u} - {\vec u}_{k-1} \|_{\infty}   \\  
         & \relname{c} \leq \gamma^2 
         \| \tvopt \hat{\vec u} - \tvopt {\vec u}_{k-2} \|_{\infty} \\ 
         &\relname{d}   \leq \gamma^{k} \frac{r_{\max}}{1-\gamma} \enspace. 
    \end{align*}
$(a)$ follows from the definition of the $\varlabel$ Bellman operator $\vopt$, $(b)$ follows from the contraction property of the empirical $\varlabel$ Bellman operator $\evopt$, $(c)$ follows from applying the same procedure as in $(a)$ to step $(b)$, and $(d)$ follows from unrolling $(c)$ over $k-2$ time steps and using the fact that $\|\hat{\vec{u}} - \vec{u}_0\|_{\infty} \leq \nicefrac{r_{\max}}{1-\gamma}$ for $\vec{u}_0=\vec{0}$.

We find $k$ such that $ \| \hat{\vec u} - \uk \|_{\infty} \leq \varepsilon$, 
\begin{equation}\label{eq:valueiter1}
        \begin{aligned}
        &\frac{\gamma^k}{1-\gamma} (r_{\max}) = \varepsilon \\
        &k = \left\lceil\frac{\ln\left(\frac{r_{\max}}{\varepsilon (1-\gamma)}\right)}{\ln(\nicefrac{1}{\gamma})}\right\rceil \\ &k=\left\lceil{\log_{\frac{1}{\gamma}}\left(\frac{r_{\max}}{\varepsilon (1-\gamma)}\right)}\right\rceil 
        \enspace.
    \end{aligned}
\end{equation}

    The time complexity follows from the fact that any quantile of an array of real values can be computed in linear time using the Quick Select algorithm \citep{hoare1961quick}, computing 1-step return requires $\mathcal{O}(S)$ operations, and a single iteration of the loop in lines 3-12 of \cref{algo:var_robust} computes quantile of returns $SA$ times.

Suppose that $\Phi_f$ is differentiable at the point $\var[\tilde{\vec p}_{s,a}\tr\vec{w}_{s,a}]$ and let $\eta= \Phi'_f(\var[\pt\tr\wsa])$ represents the density of estimate of returns at point $\var[\tilde{\vec p}_{s,a}\tr\vec{w}_{s,a}]$. 

Let for any state $s$ and action $a$, $\hat{\vec{w}}_{s,a}=\vec{r}_{s,a} + \gamma \hat{\vec{v}}$.
From \cref{prop:empirical_error}, we know that to guarantee
$\Pr\left[\abs{{\wvar}[\tilde{\vec p}_{s,a}\tr\hat{\vec{w}}_{s,a}]-\var[\tilde{\vec p}_{s,a}\tr\hat{\vec{w}}_{s,a}] } > \varepsilon\right] \leq \zeta\enspace$, M should satisfy $\lim_{\varepsilon \to 0} M \varepsilon^2= \nicefrac{\ln(\nicefrac{2}{\zeta})}{2 \eta^2}$.

Furthermore, in \cref{prop:valueiteration}, we showed that
$ \|\hat{\vec v} - \hat{\vec u}\|_{\infty}  \leq \frac{\| \vopt \hat{\vec v} - \tvopt  \hat{\vec v} \|_{\infty}}{1-\gamma}$.

Then,
\begin{equation} \label{eq:3.8.1}
    \begin{aligned}
        \Pr\left[\|\hat{\vec v} - \hat{\vec u}\|_{\infty}  \leq \varepsilon\right] 
        &\relname{a} \geq \Pr\left[\frac{\| \vopt \hat{\vec v} - \tvopt  \hat{\vec v} \|_{\infty}}{1-\gamma} \leq \varepsilon \right] \\
        &\relname{b} = \Pr\left[\| \vopt \hat{\vec v} - \tvopt  \hat{\vec v} \|_{\infty} \leq \varepsilon (1-\gamma) \right] \enspace. \\
        & \relname{c}= 1 - \sum_{s\in\states}\Pr\left[\abs{\max_{a\in\actions}{\wvar}[\tilde{\vec p}_{s,a}\tr\hat{\vec{w}}_{s,a}]-\max_{a\in\actions}\var[\tilde{\vec p}_{s,a}\tr\hat{\vec{w}}_{s,a}] } >\varepsilon (1-\gamma) \right] \\
        & \relname{d}= 1 - \frac{\zeta}{S} S\\
        &= 1-\zeta \enspace.
    \end{aligned}
\end{equation}
$(a)$ follows from the fact that $\|\hat{\vec v} - \hat{\vec u}\|_{\infty} \leq \frac{\| \vopt \hat{\vec v} - \tvopt  \hat{\vec v} \|_{\infty}}{1-\gamma}$, $(b)$ follows from arranging terms, $(c)$ follows from applying the union bound, and $(d)$ follows from applying \cref{prop:empirical_error} and setting confidence to $\nicefrac{\zeta}{S}$.

The final time complexity results follow from the triangle inequality $\|\hat{\vec v} - \uk\|_{\infty} \leq \|\hat{\vec v} - \hat{\vec u}\|_{\infty} + \|\uk - \hat{\vec u}\|_{\infty}$ and combining results in \cref{eq:3.8.1}, \cref{prop:empirical_error} and the first part of \cref{prop:time_complexity}.

\end{proof}

\subsection{Proof of Proposition \ref{prop:optimality}}\label{proof:optimality}
\equivalence*
\begin{proof}
The first part of the proposition simply follows from the definition of the $\varlabel$ operator.
For any state $s$, action $a$ and value function $\vec v$, we can write
\begin{equation} \label{eq:sa-var-rob}
\min_{\bm{p}\in\mathcal{P}^{\varlabel,{\vec{v}}}_{s,a}}{  \vec{p}\tr {\bm{w}}_{s,a} }
  \quad \!\!=\!\! \quad
  \var \left[{ \tilde{\vec{p}}_{s,a} \tr {\bm{w}}_{s,a} }\right] \enspace.
\end{equation}
The minimum above exists as it minimizes a linear function on a compact set. The direction ``$\ge$'' in \eqref{eq:sa-var-rob} follows immediately from the constraint in the construction of the set $\mathcal{P}^{\varlabel,{\vec{v}}}_{s,a}$. The direction ``$\le$'' follows by linear program duality of the minimization over $\bm{p}$ and from the fact that $\var \left[{ \tilde{\vec{p}}_{s,a} \tr {\bm{w}}_{s,a} } \right] \ge \min_{s'\in \mathcal{S}} {\vec w}_{s,a,s'}$.

Recall that the $\varlabel$ Bellman optimality operator defined for each $s\in \mathcal{S}$ and $\bm{v}\in \Real^S$ as
\begin{equation}\label{eq5.3.1}
  (\vopt \vec v)_{s}
  \quad=\quad
  \max_{a\in\actions} \var \left[{ \tilde{\vec{p}}_{s,a} \tr \wsa }\right] ~.
\end{equation}
Suppose that $\hat{\bm{v}}$ is the unique fixed point of $\vopt$ and $\hat{\bm{w}}_{s,a} = \bm{r}_{s,a} + \gamma\cdot \hat{\bm{v}}$ is the corresponding transition value for each $s\in \mathcal{S}$ and $a\in \mathcal{A}$. That is
\[
\hat{\bm{v}} \quad=\quad \vopt \hat{\bm{v}} \enspace.  
\]
Then, we define the following robust Bellman operator $\hat{\mathcal{T}}\colon \Real^S \to  \Real^S$ for each $s\in \mathcal{S}$ and $\bm{v}\in \Real^S$ as
\begin{equation}\label{robust_form}
\begin{aligned}
   (\hat{\mathcal{T}} \vec v)_{s}
  &\quad=\quad
    \max_{a\in\actions}  \min_{\bm{p}\in\mathcal{P}^{\varlabel,\hat{\vec{v}}}_{s,a}}{  \vec{p}\tr \wsa }\enspace, \quad\text{where} \\
  \mathcal{P}^{\varlabel,\hat{\vec{v}}}_{s,a}
  &\quad=\quad
    \left\{\vec{p} \in \Delta^S \mid \vec{p}\tr \hat{\bm{w}}_{s,a} \geq
    \var\left[ \tilde{\vec{p}}_{s,a} \tr \hat{\bm{w}}_{s,a}
    \right]\right\} \enspace.
\end{aligned}
\end{equation}

We now show that $\hat{\bm{v}}$ is also the fixed point of the robust Bellman operator $\hat{\mathcal{T}}$. 


Using the equality in \eqref{eq:sa-var-rob}, we now get that
\begin{align*}
  \hat{\bm{v}}
  &= \vopt \hat{\bm{v}} \\
  &= \max_{a\in\actions} \var \left[{ \tilde{\vec{p}}_{s,a} \tr \hat{\bm{w}}_{s,a} }\right]  \\
  &=\max_{a\in\actions}  \min_{\bm{p}\in\mathcal{P}^{\varlabel,\hat{\vec{v}}}_{s,a}}{  \vec{p}\tr \hat{\bm{w}}_{s,a} } \\
  &= \hat{\mathcal{T}} \hat{\bm{v}}\enspace .
\end{align*}
Therefore, $\hat{\bm{v}}$ is the unique fixed point of the SA-rectangular robust Bellman operator~\cite{iyengar2005RMDP} and $\hat{\pi}$ is greedy with respect to the optimal robust value function of $\hat{\bm{v}}$ and, therefore, is an optimal policy that solves 
\begin{equation}
    \max_{\pi\in\Pi^D} \min_{\vec{P} \in \mathcal{P}^{\varlabel, \hat{\vec{v}}}} \rho (\pi,\vec{P}) \enspace,
  \end{equation}
  where $\hat{\vec v} \in \Real^S$ is the fixed point of the $\varlabel$ Bellman operator $\vopt$, i.e., $\hat{\vec v} = (\vopt \hat{\vec v})$.
\end{proof}
\subsection{Proof of Theorem \ref{thm:var_size}}
\label{proof:var_size}
\varsize*
 For any state $s$ and action $a$, we define the 1-step Bellman update function $f_{s,a}:\real^S \to \real$ such that
\begin{align*}
    f_{s,a}(\vec v) &=  \var  \left[  \tilde{\vec p}_{s,a}\tr (\vec{r}_{s,a}+\gamma \vec v) \right],  
\end{align*}
As noted in (\ref{sec:prelims}), we assume that as $N\to\infty$, the posterior distribution of transition probabilities for any state $s$ and action $a$ ($\tilde{\vec p}_{s,a}$) converges in the limit to a multivariate Gaussian distribution with mean $\vec{p}^*_{s,a}$ and degenerate covariance matrix $\nicefrac{I(\vec{p}_{s,a}^*)}{N}$. Therefore, the 1-step returns $\tilde{\vec p}_{s,a}\tr\wsa$ is asymptotically a univariate Gaussian random variable, i.e., $\lim_{N\to\infty} \sqrt{N}( \pt\tr\wsa -{\vec{p}_{s,a}^*}\tr\wsa) \sim \mathcal{N}\left(0,{\wsa\tr I^{-1}(\vec{p}\opt_{s,a})\wsa}\right)$. Then, we can write
\begin{equation}\label{4.2.2}
    \begin{aligned}
    \lim_{N\to\infty} \sqrt{N}  \left(f_{s,a}(\vec{v}) -{\vec{p}_{s,a}^*} \tr\vec{w}_{s,a}\right ) =-\Phi^{-1}(1-\alpha)\norm{\vec{w}_{s,a}}_{I(\vec{p}_{s,a}^*)}
  \enspace,
\end{aligned}
\end{equation}
where $\Phi^{-1}$ represents the CDF inverse of a standard normal distribution.
Equation \eqref{4.2.2} follows from the analytical form of $\varlabel$ of a Gaussian random variable, i.e., for any Gaussian random variable $\tilde{Y} \sim \mathcal{N}(\mu,\sigma^2)$ with mean $\mu$, variance $\sigma^2$, and confidence level $\alpha$, $\var[\tilde{Y}]=\mu - \Phi^{-1}(1-\alpha)\sigma$.

Since $f_{s,a}$ is convex in $\vec v$ when $\tilde{\vec{p}}_{s,a}$ is Multivariate Gaussian distributed, we can use the definition of support function of a closed convex set \citep{boyd2004convex} to retrieve the unique ambiguity set $\mathcal{P}^{\varlabel}_{s,a}$.

Note that, in this case the support function is $f_{s,a}$ and $\mathcal{P}^{\var}_{s,a}$ is the unknown closed convex set
 \begin{align*}
      \forall \vec v\in\real^S: \, \min_{\vec p_{s,a} \in\mathcal{P}^{\var}_{s,a}} \vec{p}_{s,a}\tr (\vec{r}_{s,a} + \gamma \vec{v}) &= f_{s,a}(\vec v) \\
       \forall \vec v\in\real^S: \, \min_{\vec p_{s,a} \in\mathcal{P}^{\var}_{s,a}} \sqrt{N} \vec{p}_{s,a}\tr \wsa &= \sqrt{N} f_{s,a}(\vec v) \\
         \forall \vec v\in\real^S:\, \min_{\vec p_{s,a} \in\mathcal{P}^{\var}_{s,a}} \sqrt{N}(\vec{p}_{s,a}\tr \wsa - \opsa\tr\wsa) &= \sqrt{N}(f_{s,a}(\vec v) -\opsa\tr\wsa) \\
         \forall \vec{v}\in\real^S, \, \vec{z}_{s,a} \in \left(\sqrt{N}(\mathcal{P}^{\var}_{s,a} -\opsa)\right): \, \vec{z}_{s,a} \tr \wsa &\geq -\Phi^{-1}(1-\alpha) \|\wsa\|_{\ipsa} 
      \enspace,
    \end{align*}
where the last statement follows from the positive homogeneity and translation property of support functions.

Rearranging the terms in the above and setting $\vec{z}_{s,a}=\vec{p}_{s,a}-\opsa$, we get
\begin{equation}\label{ambiguity_support_cvar}
\begin{small}
\begin{aligned}   \lim_{N\to\infty} \sqrt{N} (\mathcal{P}^{\var}_{s,a} -{\vec{p}_{s,a}^*}) = \left\{\vec{p}_{s,a} - \opsa \middle|  \forall \vec v\in\real^S, \, \psa \in \Delta^S, \,  \vec{p}_{s,a}\tr\wsa \geq {\vec{p}_{s,a}^*} \tr \vec{w}_{s,a}  - \Phi^{-1}(1-\alpha) \norm{\vec{w}_{s,a}}_{\ipsa}\right\} \enspace.\\ 
\end{aligned}
\end{small}
\end{equation}

Using basic algebraic manipulations as shown in \cref{norm_ball}, we can write the asymptotic form of ambiguity set $\mathcal{P}^{\var}_{s,a}$ as an ellipsoid
\begin{align}
      \lim_{N\to\infty} \sqrt{N} (\mathcal{P}^{\var}_{s,a} -{\vec{p}_{s,a}^*}) = \left\{ \vec{p}_{s,a} \in \Delta^S \middle| \norm{{\vec{p}_{s,a}^*} - \vec{p}_{s,a} }_{I'(\vec{p}^*_{s,a})^{-1}} \leq  {\Phi^{-1}(1-\alpha)} \right\} -\opsa \label{ambiguity_norm_cvar} \enspace.
\end{align}
\begin{proposition} \label{norm_ball}
Consider the two ambiguity sets given in equations \eqref{ambiguity_support_cvar} and \eqref{ambiguity_norm_cvar}. These two representations are equivalent.
\end{proposition}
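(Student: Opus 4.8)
The plan is to recognize the claimed equivalence of \eqref{ambiguity_support_cvar} and \eqref{ambiguity_norm_cvar} as an instance of support-function duality for ellipsoids, the only genuine subtlety being the rank-deficiency of the asymptotic covariance forced by the simplex constraint. Throughout write $\rho = \Phi^{-1}(1-\alpha)$ and $\vec z = \psa - \opsa$, and let $T = \{\vec z \in \real^S \mid \one\tr \vec z = 0\}$ be the tangent space of the simplex; since membership in either set requires $\psa, \opsa \in \Delta^S$, every admissible $\vec z$ lies in $T$. First I would rewrite the condition defining \eqref{ambiguity_support_cvar} in terms of $\vec z$: because $\wsa = \vec{r}_{s,a} + \gamma \vec v$ ranges over all of $\real^S$ as $\vec v$ does (using $\gamma \in (0,1)$), the requirement ``$\psa\tr\wsa \geq \opsa\tr\wsa - \rho\,\norm{\wsa}_{\ipsa}$ for all $\vec v$'' becomes ``$\vec z\tr\vec w \geq -\rho\,\norm{\vec w}_{\ipsa}$ for all $\vec w \in \real^S$.''

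Next I would symmetrize: applying the condition to both $\vec w$ and $-\vec w$ and using $\norm{\vec w}_{\ipsa} = \norm{-\vec w}_{\ipsa}$ gives the equivalent two-sided bound $\abs{\vec z\tr \vec w} \leq \rho\,\norm{\vec w}_{\ipsa}$ for all $\vec w$. Here $\norm{\vec w}_{\ipsa}^2 = \vec w\tr\ipsa^{-1}\vec w = \vec w\tr\vec{\Sigma}\vec w$, where $\vec{\Sigma} = \ipsa^{-1}$ denotes the (degenerate) asymptotic covariance of $\sqrt{N}\,\tpsa$ introduced in \cref{sec:prelims}, whose null space is $\spn\{\one\}$ by the simplex constraint; taking $\vec w = \one$ recovers $\one\tr\vec z = 0$, so the membership $\vec z \in T$ is automatic and no information is lost. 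The crux is then the standard bipolar characterization of a closed convex set as the intersection of its supporting half-spaces \citep{boyd2004convex,rockafellar1970convex}: since $\vec w \mapsto \rho\,\norm{\vec w}_{\ipsa} = \rho\sqrt{\vec w\tr\vec{\Sigma}\vec w}$ is the support function of the ellipsoid $\{\vec z \in T \mid \vec z\tr\vec{\Sigma}^{+}\vec z \leq \rho^2\}$, with $\vec{\Sigma}^{+}$ the pseudoinverse of $\vec{\Sigma}$ (equivalently, the inverse of its restriction to $T$), the set of $\vec z$ satisfying the two-sided bound for all $\vec w$ is exactly this ellipsoid.

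It then remains to identify the metric $\vec{\Sigma}^{+}$ with $I'(\vec{p}^*_{s,a})$ on $T$, and this is the step that absorbs the degeneracy. Parametrizing $T$ by the first $S-1$ coordinates via $\vec z = \vec{J}\bar{\vec z}$ with $\bar{\vec z}\in\real^{S-1}$ and $\vec{J} = \bigl[\begin{smallmatrix}\vec{I}_{S-1}\\ -\one\tr\end{smallmatrix}\bigr]$ the injection induced by $\psa\tr\one = 1$, the full covariance factors as $\vec{\Sigma} = \vec{J}\,I(\{(\vec{p}^*_{s,a})_i\}_{i=1}^{S-1})^{-1}\vec{J}\tr$, and $\vec{J}^{+}\vec{J} = \vec{I}_{S-1}$ (with $\vec{J}^{+}$ the pseudoinverse) gives $\vec z\tr\vec{\Sigma}^{+}\vec z = \bar{\vec z}\tr I(\{(\vec{p}^*_{s,a})_i\}_{i=1}^{S-1})\,\bar{\vec z} = \vec z\tr I'(\vec{p}^*_{s,a})\vec z$ for every $\vec z \in T$. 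Hence the ellipsoid equals $\{\psa\in\Delta^S \mid \norm{\opsa - \psa}_{I'(\vec{p}^*_{s,a})^{-1}} \leq \rho\} - \opsa$, which is precisely \eqref{ambiguity_norm_cvar}. The main obstacle is exactly this last identification: since $\vec{\Sigma} = \ipsa^{-1}$ and $I'(\vec{p}^*_{s,a})$ are both singular and even have different null spaces ($\spn\{\one\}$ versus $\spn\{\vec{e}_S\}$), the two quadratic forms coincide only after restricting to $T$, so care is needed to ensure the support-function duality respects this restriction rather than comparing the matrices globally.
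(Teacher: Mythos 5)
Your proposal is correct, but it takes a genuinely different route from the paper's own proof. Both arguments start identically, reducing membership in \eqref{ambiguity_support_cvar} to the two-sided bound $\abs{\vec z\tr\vec w}\le\Phi^{-1}(1-\alpha)\norm{\vec w}_{\ipsa}$ for all $\vec w\in\real^S$ (your explicit $\vec w\mapsto-\vec w$ symmetrization is in fact a cleaner justification than the paper's ``multiply by $-1$ and square both sides,'' which is only legitimate because the inequality is quantified over all $\vec w$). From there the paper works purely with matrix algebra: it recasts the quantified inequality as positive semidefiniteness of $(\Phi^{-1}(1-\alpha))^2\,\ipsa^{-1}-\vec z\vec z\tr$, passes to the leading $(S-1)\times(S-1)$ principal submatrix, where $\iqsa$ is invertible (principal submatrices of PSD matrices are PSD), and extracts $\norm{\vec q_{s,a}-\oqsa}_{\iqsa^{-1}}\le\Phi^{-1}(1-\alpha)$ via rank-one manipulations, with the converse direction only asserted as ``the reverse of this proof.'' You instead invoke support-function (bipolar) duality: $\vec w\mapsto\Phi^{-1}(1-\alpha)\sqrt{\vec w\tr\vec\Sigma\vec w}$ is the support function of the degenerate ellipsoid $\{\vec z\in T \mid \vec z\tr\vec\Sigma^{+}\vec z\le(\Phi^{-1}(1-\alpha))^2\}$, and you then identify $\vec\Sigma^{+}$ with $I'(\vec{p}^*_{s,a})$ on the tangent space $T$ through the injection $\vec J$. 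Your approach buys the set equality in both directions at once and makes the degeneracy fully explicit --- your observation that $\vec\Sigma$ and $I'(\vec{p}^*_{s,a})$ have different null spaces, so the quadratic forms agree only after restriction to $T$, is exactly the right caution and is left implicit in the paper; the paper's approach buys elementarity, needing no pseudoinverses or convex-analytic machinery. One minor gloss on your side: the identity $\vec z\tr\vec\Sigma^{+}\vec z=\bar{\vec z}\tr I(\{(\vec{p}^*_{s,a})_i\}_{i=1}^{S-1})\,\bar{\vec z}$ needs slightly more than $\vec J^{+}\vec J=\vec I_{S-1}$ --- e.g., the explicit formula $\bigl(\vec J\bar{\vec\Sigma}\vec J\tr\bigr)^{+}=\vec J(\vec J\tr\vec J)^{-1}\bar{\vec\Sigma}^{-1}(\vec J\tr\vec J)^{-1}\vec J\tr$, or solving $\vec\Sigma\vec u=\vec z$ within $T$ --- but the claim is true and the conclusion stands.
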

\begin{proof}
For any state $s$ and action $a$, as $N\to\infty$, $(\vec{p}_{s,a} - \opsa) \in \sqrt{N}(\mathcal{P}_{s,a}^{\var}-\opsa)$ satisfies
\begin{equation}\label{eq:4.2.1}
        \begin{aligned}
     \vec{p}_{s,a}\tr \vec{w}_{s,a} &\geq \opsa\tr  \vec{w}_{s,a} -  \Phi^{-1}(1-\alpha) \norm{ \vec{w}_{s,a}}_{\ipsa} \\
     \vec{p}_{s,a}\tr  \vec{w}_{s,a} - \opsa\tr   \vec{w}_{s,a}  &\geq -\Phi^{-1}(1-\alpha) \norm{ \vec{w}_{s,a}}_{\ipsa}  \\
     \vec{w}_{s,a} \tr  (\vec{p}_{s,a}-\opsa)(\vec{p}_{s,a}-\opsa)\tr  \vec{w}_{s,a} & \leq  (\Phi^{-1}(1-\alpha))^2    \vec{w}_{s,a} \tr  {\ipsa^{-1}}  \vec{w}_{s,a} \enspace. 
\end{aligned}
\end{equation}

The first and second equations follow from the definition of $\mathcal{P}^{\varlabel}_{s,a}$ and simple algebraic manipulations.
The third equation follows by multiplying by $-1$ on both sides and squaring both sides.

Recall that $\tilde{\vec p}_{s,a}$ for any state $s$ and action $a$ is a categorical random variable. We assumed that the prior is a Dirichlet distribution and therefore, by the property of conjugate prior, the posterior distribution of $\tilde{\vec p}_{s,a}$ is also a Dirichlet distribution. 
We assumed in \cref{sec:prelims} that as $N\to\infty$, $\tilde{\vec p}_{s,a}$ is Gaussian-distributed with mean $\vec{p}^*_{s,a}$ and degenerate covariance matrix $\nicefrac{\ipsa^{-1}}{N}$, i.e., covariance matrix $\nicefrac{\ipsa^{-1}}{N}$ has $S-1$ independent rows and $S-1$ independent columns.

For any state $s$ and action $a$ and transition probabilities $\tilde{\vec p}_{s,a}$, let $\tilde{\vec{q}}_{s,a}\in\real^{S-1}$ represent the first $S-1$ elements of the random variable ${\tilde{\vec{p}}_{s,a}}$, i.e., $\tilde{\vec{q}}_{s,a}={\{(\tilde{\vec{ p}}}_{s,a})_i\}_{i=1}^{S-1}$. 
Then, $\tilde{\vec q}_{s,a}$ is Gaussian-distributed with the mean ${\vec{q}^*_{s,a}}=\{(\vec{p}^*_{s,a})_i\}_{i=1}^{S-1}$ and invertible covariance matrix $\nicefrac{{\iqsa}^{-1}}{N}=\nicefrac{I({\{({\vec{ p}^*}}_{s,a})_i\}_{i=1}^{S-1})^{-1}}{N}$. 

Thus, if we sample $\vec{q}_{s,a}$ from $\mathcal{N}\left({\vec{q}^*_{s,a}},\nicefrac{\iqsa^{-1}}{N}\right)$ and $\vec{q}_{s,a} \succeq \vec{0}$ and $\sum_{i=1}^{S-1}(\vec{q}_{s,a})_i \leq 1$, then we can easily compute the corresponding transition model as 
\begin{equation}  
\begin{aligned}
    {\{({\vec{ q}}}_{s,a})_i\}_{i=1}^{S-1} &= {\{({\vec{ p}}}_{s,a})_i\}_{i=1}^{S-1} \\
    ({\vec{q}_{s,a}})_{S} &=1-\sum_{i=1}^{S-1}({\vec{p}_{s,a}})_{i} \enspace.
\end{aligned}
\end{equation}

Equipped with the above notation, we can proceed to prove the result.

Using the definition of positive semi-definite matrices in \cref{eq:4.2.1}, we can write
\begin{equation}
      \left((\Phi^{-1}(1-\alpha))^2 {\ipsa^{-1}}-(\vec{p}_{s,a}-\opsa)(\vec{p}_{s,a}-\opsa)\tr\right) \succeq 0 \enspace.
\end{equation}
Sub-matrices of positive-semidefinite matrices are also positive-semidefinite. Thus, we can write,


\begin{align*}
    \left((\Phi^{-1}(1-\alpha))^2 {\iqsa^{-1}}-(\vec{q}_{s,a}-\oqsa)(\vec{q}_{s,a}-\oqsa)\tr\right) & \relname{a}\succeq 0 \\
       \left(\iqsa \right)  \left((\Phi^{-1}(1-\alpha))^2 {\iqsa^{-1}}-(\vec{q}_{s,a}-\oqsa)(\vec{q}_{s,a}-\oqsa)\tr\right) \left(  {\iqsa}\right)\tr &\relname{b}\succeq 0 \\
     \left( (\Phi^{-1}(1-\alpha))^2  \iqsa  {\iqsa^{-1}}-(\vec{q}_{s,a}-\oqsa)(\vec{q}_{s,a}-\oqsa)\tr\right) {\iqsa}\tr & \relname{c} \succeq 0  \\
    \left((\vec{q}_{s,a}-\oqsa)\tr \iqsa(\vec{q}_{s,a}-\oqsa\right)  \left( (\Phi^{-1}(1-\alpha))^2  \vec{I} \right. \\  \left. \null -(\vec{q}_{s,a}-\oqsa)(\vec{q}_{s,a}-\oqsa)\tr\right) {\iqsa}\tr\left((\vec{q}_{s,a}-\oqsa)\tr \iqsa ((\vec{q}_{s,a}-\oqsa) \right)\tr & \relname{d} \succeq 0 \\
     (\Phi^{-1}(1-\alpha))^2 (\vec{q}_{s,a}-\oqsa)\tr \iqsa (\vec{q}_{s,a}-\oqsa) - {((\vec{q}_{s,a}-\oqsa)\tr \iqsa(\vec{q}_{s,a}-\oqsa))}^{2} &\relname{e} \succeq 0 \enspace.
\end{align*}
$(a)$ holds because for any $\vec{A}\in\real^{n\times n}$ if $\vec{x}\tr\vec{A}\vec{x} \succeq 0 \, \forall
\vec{x}\in\real^n$, then $\vec{A}\succeq 0$, $(b)$ follows from $\vec{U}\tr\vec{M}\vec{U} \succeq 0 \, \forall \,  \vec{U}, \vec{M} \succeq 0$, $(c)$ follows from simple algebraic manipulations, $(d)$ follows from $(\vec{q}_{s,a}-\oqsa)\tr \iqsa (\vec{q}_{s,a}-\oqsa) \succeq 0$ because $\iqsa^{-1}\succeq 0$, and $(e)$ follows from simply rearranging all the terms in the above equation.

Therefore, we get
\begin{align*}   
     (\vec{q}_{s,a}-\oqsa)\tr \ipsa (\vec{q}_{s,a}-\oqsa) &\leq (\Phi^{-1}(1-\alpha))^2
   \implies  \norm{\vec{q}_{s,a}-\oqsa}_{{\iqsa^{-1}}}  \leq \Phi^{-1}(1-\alpha)\enspace.
\end{align*}

Thus, it follows from construction that
$\norm{\vec{p}_{s,a}-\opsa}_{{I'(\vec{p}^*_{s,a}})^{-1}}  \leq \Phi^{-1}(1-\alpha)$.

The proof for the other direction is simply the reverse of this proof and therefore we omit it.
\end{proof}

\subsection{Proof of Theorem \ref{thm:bayesian_size}}\label{proof:bayesian_size}
\bayesian*
\begin{proof}

For any state $s$ and action $a$ and transition probabilities $\tilde{\vec p}_{s,a}$, let $\tilde{\vec{q}}_{s,a}\in\real^{S-1}$ represent the first $S-1$ elements of the random variable ${\tilde{\vec{p}}_{s,a}}$, i.e., $\tilde{\vec{q}}_{s,a}={\{(\tilde{\vec{ p}}}_{s,a})_i\}_{i=1}^{S-1}$. 
Then, $\tilde{\vec q}_{s,a}$ is Gaussian-distributed with the mean ${\vec{q}^*_{s,a}}=\{(\vec{p}^*_{s,a})_i\}_{i=1}^{S-1}$ and invertible covariance matrix $\nicefrac{{\iqsa}^{-1}}{N}=\nicefrac{I({\{({\vec{ p}^*}}_{s,a})_i\}_{i=1}^{S-1})^{-1}}{N}$. 

Thus, if we sample $\vec{q}_{s,a}$ from $\mathcal{N}\left({\vec{q}^*_{s,a}},\nicefrac{\iqsa^{-1}}{N}\right)$ and $\vec{q}_{s,a} \succeq \vec{0}$ and $\sum_{i=1}^{S-1}(\vec{q}_{s,a})_i \leq 1$, then we can easily compute the corresponding transition model as 
\begin{equation}  
\begin{aligned}
    {\{({\vec{ q}}}_{s,a})_i\}_{i=1}^{S-1} &= {\{({\vec{ p}}}_{s,a})_i\}_{i=1}^{S-1} \\
    ({\vec{q}_{s,a}})_{S} &=1-\sum_{i=1}^{S-1}({\vec{p}_{s,a}})_{i} \enspace.
\end{aligned}
\end{equation}

Equipped with the above notation, we will now prove this theorem by contradiction.

For any state $s$ and action $a$, suppose that $\lim_{N\to\infty} \sqrt{N}(\mathcal{P}_{s,a}^{\bcr}-\opsa) \subseteq \lim_{N\to\infty} \sqrt{N} \xi(\mathcal{P}^{\var}_{s,a} - \opsa)$.

Since $\mathcal{P}_{s,a}^{\bcr}$ is a credible region, it satisfies that
    \begin{align*}
        1-\alpha &\relname{a}\leq \Pr\left[(\tpsa -\opsa) \in \mathcal{P}^{\bcr}_{s,a} -\opsa \right] 
        \\ &\relname{b}= \Pr\left[\lim_{N\to\infty} \sqrt{N}(\tpsa -\opsa) \in \lim_{N\to\infty} \sqrt{N}(\mathcal{P}^{\bcr}_{s,a} -\opsa) \right] \\
        & \relname{c}=\Pr\left[\lim_{N\to\infty} \sqrt{N}(\tpsa - \opsa) \in  \lim_{N\to\infty} \sqrt{N}  \xi(\mathcal{P}^{\var}_{s,a} - \opsa)\right] \\
         & \relname{d}= \Pr\left[\lim_{N\to\infty} \sqrt{N}(\tpsa-\opsa)\tr {I'(\vec{p}^*_{s,a}}) \sqrt{{I'(\vec{p}^*_{s,a}})^{-1} N} (\tpsa- \opsa) \leq {\xi^2 (\Phi^{-1}(1-\alpha))^2}\right]\enspace.
    \end{align*}
$(a)$ follows from the definition of Bayesian credible regions, $(b)$ follows from multiplying by $\sqrt{N}$ on both sides and taking limit $N\to\infty$, $(c)$ follows from the assumption $\lim_{N\to\infty} \sqrt{N}(\mathcal{P}_{s,a}^{\bcr}-\opsa) \subseteq \lim_{N\to\infty} \sqrt{N} \xi(\mathcal{P}^{\var}_{s,a} - \opsa)$, and $(d)$ follows from applying results in \eqref{ambiguity_norm_cvar} and squaring both sides.

Since $\xi < \frac{\sqrt{\chi^2_{S-1,1-\alpha}}}{\Phi^{-1}(1-\alpha)}$, there exists $\beta > 0$, such that $\xi^2 (\Phi^{-1}(1-\alpha))^2 \leq \chi^2_{S-1,1-\beta-\alpha} - \beta$. Fix such $\beta$. Then we can write
    \begin{align*}
       1-\alpha &\leq \Pr\left[\lim_{N\to\infty}\sqrt{N}(\tpsa-\opsa)\tr {I'(\vec{p}^*_{s,a}}) \sqrt{N} (\tpsa- \opsa)  \leq {\xi^2( \Phi^{-1}(1-\alpha))^2}\right] \\ & \relname{a}
      \leq \Pr\left[\lim_{N\to\infty}\sqrt{N}(\tpsa-\opsa)\tr {I'(\vec{p}^*_{s,a}}) \sqrt{N} (\tpsa- \opsa) \leq {(\chi^2_{S-1,1-\beta-\alpha} - \beta)}\right] \\
        & \relname{b} \leq \Pr\left[\lim_{N\to\infty}\|\sqrt{N}(\tpsa - \opsa)\|^2_{{I'(\vec{p}^*_{s,a}})^{-1}} \leq {(\chi^2_{S-1,1-\beta-\alpha} - \beta)}\right] \\
         &  \relname{c}\leq \Pr\left[\lim_{N\to\infty}\|\sqrt{N}(\tpsa - \opsa)\|^2_{{I'(\vec{p}^*_{s,a}})^{-1}} \leq {\chi^2_{S-1,1-\beta-\alpha}}\right] \\
        &  \relname{d} \leq 1-\beta - \alpha < 1-\alpha \quad \text{Contradiction!} \enspace.
    \end{align*}
$(a)$ follows from choosing $\beta >0 $ such that $\xi^2 (\Phi^{-1}(1-\alpha))^2 \leq \chi^2_{S-1,1-\beta-\alpha} - \beta$, $(b)$ follows from simple algebraic manipulations, $c$ follows because of the monotonic property of CDF, and $(d)$ follows because $\lim_{N\to\infty}\sqrt{N}(\tpsa - \opsa) \sim \mathcal{N}(\vec{0},I(\vec{p}^*_{s,a}))$, from construction, $\lim_{N\to\infty}\|\sqrt{N}(\tpsa - \opsa)\|^2_{{I'(\vec{p}^*_{s,a}})^{-1}}= \lim_{N\to\infty}\|\sqrt{N}(\tqsa - \oqsa)\|^2_{\iqsa^{-1}}$ and, $\lim_{N\to\infty}\|\sqrt{N}(\tqsa - \oqsa)\|^2_{\iqsa^{-1}}$ is a sum of squares of $S-1$ normal random variables, which in turn is a standard Chi-squared random variable with degrees of freedom $S-1$. Therefore, the probability in $(d)$ can be at most $1-\beta-\alpha$.

Therefore, $\xi \geq \frac{\sqrt{\chi^2_{S-1,1-\alpha}}}{\Phi^{-1}(1-\alpha)}$.

\end{proof}
\section{Experiments}\label{app:experiments}
\begin{algorithm} 
\small
\SetAlgoLined
\LinesNumbered
\DontPrintSemicolon
\SetInd{.52em}{.58em}
\KwIn{Confidence $\alpha \in (0,\nicefrac{1}{2})$,  Posterior distribution $f$, Value approximation error $\varepsilon \geq 0$}
\KwOut{Robust policy $\pi_k$, Lower bound $\vec{u}_k$}
Initialize robust value-function $\vec{u}_0 \in \real^S,k=0$  

Sample $M$ models ${\tilde{\vec{P}}(\omega_1),\tilde{\vec{P}}(\omega_2),\dots, \tilde{\vec{P}}(\omega_M)}$ from posterior $f$

 \Repeat{$\norm{\vec{u}_{k} - \vec{u}_{k-1}}_\infty \le \nicefrac{\varepsilon (1-\gamma)}{\gamma}$ }{
    \For{$s\gets 1$ \KwTo $S$}{
        \For{$a\gets 1$ \KwTo $A$}{
          $\vec{w}_{s,a} \gets \vec{r}_{s,a} + \gamma \vec{u}_{k}$ \tcp*{1-step return from $(s,a)$}

                \small$\vec{q}_a \gets \pn\tr \vec{w}_{s,a} -\Phi^{-1}\!(1\!-\!\alpha)\sqrt{\!\vec{w}_{s,a}\tr\sigman \vec{w}_{s,a}}$   \tcp*{\small$\pn , \sigman$ are mean and covariance of $f$ at $(s,a)$}
           
        }
        }
          $k \gets k+1$
        
        $\vec{u}_k(s) \gets \max_{a\in\actions}(\vec{q}_a)$, \quad
         $\pi_k(s) \gets \argmax_{a\in\actions}(\vec{q}_a)$ \tcp*{$\var$ Bellman optimality update}

 }

 \Return{$\pi_{k},\vec{u}_{k}$}
 \caption{VaR Value Iteration Algorithm for Gaussian case} \label{algo:var_robust_gauss}
\end{algorithm}

\begin{table*}[ht]
\centering
\begin{small}
    \begin{tabularx}{\textwidth}{p{2.4cm}p{2.2cm}p{2.3cm}p{2.6cm}p{2.6cm}}
 \toprule
 Methods &  Riverswim & Inventory & Population-Small  & Population \\
 \hline
VaR & 83.51 $\pm$ 11.76 & 474.98 $\pm$ 0.7 & \textbf{-1877.66 $\pm$ 104.64} & \textbf{-3338.26 $\pm$ 213.44}\\
VaRN & 92.32 $\pm$ 11.76 & 472.49 $\pm$ 2.24 & -2271.13 $\pm$ 165.98 & -3140.68 $\pm$ 122.72\\
BCR $l_1$ & 82.04 $\pm$ 8.82 & 377.73 $\pm$ 0.0 & -3731.29 $\pm$ 122.4 & -4363.46 $\pm$ 240.62\\
BCR $l_{\infty}$ & 80.57 $\pm$ 0.0 & 199.82 $\pm$ 39.5 & -6668.47 $\pm$ 43.1 & -8118.68 $\pm$ 327.74\\
WBCR $l_1$ & 82.04 $\pm$ 8.82 & 470.39 $\pm$ 5.82 & -3286.26 $\pm$ 1115.22 & -4257.33 $\pm$ 194.4\\
WBCR $l_{\infty}$ & 80.57 $\pm$ 0.0 & 199.82 $\pm$ 39.5 & -6395.77 $\pm$ 88.1 & -7547.52 $\pm$ 160.76\\
Soft-Robust & 115.33 $\pm$ 1.16 & 477.81 $\pm$ 0.0 & -2052.83 $\pm$ 78.2 & -3869.46 $\pm$ 124.72\\
Naive Hoeffding & 52.29 $\pm$ 9.18 & -0.0 $\pm$ 0.0 & -7778.21 $\pm$ 89.36 & -8496.47 $\pm$ 205.54\\
Opt Hoeffding & 51.15 $\pm$ 6.88 & -0.0 $\pm$ 0.0 & -7723.6 $\pm$ 4.82 & -8583.83 $\pm$ 16.06\\
 \bottomrule
\end{tabularx}
\caption{\label{table:summary2} shows the  $95\%$ confidence interval of the robust (percentile) returns achieved by \emph{VaR}, \emph{VaRN}, \emph{BCR} $\ell_1$, \emph{BCR} $\ell_\infty$, \emph{WBCR} $\ell_1$, \emph{WBCR}, \emph{Soft Robust}, \emph{Naive Hoeffding} and \emph{Opt Hoeffding} agents at $\delta=0.15$ in Riverswim, Inventory, Population-Small, and Population domain.
}
\end{small}
\end{table*}

\begin{table*}[ht]
\centering
\begin{small}
    \begin{tabularx}{\textwidth}{p{2.4cm}p{2.2cm}p{2.8cm}p{2.4cm}p{2.4cm}}
 \toprule
 Methods &  Riverswim & Inventory & Population-Small  & Population \\
 \hline
VaR & 100.27 $\pm$ 17.24 & 483.08 $\pm$ 0.4 & \textbf{-1117.57 $\pm$ 240.02} & -1850.26 $\pm$ 191.08\\
BCR $l_1$ & 108.9 $\pm$ 21.12 & 391.39 $\pm$ 34.28 & -2578.25 $\pm$ 104.04 & -3058.77 $\pm$ 972.58\\
BCR $l_{\infty}$ & 95.96 $\pm$ 0.0 & 254.43 $\pm$ 44.82 & -5437.67 $\pm$ 46.26 & -6428.03 $\pm$ 58.36\\
WBCR $l_1$ & 108.9 $\pm$ 21.12 & 481.07 $\pm$ 4.58 & -2251.96 $\pm$ 685.08 & -2492.59 $\pm$ 228.48\\
WBCR $l_{\infty}$ & 95.96 $\pm$ 0.0 & 239.76 $\pm$ 0.0 & -5133.85 $\pm$ 85.44 & -5944.77 $\pm$ 192.38\\
VaRN & 123.78 $\pm$ 15.34 & 482.92 $\pm$ 1.18 & -1514.44 $\pm$ 24.62 & -1785.27 $\pm$ 125.52\\
Soft-Robust & 165.15 $\pm$ 0.54 & 484.53 $\pm$ 0.0 & -692.08 $\pm$ 55.04 & -1565.09 $\pm$ 76.24\\
Naive Hoeffding & 53.31 $\pm$ 13.26 & -0.0 $\pm$ 0.0 & -7326.64 $\pm$ 77.52 & -7560.94 $\pm$ 295.96\\
Opt Hoeffding & 51.66 $\pm$ 9.94 & -0.0 $\pm$ 0.0 & -7020.42 $\pm$ 4.74 & -7457.76 $\pm$ 12.44\\
 \bottomrule
\end{tabularx}
\caption{\label{table:summary3} shows the $95\%$ confidence interval of the robust (percentile) returns achieved by \emph{VaR}, \emph{VaRN}, \emph{BCR} $\ell_1$, \emph{BCR} $\ell_\infty$, \emph{WBCR} $\ell_1$, \emph{WBCR}, \emph{Soft Robust}, \emph{Naive Hoeffding} and \emph{Opt Hoeffding} agents at $\delta=0.30$ in Riverswim, Inventory, Population-Small, and Population domain. Bolded text indicates the instances in which the $\varlabel$ framework outperforms the other baselines in terms of the mean robust performance.
}
\end{small}
\end{table*}

\begin{figure*}[h!]
  \begin{subfigure}[b]{0.33\textwidth}    \includegraphics[width=\linewidth]{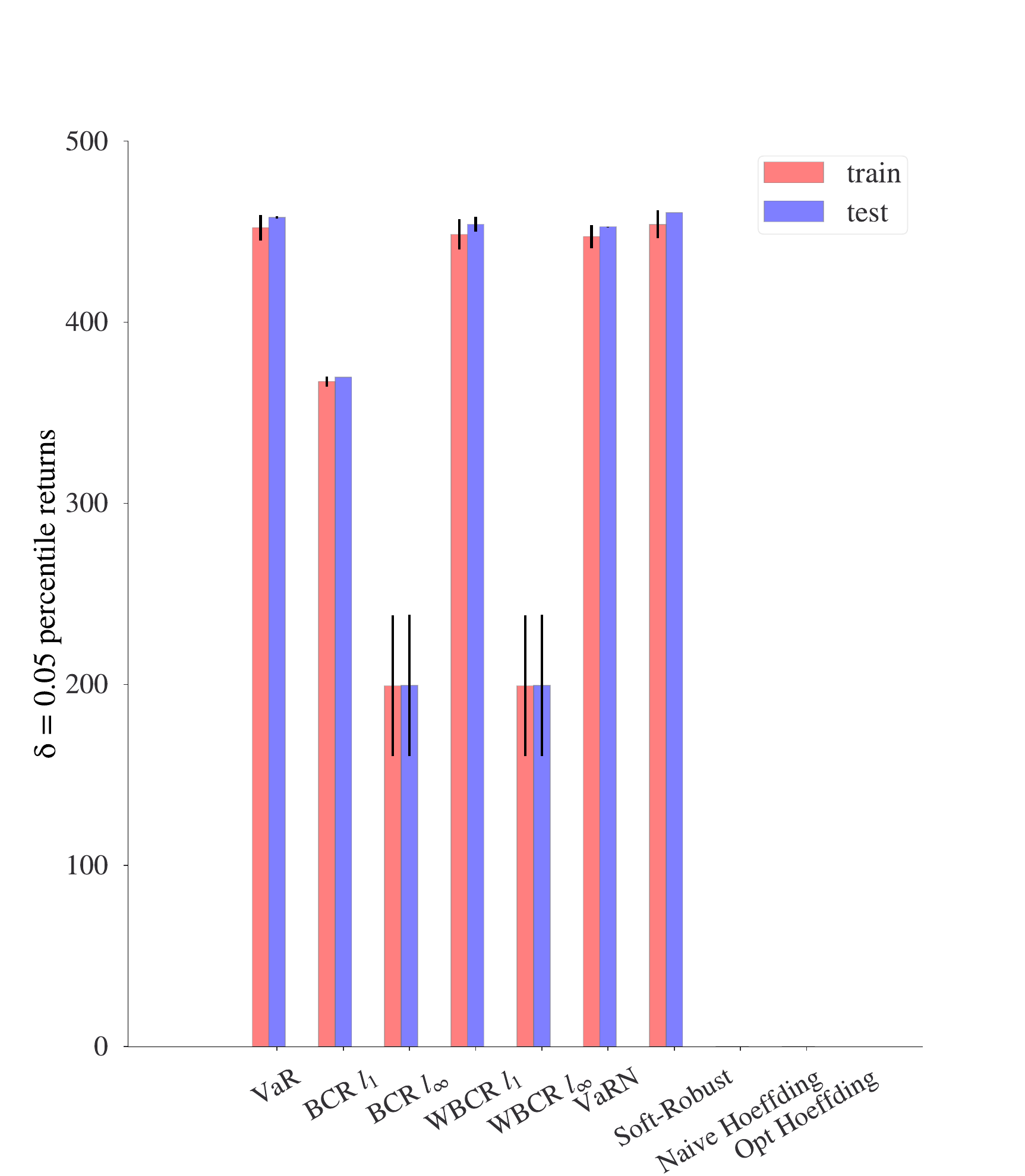}
    \caption{Inventory Domain}
    \label{fig:41}
  \end{subfigure}
  \begin{subfigure}[b]{0.33\textwidth}
\includegraphics[width=\linewidth]{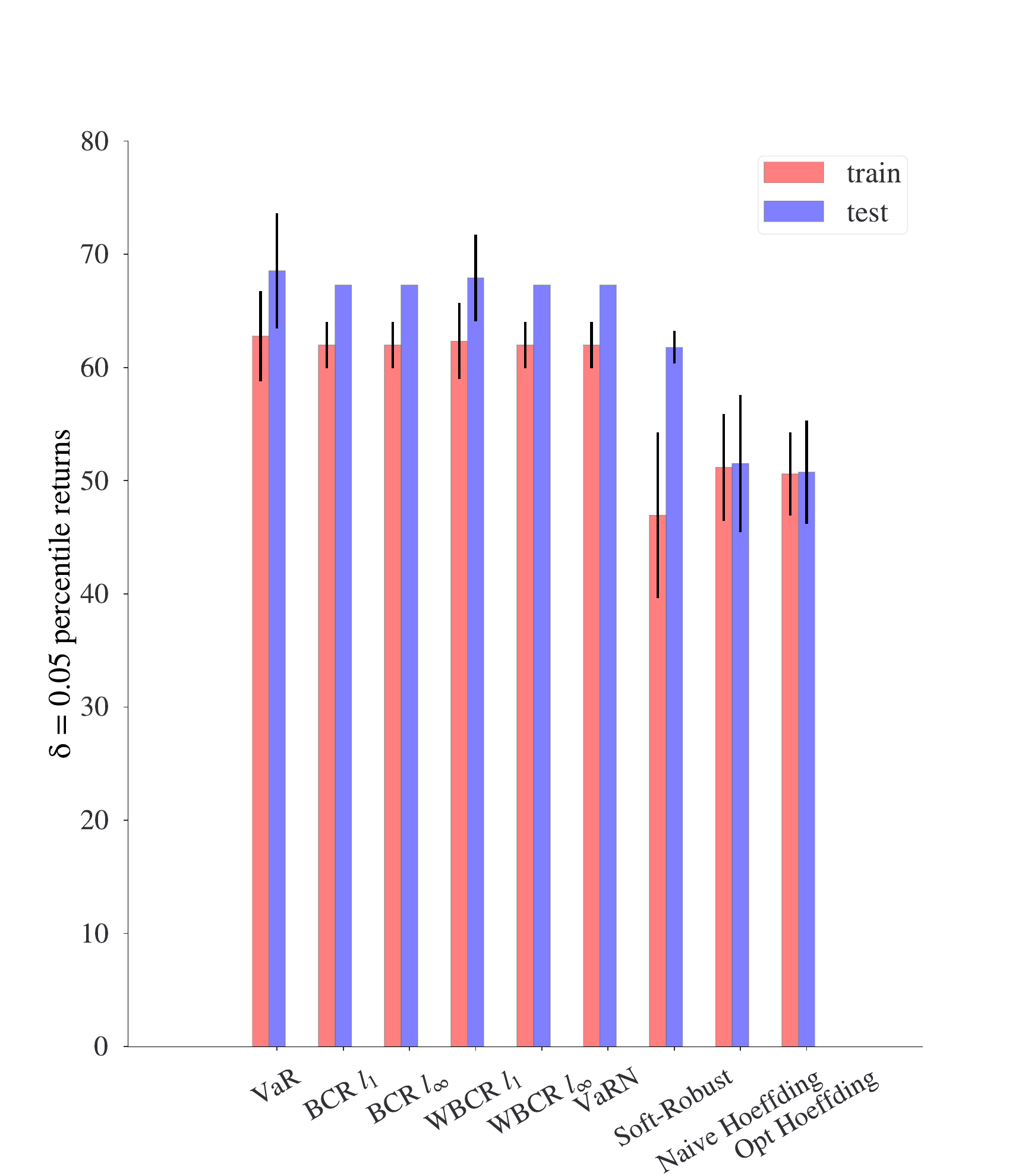}
    \caption{Riverswim Domain}
    \label{fig:44}
  \end{subfigure}
  \begin{subfigure}[b]{0.33\textwidth}
\includegraphics[width=\linewidth]{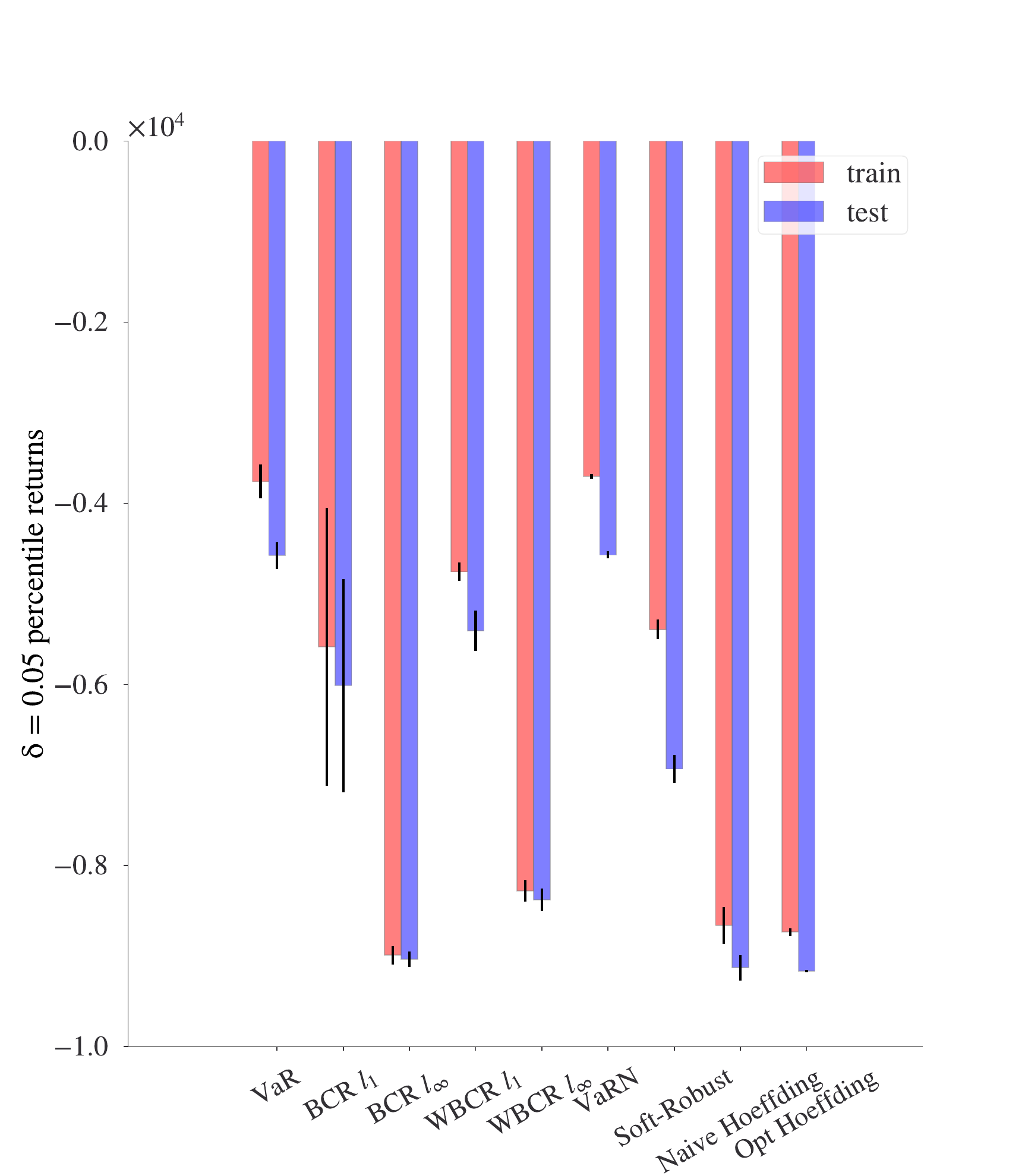}
    \caption{Population Domain}
    \label{fig:42}
  \end{subfigure}
   \begin{subfigure}[b]{0.33\textwidth}
\includegraphics[width=\linewidth]{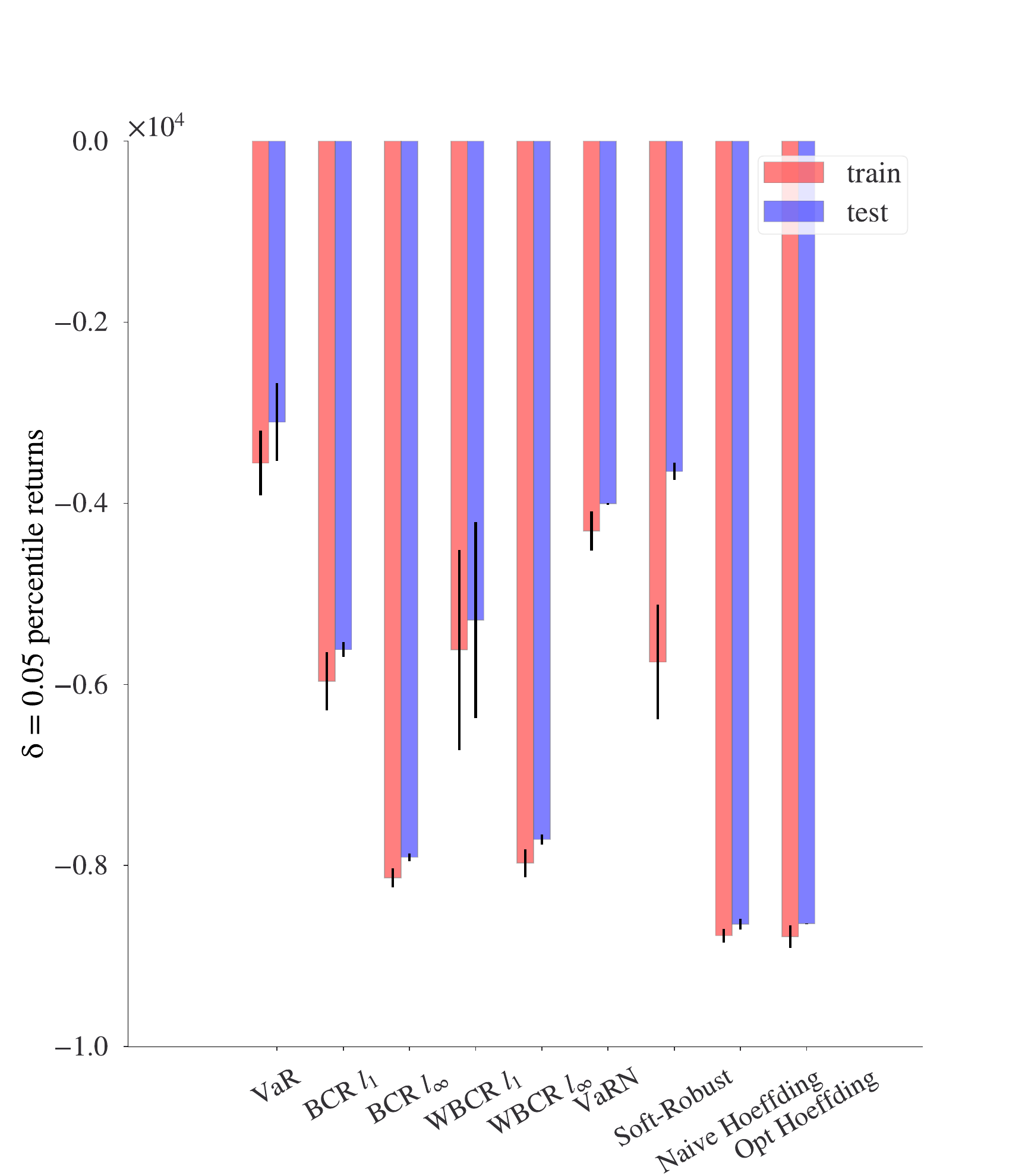}
    \caption{Population-Small Domain}
    \label{fig:43}
  \end{subfigure}
\caption{\label{fig:percentile_bars} Comparison of test and train robust returns achieved by \emph{VaR}, \emph{VaRN}, \emph{BCR} $\ell_1$, \emph{BCR} $\ell_\infty$, \emph{WBCR} $\ell_1$, \emph{WBCR}, \emph{Soft Robust}, \emph{Naive Hoeffding} and \emph{Opt Hoeffding} agents at confidence level $\delta=0.05$ in Riverswim, Inventory, Population-Small and Population domain. 
  $\varlabel$ framework achieves the highest mean robust returns in most of the domains on test and train datasets.
  }
\end{figure*}

\section{Scalability of the VaR Framework}\label{app:scalability}

We define the $\var$ q-value function (Q-function) for any policy $\pi\in\Pi_D$ as $\vec{q}^{\pi}:S\times A\to \real$ such that for any state $s$ and action $a$, $\vec{q}^{\pi}(s,a) = \var[\psa \tr(\vec{r}_{s,a} + \gamma \vec{v}^{\pi})]$, where $\vec{v}^{\pi}=\vop \vec{v}^{\pi}$ is the fixed point of the $\var$ Bellman evaluation operator for policy $\pi$. We will use $\hat{\vec q}$ to denote the q-value function corresponding to the optimal $\var$ policy $\hat{\pi}$, i.e., $\hat{\pi}(s) = \argmax_{a\in\actions}\hat{\vec q}(s,a)$.

Let $\hat{\vec q}_{\vec \theta}$ and $\hat{\vec q}_{\bar{\vec \theta}}$ denote the parameterized Q-value and corresponding target Q-value networks with parameters $\vec \theta \in \Theta$ and $\bar{\vec\theta} \in \Theta$ respectively.
Here $\hat{\vec q}_{\vec \theta}$ and $\hat{\vec q}_{\bar{\vec \theta}}$ may represent any function approximators with parameter space $\Theta$.

The optimal Q-value network ($\hat{q}_{\vec{\theta}^*}$) can be learned by simply minimizing the empirical $\var$ Bellman residual error $J_{\var}({\vec \theta})$, i.e.,
\begin{equation}
    \begin{aligned}
         \hat{q}_{\vec{\theta}^*} &= \argmin_{\theta\in \Theta} J_{\var}({\vec \theta}) \\ &= \argmin_{\vec \theta \in \Theta} \mathbb{E}_{(s,a)\sim \mathcal{D}}\left[\left(q_{\vec \theta}(s,a) - \max_{a'\in\mathcal{A}}\widehat{\var}[ \mathbb{E}_{s'\sim \tilde{\vec{p}}_{s,a}}\left[\vec{r}_{s,a} + \gamma \bar{\vec q}_{\vec \theta}(s',a')\right]  \right)^2\right]\enspace,
    \end{aligned}
\end{equation}
where data $\mathcal{D}$ is a list state-action (s, a) tuples that is collected using the current policy on the mean model $\bar{\vec P}$. 
In this case, the target Q-value network parameters ${\bar{\vec \theta}}$ is periodically updated by overwriting the target q-value network parameter with the Q-value network parameters $(\vec\theta)$.

We can also using any of the existing Actor-Critic methods~\cite{haarnoja18bSA,schulman2017ppo} to learn the optimal $\var$ policy.
In this case, instead of optimizing the policy to minimize the Bellman residual error, the algorithm will have to optimize the policy to minimize the $\var$ Bellman residual error $J_{\var}({\vec \theta})$.

\section{Implementation Details}\label{app:implementation}
\begin{tabular}{ |p{5cm}||p{5cm}|  }
 \hline
 \multicolumn{2}{|c|}{Hyperparameters for Riverswim Domain} \\

 \hline
 Number of train models per dataset (M) & 80 \\
  Number of test models (K) & 700 \\
   Number of train datasets (L) & 10 \\
 \hline
\end{tabular}

\begin{tabular}{ |p{5cm}||p{5cm}|  }
 \hline
 \multicolumn{2}{|c|}{Hyperparameters for Inventory Domain} \\

 \hline
 Number of train models per dataset (M) & 80 \\
  Number of test models (K) & 200 \\
   Number of train datasets (L) & 10 \\
 \hline
\end{tabular}

\begin{tabular}{ |p{5cm}||p{5cm}|  }
 \hline
 \multicolumn{2}{|c|}{Hyperparameters for Population-Small Domain} \\

 \hline
 Number of train models per dataset (M) & 80 \\
 Number of test models (K) & 100 \\
 Number of train datasets (L) & 10 \\
 \hline
\end{tabular}

\begin{tabular}{ |p{5cm}||p{5cm}|  }
 \hline
 \multicolumn{2}{|c|}{Hyperparameters for Population Domain} \\

 \hline
 Number of train models per dataset (M) & 800 \\
 Number of test models (K) & 1000 \\
  Number of train datasets (L) & 9\\
 \hline
\end{tabular}

\subsection{Code}
The code and datasets areis made available at \href{https://github.com/elitalobo/VaRFramework.git}{https://github.com/elitalobo/VaRFramework.git}.

\subsection{Machine Specifications}
We concurrently ran all experiments using 120 threads on a CPU swarm cluster with 2GB memory per thread. The total computational time was $\sim$3 hours.
\end{document}